\tikzset{
	->, 
	every state/.style={thick, fill=gray!10}, 
	initial text=$ $, 
}
\pgfplotsset{width=10cm,compat=1.9}
\newif\if@restonecol
\newtheorem{lemma}{Lemma}
\newtheorem{theorem}{Theorem}
\newtheorem*{theorem*}{Theorem}
\newtheorem{corollary}{Corollary}
\newtheorem{proposition}{Proposition}
\newtheorem{remark}{Remark}
\newcommand{\Rmnum}[1]{\expandafter\@slowromancap\romannumeral #1@}
\DeclareMathOperator*{\argmax}{arg\,max}
\newcommand{\R}{\mathbb{R}}
\newcommand{\norm}[1]{\left\lVert#1\right\rVert}
\newcommand{\vecdot}[2]{\left<#1, #2\right>}
\newcommand{\expct}[1]{\mathbb{E}\left[#1\right]}
\newcommand{\ind}[1]{\mathds{1}\left[#1\right]}
\newcommand{\attack}{{\mathcal A}}
\newcommand{\defense}{{\mathcal D}}
\newcommand{\epsA}{{\epsilon_{\dagger}}}
\newcommand{\epsD}{{\epsilon_{\mathcal D}}}
\newcommand{\defensepi}{{\pi_{\mathcal D}}}
\newcommand{\influence}{{\Delta}}
\newcommand{\tightset}{{\Theta^{\epsilon}}}
\newcommand{\tightsetprime}{{\Theta^{\epsilon'}}}
\newcommand{\tightsettarget}{{\Theta^{\epsilon_{\dagger}}}}
\newcommand{\tightsetstate}{{\Theta^{\epsilon}_s}}
\newcommand{\tightsetstatetarget}{{\Theta^{\epsA}_s}}
\newcommand{\targetpi}{\ensuremath{{\pi_{\dagger}}}}
\newcommand{\neighbor}[3]{#1\{#2;#3\}}
\newcommand{\score}{{\rho}}
\newcommand{\hatscore}{\widehat{\score}}
\newcommand{\barscore}{\overline{\score}}
\newcommand{\occupancy}{{\psi}}
\newcommand{\Occupancy}{{\Psi}}
\newcommand{\occstate}{{\mu}}
\newcommand{\occdiffmatrix}{{\mathbf \Phi}}
\newcommand{\optpi}{\pi^{*}}
\newcommand{\alignment}{{\Gamma^{\{s;a\}}}}
\title{Defense Against Reward Poisoning Attacks in Reinforcement Learning}
\author{%
	Kiarash Banihashem\\
	MPI-SWS\\
	\texttt{kbanihas@mpi-sws.org} \\
	\And
	Adish Singla\\
	MPI-SWS\\
	\texttt{adishs@mpi-sws.org} \\
	\And
	Goran Radanovic\\
	MPI-SWS\\
	\texttt{gradanovic@mpi-sws.org} \\
	\\
}
\begin{document}
	
	\maketitle
	
	\newtoggle{longversion}
	\settoggle{longversion}{true}
\begin{abstract}
We study defense strategies against reward poisoning attacks in reinforcement learning. As a threat model, we consider attacks that minimally alter rewards to make the attacker's target policy uniquely optimal under the poisoned rewards, with the optimality gap specified by an attack parameter. Our goal is to design agents that are robust against such attacks in terms of  the worst-case utility w.r.t. the true, unpoisoned, rewards while computing their policies under the poisoned rewards. We propose an optimization framework for deriving optimal defense policies, both when the attack parameter is known and unknown. Moreover, we show that defense policies that are solutions to the proposed optimization problems have provable performance guarantees. In particular, we provide the following bounds with respect to the true, unpoisoned, rewards: a) lower bounds on the expected return of the defense policies, and b) upper bounds on how suboptimal these defense policies are compared to the attacker's target policy. Using simulation-based experiments, we demonstrate the effectiveness of our defense approach.
\end{abstract}

\section{Introduction}\label{sec.introduction}

One of the key challenges in designing trustworthy AI systems is ensuring that they are technically robust and resilient to security threats \cite{ethicsEU}. Amongst many 
requirements that are important to satisfy in order for an AI system to be deemed trustworthy is robustness to adversarial attacks \cite{hamon2020robustness}. 
~\\
\looseness-1Standard approaches to reinforcement learning (RL) \cite{sutton2018reinforcement} have shown to be susceptible to adversarial attacks which manipulate the feedback that an agent receives from its environment, i.e., its input data. These attacks broadly fall under two categories: 
 a) {\em test-time} attacks, which manipulate an agent's input data at test-time without changing the agent's policy \cite{huang2017adversarial,DBLP:conf/ijcai/LinHLSLS17,tretschk2018sequential}, and b) {\em training-time} attacks that manipulate an agent's input data at training-time, 
 thereby influencing the agent's learned policy
 \cite{DBLP:conf/aaai/ZhangP08,ma2019policy,DBLP:conf/gamesec/HuangZ19a,rakhsha2020policy,rakhsha2020policy-jmlr,xuezhou2020adaptive,sun2020vulnerability}. 
 In this paper, we focus on training-time attacks which specifically modify rewards (aka {\em reward poisoning}) to force the agent into  adopting a  {\em target} policy \cite{ma2019policy,rakhsha2020policy-jmlr}.
 ~\\
Prior work on reward poisoning attacks on RL primarily focuses on designing optimal attacks. 
In this paper, we take a different perspective on targeted reward poisoning attacks, and focus on designing {\em defense strategies} that are effective against such attacks. This is challenging, given that the attacker is typically unconstrained in poisoning the rewards to force the target policy, while the agent's performance is measured under the true reward function, which is unknown. The key idea that we exploit in our work is that the poisoning attacks have an underlying \emph{structure} arising from the attacker's objective to minimize the cost of the attack needed to force the target policy. We therefore ask the following question: {\em Can we design an effective defense strategy against reward poisoning attacks by exploiting the underlying structure of these attacks?}
\begin{figure*}[ht]
	\centering
	\begin{subfigure}{.45\textwidth}
		\resizebox{0.9\linewidth}{!}{\begin{tikzpicture}

    	\node[state] (s0) {$s_0$};
	\node[state, right=0.6 of s0] (s1) {$s_1$};
	\node[state, right=0.6 of s1] (s2) {$s_2$};
	\node[state, right=0.6 of s2] (s3) {$s_3$};
        \draw[-{Triangle[width=8pt,length=3pt]}, line width=4pt, opacity=0.10](-0.10,-1.40) -- (-0.65, -1.40);
        \draw[-{Triangle[width=8pt,length=3pt]}, line width=4pt, opacity=0.85](0.00,-1.40) -- (0.55, -1.40);
        \node[text width=0.2 cm] at (-0.30, -1.10) (t0) {\scalebox{0.8}{$0$}};
        \node[text width=0.2 cm] at (0.30, -1.10) (t1) {\scalebox{0.8}{$1$}};
        
\draw[-{Triangle[width=8pt,length=3pt]}, line width=4pt, opacity=0.10](1.45,-1.40) -- (0.90, -1.40);
        \draw[-{Triangle[width=8pt,length=3pt]}, line width=4pt, opacity=0.85](1.55,-1.40) -- (2.10, -1.40);
        \node[text width=0.2 cm] at (1.25, -1.10) (t0) {\scalebox{0.8}{$0$}};
        \node[text width=0.2 cm] at (1.85, -1.10) (t1) {\scalebox{0.8}{$1$}};
        
\draw[-{Triangle[width=8pt,length=3pt]}, line width=4pt, opacity=0.85](3.00,-1.40) -- (2.45, -1.40);
        \draw[-{Triangle[width=8pt,length=3pt]}, line width=4pt, opacity=0.10](3.10,-1.40) -- (3.65, -1.40);
        \node[text width=0.2 cm] at (2.80, -1.10) (t0) {\scalebox{0.8}{$1$}};
        \node[text width=0.2 cm] at (3.40, -1.10) (t1) {\scalebox{0.8}{$0$}};
        
\draw[-{Triangle[width=8pt,length=3pt]}, line width=4pt, opacity=0.85](4.55,-1.40) -- (4.00, -1.40);
        \draw[-{Triangle[width=8pt,length=3pt]}, line width=4pt, opacity=0.10](4.65,-1.40) -- (5.20, -1.40);
        \node[text width=0.2 cm] at (4.35, -1.10) (t0) {\scalebox{0.8}{$1$}};
        \node[text width=0.2 cm] at (4.95, -1.10) (t1) {\scalebox{0.8}{$0$}};
        
\draw (s0) edge[loop left, below, pos=0.2] node{\scalebox{0.8}{-2.50}} (s0);
\draw (s0) edge[bend left, above] node{\scalebox{0.8}{-2.50}} (s1);
\draw (s1) edge[bend left, below] node{\scalebox{0.8}{0.50}} (s0);
\draw (s1) edge[bend left, above] node{\scalebox{0.8}{0.50}} (s2);
\draw (s2) edge[bend left, below] node{\scalebox{0.8}{0.50}} (s1);
\draw (s2) edge[bend left, above] node{\scalebox{0.8}{0.50}} (s3);
\draw (s3) edge[bend left, below] node{\scalebox{0.8}{-0.50}} (s2);
\draw (s3) edge[loop right, above, pos=0.2] node{\scalebox{0.8}{-0.50}} (s3);
\end{tikzpicture}}
		\caption{$\overline{R}, \optpi$}\label{fig.example.a}
	\end{subfigure}
	\quad
	\begin{subfigure}{0.45\textwidth}
		\resizebox{0.9\linewidth}{!}{\begin{tikzpicture}

    	\node[state] (s0) {$s_0$};
	\node[state, right=0.6 of s0] (s1) {$s_1$};
	\node[state, right=0.6 of s1] (s2) {$s_2$};
	\node[state, right=0.6 of s2] (s3) {$s_3$};
        \draw[-{Triangle[width=8pt,length=3pt]}, line width=4pt, opacity=0.10](-0.10,-1.40) -- (-0.65, -1.40);
        \draw[-{Triangle[width=8pt,length=3pt]}, line width=4pt, opacity=0.85](0.00,-1.40) -- (0.55, -1.40);
        \node[text width=0.2 cm] at (-0.30, -1.10) (t0) {\scalebox{0.8}{$0$}};
        \node[text width=0.2 cm] at (0.30, -1.10) (t1) {\scalebox{0.8}{$1$}};
        
\draw[-{Triangle[width=8pt,length=3pt]}, line width=4pt, opacity=0.10](1.45,-1.40) -- (0.90, -1.40);
        \draw[-{Triangle[width=8pt,length=3pt]}, line width=4pt, opacity=0.85](1.55,-1.40) -- (2.10, -1.40);
        \node[text width=0.2 cm] at (1.25, -1.10) (t0) {\scalebox{0.8}{$0$}};
        \node[text width=0.2 cm] at (1.85, -1.10) (t1) {\scalebox{0.8}{$1$}};
        
\draw[-{Triangle[width=8pt,length=3pt]}, line width=4pt, opacity=0.10](3.00,-1.40) -- (2.45, -1.40);
        \draw[-{Triangle[width=8pt,length=3pt]}, line width=4pt, opacity=0.85](3.10,-1.40) -- (3.65, -1.40);
        \node[text width=0.2 cm] at (2.80, -1.10) (t0) {\scalebox{0.8}{$0$}};
        \node[text width=0.2 cm] at (3.40, -1.10) (t1) {\scalebox{0.8}{$1$}};
        
\draw[-{Triangle[width=8pt,length=3pt]}, line width=4pt, opacity=0.10](4.55,-1.40) -- (4.00, -1.40);
        \draw[-{Triangle[width=8pt,length=3pt]}, line width=4pt, opacity=0.85](4.65,-1.40) -- (5.20, -1.40);
        \node[text width=0.2 cm] at (4.35, -1.10) (t0) {\scalebox{0.8}{$0$}};
        \node[text width=0.2 cm] at (4.95, -1.10) (t1) {\scalebox{0.8}{$1$}};
        
\draw (s0) edge[loop left, below, pos=0.2] node{\scalebox{0.8}{-2.50}} (s0);
\draw (s0) edge[bend left, above] node{\scalebox{0.8}{-2.50}} (s1);
\draw (s1) edge[bend left, below] node{\scalebox{0.8}{0.50}} (s0);
\draw (s1) edge[bend left, above] node{\scalebox{0.8}{0.13}} (s2);
\draw (s2) edge[bend left, below] node{\scalebox{0.8}{0.07}} (s1);
\draw (s2) edge[bend left, above] node{\scalebox{0.8}{0.61}} (s3);
\draw (s3) edge[bend left, below] node{\scalebox{0.8}{-0.50}} (s2);
\draw (s3) edge[loop right, above, pos=0.2] node{\scalebox{0.8}{0.19}} (s3);
\end{tikzpicture}}
		\caption{$\widehat{R}, \targetpi$}\label{fig.example.b}
	\end{subfigure}
	\quad
	\begin{subfigure}{.45\textwidth}
		\resizebox{0.9\linewidth}{!}{\begin{tikzpicture}

    	\node[state] (s0) {$s_0$};
	\node[state, right=0.6 of s0] (s1) {$s_1$};
	\node[state, right=0.6 of s1] (s2) {$s_2$};
	\node[state, right=0.6 of s2] (s3) {$s_3$};
        \draw[-{Triangle[width=8pt,length=3pt]}, line width=4pt, opacity=0.10](-0.10,-1.40) -- (-0.65, -1.40);
        \draw[-{Triangle[width=8pt,length=3pt]}, line width=4pt, opacity=0.85](0.00,-1.40) -- (0.55, -1.40);
        \node[text width=0.2 cm] at (-0.30, -1.10) (t0) {\scalebox{0.8}{$0$}};
        \node[text width=0.2 cm] at (0.30, -1.10) (t1) {\scalebox{0.8}{$1$}};
        
\draw[-{Triangle[width=8pt,length=3pt]}, line width=4pt, opacity=0.10](1.45,-1.40) -- (0.90, -1.40);
        \draw[-{Triangle[width=8pt,length=3pt]}, line width=4pt, opacity=0.85](1.55,-1.40) -- (2.10, -1.40);
        \node[text width=0.2 cm] at (1.25, -1.10) (t0) {\scalebox{0.8}{$0$}};
        \node[text width=0.2 cm] at (1.85, -1.10) (t1) {\scalebox{0.8}{$1$}};
        
\draw[-{Triangle[width=8pt,length=3pt]}, line width=4pt, opacity=0.80](3.00,-1.40) -- (2.45, -1.40);
        \draw[-{Triangle[width=8pt,length=3pt]}, line width=4pt, opacity=0.15](3.10,-1.40) -- (3.65, -1.40);
        \node[text width=0.2 cm] at (2.60, -1.10) (t0) {\scalebox{0.8}{$0.94$}};
        \node[text width=0.2 cm] at (3.20, -1.10) (t1) {\scalebox{0.8}{$0.06$}};
        
\draw[-{Triangle[width=8pt,length=3pt]}, line width=4pt, opacity=0.10](4.55,-1.40) -- (4.00, -1.40);
        \draw[-{Triangle[width=8pt,length=3pt]}, line width=4pt, opacity=0.85](4.65,-1.40) -- (5.20, -1.40);
        \node[text width=0.2 cm] at (4.35, -1.10) (t0) {\scalebox{0.8}{$0$}};
        \node[text width=0.2 cm] at (4.95, -1.10) (t1) {\scalebox{0.8}{$1$}};
        
\draw (s0) edge[loop left, below, pos=0.2] node{\scalebox{0.8}{-2.50}} (s0);
\draw (s0) edge[bend left, above] node{\scalebox{0.8}{-2.50}} (s1);
\draw (s1) edge[bend left, below] node{\scalebox{0.8}{0.50}} (s0);
\draw (s1) edge[bend left, above] node{\scalebox{0.8}{0.13}} (s2);
\draw (s2) edge[bend left, below] node{\scalebox{0.8}{0.07}} (s1);
\draw (s2) edge[bend left, above] node{\scalebox{0.8}{0.61}} (s3);
\draw (s3) edge[bend left, below] node{\scalebox{0.8}{-0.50}} (s2);
\draw (s3) edge[loop right, above, pos=0.2] node{\scalebox{0.8}{0.19}} (s3);
\end{tikzpicture}}
		\caption{$\widehat{R}, \defensepi$}\label{fig.example.c}
	\end{subfigure}
	\quad
	\begin{minipage}[][][b]{0.45\textwidth}
	\resizebox{0.9\linewidth}{!}{
    	\begin{subfigure}{1\textwidth}
    		\centering
    		{\renewcommand{\arraystretch}{1.5}
    		\begin{tabular}{c*{3}{|c}}
    			\diagbox{MDP}{Policy} & $\optpi$ & $\targetpi$& $\defensepi$\\
    			\hline
    			\input{figures/chain_figure_part_d}
    		\end{tabular}	
    	}
    	
    	\caption{Scores of policies $\optpi, \targetpi, \defensepi$ in MDPs w.r.t. $\overline{R}, \widehat{R}$}\label{fig.example.d}
    	\end{subfigure}
	}
    \end{minipage}
	\caption{A simple chain environment with 4 states and two possible actions: {\em left} and {\em right}. $s_0$ is the initial state. The agent goes in the direction of its action with
	probability $90\%$, 
	and otherwise the next state is selected uniformly at random from the other 3 states. Weights on edges indicate rewards for the action taken. For example in Fig. \ref{fig.example.a},
	if the agent takes {\em left} in state $s_1$, it receives $0.5$.
	We denote the true rewards by $\overline{R}$, the poisoned rewards by $\widehat{R}$, the optimal policy under $\overline{R}$ by $\optpi$, the target policy 
	(which is uniquely optimal under $\widehat{R}$) 
	by $\targetpi$, and the defense policy (which is derived from our framework) by $\defensepi$. %
 	\textbf{(a)} shows $\overline{R}$ and $\optpi$. 
 		In particular, the numbers above the arrows and the different shades of gray  show the probabilities of taking actions {\em left} and {\em right} under $\optpi$.
	 \textbf{(b)} shows $\widehat{R}$ and $\targetpi$.
\textbf{(c)} shows $\defensepi$ that our optimization framework derived from $\widehat{R}$, and by reasoning about the goal of the attack ($\targetpi$).
In particular, 
our optimization framework maximizes the worst-case performance under $\overline{R}$: while the optimization procedure does not know $\overline{R}$, it can constrain the set of plausible candidates for $\overline{R}$ using $\widehat{R}$.
	\textbf{(d)}  Table. \ref{fig.example.d}): Each entry in the table indicates the score of a (policy, reward function) pair, where the score is a scaled version of the total discounted return (see Section \ref{sec.setting}).
    For example, the score of policy $\targetpi$ equals $-0.42$ and $0.11$ under $\widehat{R}$ and $\overline{R}$ respectively.
    Our defense policy significantly improves upon this and achieves a score of $0.03$. For comparison, the score of $\optpi$
    equals $0.34$.
	Moreover, unlike for the target policy
	$\targetpi$, the score of our defense policy $\defensepi$
	under $\overline{R}$ is always at least as high as its score under $\widehat{R}$, as predicted by our results (see Theorem
	\ref{thm.defense_optimization.known_epsilon}). The results are obtained with parameters $\epsA = 0.1$,  $\epsD = 0.2$ and $\gamma=0.99$ (see Section \ref{sec.setting}).
	}\label{fig.example}
    \vspace{-5mm}
\end{figure*}
~\\
In this paper, we answer this question affirmatively. While an agent only has access to the poisoned rewards, it can still infer some information about the true reward function, using the fact that the attack exhibits some structure. By maximizing the worst-case utility over the set of plausible candidates for the true reward function, the agent can substantially limit the influence of the attack. The approach we take can be understood from Figure \ref{fig.example} which demonstrates our defense on the chain environment from \cite{rakhsha2020policy-jmlr}.

\textbf{Contributions.} 
We formalize this reasoning, and characterize the utility of our novel framework for designing defense policies. In summary, the key contributions include:
\begin{itemize}[leftmargin=*,labelindent=-2pt]
\vspace{-6pt}
\item \looseness-1We formalize the problem of finding defense policies that are effective against reward poisoning attacks that minimally modify the original reward function to achieve their goal (force a target policy).
\item \looseness-1We introduce a novel optimization framework for designing defense policies against reward poisoning attacks---this framework focuses on optimizing the agent's worst-case utility among the set of reward functions that are plausible candidates of the true reward function.
\item \looseness-1We provide characterization results that establish lower bounds on the performance of defense policies derived from our optimization framework, and upper bounds on the suboptimality of these defense policies compared to the target policy.
\item \looseness-1We empirically demonstrate the effectiveness of our approach using numerical simulations.
\end{itemize}
\vspace{-1mm}
To our knowledge, this is the first framework for studying this type of defenses against reward poisoning attacks that try to force a target policy at a minimal cost.

\vspace{2mm}
\section{Related Work} \label{sec.relatedwork}
While this paper is broadly related to the literature on adversarial machine learning (e.g., \cite{huang2011adversarial}), we recognize four themes in supervised learning (SL) and reinforcement learning (RL)  that closely connect to our work.  
~\\
\textbf{Poisoning attacks in SL and RL.} 
This paper is closely related to data poisoning attacks, first introduced and extensively studied in the context of supervised learning \cite{DBLP:conf/icml/BiggioNL12,xiao2012adversarial,mei2015using,DBLP:conf/icml/XiaoBBFER15,li2016data,koh2017understanding,biggio2018wild}. These attacks are also called {\em training-time attacks}, and unlike {\em test time attacks}~\cite{szegedy2014intriguing, pinto2017robust,behzadan2017whatever,zhang2020robust,moosavi2016deepfool, nguyen2015deep,madry2017towards}, 
which attack an already trained agent, 
they change data points during the training phase, which in turn affects the parameters of the learned model. More recently, data poisoning attacks have been studied in the bandits literature \cite{DBLP:conf/nips/Jun0MZ18,DBLP:conf/gamesec/MaJ0018,DBLP:conf/icml/LiuS19a}, and as we already mentioned, in RL.
~\\
\textbf{Defenses against poisoning attacks in SL.}
In supervised learning, defenses against data poisoning attacks are often based on data sanitization that
removes outliers from the training set
\cite{cretu2008casting,paudice2018detection}, trusted data points that support robust learning \cite{nelson2008exploiting,zhang2018training}, or robust estimation \cite{charikar2017learning,diakonikolas2019sever}. While such defenses can mitigate some attack strategies, they are in general susceptible to data poisoning attacks \cite{steinhardt2017certified,DBLP:journals/corr/abs-1811-00741}.
~\\
\textbf{Robustness to model uncertainty.}
There is a rich literature that studies robustness to uncertainty in MDP models, both in the context of uncertain reward functions \cite{mcmahan2003planning, regan2010robust}, and uncertain transition models \cite{nilim2005robust,iyengar2005robust, bagnell2001solving}.
Typically, these works consider settings in which instead of knowing the exact parameters of the MDP, the agent has access to a set of possible parameters (uncertainty set). These works design policies that perform well in the worst case.
More recent works have proposed ways to scale up these approaches via function approximation \cite{tamar2014scaling}, as well as utilize them in online settings \cite{lim2013reinforcement}. While our work uses the same principles of robust optimization, we  do not assume that the uncertainty set, i.e., the set of all possible rewards, is directly given. Instead, we show how to derive it from the poisoned reward function.
~\\
\textbf{Robustness to corrupted episodes.}
Another important line of work is the literature on robust learners that receive corrupted input during their training phase.
Such learners have recently been designed for bandits and experts settings \cite{lykouris2018stochastic,gupta2019better,bogunovic2020stochastic,amir2020prediction}, and episodic reinforcement learning \cite{lykouris2019corruption, zhang2021robustICML}.
Typically, these works consider an attack model in which the adversary can arbitrarily corrupt a limited number of episodes. As we  operate in the non-episodic setting and do not assume a limit in the attacker's poisoning budget, these works are orthogonal to the aspects we study in this paper. Instead, we utilize the {\em structure} of the attack in order to design a defense algorithm.
	
\section{Formal Setting}\label{sec.setting}
In this section, we describe our formal setting, and identify relevant background details on reward poisoning attacks, as well as our problem statement. 
The problem formulation specifies our objectives that we establish and formally analyze in the next sections.
\subsection{Preliminaries}\label{sec.setting.preliminaries} 
We consider a standard reinforcement learning setting in which the environment is described by a discrete-time discounted Markov Decision Processes (MDP) \cite{Puterman1994}, defined as $M = (S, A, R, P, \gamma, \sigma)$, where: $S$ is the state space, $A$ is the action space, $R:S \times A \rightarrow \mathds R$ is the reward function, 
$P: S \times A \times S \rightarrow [0, 1]$ is the transition model with $P(s, a,  s')$ defining the probability of transitioning to state $s'$ by taking action $a$ in state $s$, 
$\gamma \in [0, 1)$ is the discount factor, and $\sigma$ is the initial state distribution. We consider state and action spaces, i.e., $S$ and $A$, that are finite and discrete, and due to this we can adopt a vector notation for quantities dependent on states or state-action pairs. W.l.o.g., we assume that $|A| \ge 2$.
~\\
A generic (stochastic) policy is denoted by $\pi$, and it is a mapping $\pi:S \rightarrow \mathcal P(A)$, where $\mathcal P(A)$ is the probability simplex over action space $A$. We use $\pi(a|s)$ to denote the probability of taking action $a$ in state $s$. While deterministic policies are a special case of stochastic policies, when explicitly stating that a policy $\pi$ is deterministic, we assume that it is a mapping from states to actions, i.e., $\pi:S\rightarrow A$. We denote the set of all policies by $\Pi$ and the set of all deterministic policies by $\Pi^{\textnormal{det}}$.
For policy $\pi$, we define its {\em score}, $\score^{\pi}$, as
$\expct{ 
      (1-\gamma)\sum_{t=1}^{\infty} \gamma^{t-1} R(s_t, a_t) | \pi, \sigma}$,
where state $s_1$ is sampled from the initial state distribution $\sigma$, and then subsequent states $s_t$ are obtained by executing policy $\pi$ in the MDP. The score of a policy is therefore its total expected return scaled by a factor of $1-\gamma$. 
~\\
Finally, we consider occupancy measures. We denote 
the state-action occupancy measure in the Markov chain induced by policy $\pi$ 
by $\occupancy^{\pi}(s, a)=\expct{(1-\gamma) \sum_{t=1}^{\infty} \gamma^{t-1} \ind{s_t = s, a_t = a} | \pi, \sigma}$.
Given the MDP $M$, the set of realizable state-action occupancy measures under any (stochastic) policy $\pi \in \Pi$ is  denoted by $\Occupancy$. 
Score $\score^{\pi}$ and 
$\occupancy^{\pi}$ satisfy   $\score^{\pi} = \vecdot{\occupancy^{\pi}}{R}$, 
where $\vecdot{.}{.}$ computes the dot product between two vectors of sizes $|S| \cdot |A|$.
 We denote by $\occstate^{\pi}(s)=\expct{ (1-\gamma)\sum_{t=1}^{\infty} \gamma^{t-1}  \ind{s_t = s} | \pi, \sigma}$ the state occupancy measure in the Markov chain induced by policy $\pi \in \Pi$.
State-action occupancy measure $\occupancy^{\pi}(s,a)$ and state occupancy measure $\occstate^{\pi}(s)$  satisfy $\occupancy^{\pi}(s,a) = \occstate^{\pi}(s) \cdot \pi(a|s)$. We focus on {\em ergodic} MDPs, which in turn implies that $\occstate^\pi(s) > 0$ for all $\pi$ and $s$ \cite{Puterman1994}. This is a standard assumption in this line of work (e.g, see \cite{rakhsha2020policy-jmlr}) and is used to ensure the feasibility of the attacker's optimization problem.
\subsection{Reward Poisoning Attacks}\label{sec.setting.attack_model}
We consider reward poisoning attacks on an offline learning agent that optimally change the original reward function with the goal of deceiving the agent to adopt a deterministic policy $\targetpi \in \Pi^{\textnormal{det}}$, called {\em target policy}. %
This type of attack has been extensively studied in the literature, and here we utilize the attack formulation based on the works of \cite{ma2019policy,rakhsha2020policy,rakhsha2020policy-jmlr,xuezhou2020adaptive}. 
In the following, we introduce the necessary notation, the attacker's model, and the agent's model (without defense).
~\\
\textbf{Notation.} We use $\overline{M}$ to denote the {\em true} or {\em original} MDP with true, unpoisoned, reward function $\overline{R}$, i.e., $\overline{M} = (S, A, \overline R, P, \gamma, \sigma)$. We use $\widehat{M}$ to denote the {\em modified} or {\em poisoned} MDP with poisoned reward function $\widehat R$, i.e., $\widehat{M} = (S, A, \widehat R, P, \gamma, \sigma)$. Note that only the reward function $R$ changes across these MDPs. Quantities that depend on reward functions 
have analogous notation. For example, the score of policy $\pi$ under $\overline{R}$ is denoted by $\overline{\score}^{\pi}$, whereas its score under $\widehat{R}$ is denoted by $\widehat{\score}^{\pi}$. We denote an optimal policy under $\overline{R}$ by $\optpi$, i.e., $\optpi \in \argmax_{\pi \in \Pi} \overline \score^{\pi}$.
~\\
\textbf{Attack model.} The attacker we consider in this paper has full knowledge of $\overline{M}$. It can be modeled by a function $\attack(R', \targetpi, \epsA)$ that returns a poisoned reward function 
for a given reward function $R'$, target policy $\targetpi$, and a desired attack parameter $\epsA$.
In particular, the attacker solves the following optimization problem.
\begin{align}
	\label{prob.attack}
	\tag{P1}
	&\quad \min_{R} \norm{R - R'}_{2}
	\notag
	\quad\quad\quad \mbox{ s.t. } \quad \score^{\targetpi} \ge \score^{\pi} + \epsA \quad \forall \pi \in \Pi^{\text{det}} \backslash \{ \pi^{\dagger} \}. 	
	\\
	\intertext{
	As shown by \cite{rakhsha2020policy-jmlr}, this problem is feasible for ergodic MDPs and has a unique optimal solution. Furthermore, instead of considering all deterministic policies, it is sufficient to consider policies that differ from $\targetpi$ in a single action.
	 Using $\neighbor{\targetpi}{s}{a}$ to denote a policy that follows $a\ne\targetpi(s)$ in state $s$ and $\targetpi(\tilde s)$ in states $\tilde s\ne s$, \eqref{prob.attack} can be rewritten as follows.}
	\label{prob.attack.neighbor}
	\tag{P1'}
	&\quad \min_{R} \norm{R - R'}_{2}
	\notag
	\quad\quad\quad \mbox{ s.t. } \quad \score^{\targetpi} \ge \score^{\neighbor{\targetpi}{s}{a}} + \epsA \quad \forall s, a\ne \targetpi(s).
	\end{align}
By solving this problem, i.e., 
setting $\widehat R = \attack(\overline R, \targetpi, \epsA)$, 
the attacker 
finds the closest reward function to $\overline{R}$ (in Euclidean distance)
for which $\targetpi$ is a uniquely optimal policy (with attack parameter $\epsA$). 
~\\
\textbf{Agent without defense:} The agent receives the poisoned MDP $\widehat{M} := (S, A, \widehat{R}, P, \gamma, \sigma)$ where the underlying true reward function $\overline{R}$ (unknown to the agent) has been poisoned to $\widehat{R}$. 
In the existing works on reward poisoning attacks, an agent naively optimizes score $\widehat \rho$ (score w.r.t. $\widehat R$). Because of this, the agent ends up adopting policy $\targetpi$.
\subsection{Problem Statement}
Perhaps unsurprisingly, the agent without defense, could perform arbitrarily badly under the true reward function $\overline{R}$
(see Figure \ref{fig.example}). Our goal is to design a robust agent that has provable worst-case guarantees w.r.t. $\overline{R}$. This agent has access to the poisoned reward vector $\widehat{R}=\attack(\overline{R}, \targetpi, \epsA)$, but $\overline{R}$, $\targetpi$, and $\epsA$ are not given to the agent. Notice that $\targetpi$ is obtainable by solving the optimization problem $\argmax_{\pi} \widehat \score^{\pi}$ as $\targetpi$ is uniquely optimal in $\widehat{M}$. On the other hand, $\overline{R}$ is unknown to the agent. In terms of $\epsA$, we will focus on two cases, the case when $\epsA$ is known to the agent, and the case when it is not.
In the first case, we can formulate the following optimization problem of maximizing the worst case performance of the agent, given that $\overline{R}$ is unknown: 
\begin{align}
	\label{prob.defense_a}
	\tag{P2a}
	&\quad \max_{\pi} \min_{R} \score^{\pi}
	\notag
	\quad\quad\quad
	\mbox{ s.t. }  \quad  \widehat R = \attack(R, \targetpi, \epsA).\\
\intertext{We study this optimization problem in more detail in Section \ref{sec.defense_characterization_general_MDP.known_epsilon}. 
For the case when the agent does not know $\epsA$, 
we use the following optimization problem:}
	\label{prob.defense_b}
	\tag{P2b}
	&\quad \max_{\pi} \min_{R, \epsilon} \score^{\pi}
	\notag
	\quad\quad\quad
	\mbox{ s.t. }  \quad  \widehat R = \attack(R, \targetpi, \epsilon), \quad
 \quad 0 < \epsilon \le \epsD,
\end{align}
where the agent uses $\epsD$ as an upper bound on $\epsA$.
We study this optimization problem in more detail in Section \ref{sec.defense_characterization_general_MDP.unknown_epsilon}.
We denote solutions to the optimization problems \eqref{prob.defense_a} and $\eqref{prob.defense_b}$ by $\defensepi$, and it will be clear from the context which optimization problem we are referring to with $\defensepi$.
	
	
\section{Known Parameter Setting}\label{sec.defense_characterization_general_MDP.known_epsilon}

In this section, we provide characterization results for the case when the attack parameter $\epsA$ is known to the agent. The proofs of our theoretical results can be found in the Appendix.

\subsection{Optimal Defense Policy}\label{sec.defense_characterization_general_MDP.known_epsilon.optimal_solution}

We begin by analyzing the optimization problem \eqref{prob.defense_a}.
Denote  by $\tightset$ state-action pairs $(s, a)$ for which the difference between $\widehat \score^{\targetpi}$ and $\widehat \score^{\neighbor{\targetpi}{s}{a}}$ is equal to $\epsilon$, i.e., $ \tightset = \left \{ (s , a) : \widehat \score^{\neighbor{\targetpi}{s}{a}} - \widehat \score^{\targetpi} =  -\epsilon \right \}$
\footnote{
In practice,
$\tightset$ should be calculated with some tolerance due to numerical imprecision
(See Section \ref{sec.experiments}).
}.
For the results of this section, $\tightset$ with $\epsilon = \epsA$ plays a critical role.  As shown by the following lemma, it characterizes the feasible set of
\eqref{prob.defense_a}.
\begin{lemma}\label{lm.tightset}
Reward function $R$ satisfies $\widehat{R} = \attack(R, \targetpi, \epsA)$ if and only if there exists some $\alpha_{s, a} \ge 0$ such that
\begin{align*}
    R = \widehat{R} + \sum_{(s, a) \in \tightsettarget} \alpha_{s,a} \cdot \left (\occupancy^{\neighbor{\targetpi}{s}{a}} - \occupancy^{\targetpi} \right).
\end{align*}
\end{lemma}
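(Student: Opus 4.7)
The plan is to treat problem \eqref{prob.attack.neighbor} as a convex projection and invoke KKT optimality, which is necessary and sufficient here. First, I would rewrite every constraint using $\score^\pi = \vecdot{\occupancy^\pi}{R}$, turning the attack problem into the minimization of $\tfrac{1}{2}\|R-R'\|_2^2$ (squaring the norm does not change the minimizer) subject to the linear inequalities $\vecdot{\occupancy^{\targetpi} - \occupancy^{\neighbor{\targetpi}{s}{a}}}{R} \ge \epsA$ for every $s$ and every $a \ne \targetpi(s)$. This is a strictly convex QP with linear inequality constraints, so the minimizer exists, is unique, and is fully characterized by its KKT system.

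Introducing multipliers $\lambda_{s,a}\ge 0$, the KKT conditions for $\widehat R$ to be the projection of $R$ read as follows. Stationarity gives
\[
\widehat R - R \;=\; \tfrac{1}{2}\sum_{s,\, a\ne \targetpi(s)} \lambda_{s,a}\bigl(\occupancy^{\targetpi} - \occupancy^{\neighbor{\targetpi}{s}{a}}\bigr);
\]
primal feasibility requires $\vecdot{\occupancy^{\targetpi} - \occupancy^{\neighbor{\targetpi}{s}{a}}}{\widehat R} \ge \epsA$ for every $(s,a)$; and complementary slackness forces $\lambda_{s,a}=0$ whenever that inequality is strict at $\widehat R$. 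By the very definition of $\tightsettarget$, this last condition restricts the support of $\lambda$ to pairs in $\tightsettarget$.

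The forward direction then follows by rearranging the stationarity identity: setting $\alpha_{s,a}:=\lambda_{s,a}/2$ yields $R = \widehat R + \sum_{(s,a)\in \tightsettarget} \alpha_{s,a}(\occupancy^{\neighbor{\targetpi}{s}{a}} - \occupancy^{\targetpi})$ with $\alpha\ge 0$, which is the claimed decomposition. For the backward direction, given $R$ of this form, I would take $\lambda_{s,a}:=2\alpha_{s,a}$ for $(s,a)\in\tightsettarget$ and $\lambda_{s,a}:=0$ otherwise; stationarity and dual feasibility then hold by construction, and complementary slackness holds because $\lambda$ is supported on $\tightsettarget$, where the corresponding constraints are tight at $\widehat R$. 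Since KKT is sufficient for this convex program, this certifies $\widehat R = \attack(R,\targetpi,\epsA)$.

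I expect the only substantive obstacle to be the primal-feasibility check on $\widehat R$ in the backward direction: one must know that $\widehat R$ itself satisfies \emph{every} attack constraint, not only those in $\tightsettarget$. This uses the standing interpretation that $\widehat R$ is a valid poisoned reward (already the output of an attack on some underlying reward), so that pairs in $\tightsettarget$ are tight by definition while the remaining constraints hold strictly. Everything else is mechanical manipulation of the KKT system, with the only subtlety being to keep track of the factor $1/2$ introduced by squaring the Euclidean objective.
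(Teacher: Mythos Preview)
Your approach is correct and essentially matches the paper's own proof, which likewise reduces the claim to the KKT conditions for the convex quadratic program \eqref{prob.attack.neighbor}, uses complementary slackness to restrict the support of the multipliers to $\tightsettarget$, and handles primal feasibility of $\widehat R$ exactly as you anticipated (it holds because $\widehat R$ is by assumption the output of an attack). One cosmetic slip: the gradient of $\tfrac{1}{2}\|R-R'\|_2^2$ is $R-R'$, so no factor of $\tfrac{1}{2}$ appears in the stationarity identity, and one simply has $\alpha_{s,a}=\lambda_{s,a}$ rather than $\lambda_{s,a}/2$---this does not affect the argument since the scaling is positive.
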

 To see the importance of this result, let us
instantiate it with $\overline{R}$ and use it to calculate $\overline\score^{\pi} = \vecdot{\occupancy^{\pi}}{\overline R}$:
 \begin{align*}
     \overline\rho^{\pi} =
     \vecdot{\occupancy^{\pi}}{\widehat{R}} 
     + \sum_{(s, a) \in \tightsettarget} \alpha_{s,a} \cdot \alignment(\pi),
 \end{align*}
 where we introduced $\alignment(\pi) = \vecdot{\occupancy^{\neighbor{\targetpi}{s}{a}}  - \occupancy^{\targetpi}}{\occupancy^{\pi}}$.
 Given this equation, we can expect that aligning the occupancy measure of $\defensepi$ with directions $\occupancy^{\neighbor{\targetpi}{s}{a}} - \occupancy^{\targetpi}$ will yield some guarantees on the performance of $\defensepi$ under the original reward function $\overline{R}$. 
 This insight is formalized by the following theorem, which also describes a way to solve the optimization problem \eqref{prob.defense_a}.
 
\begin{theorem}\label{thm.defense_optimization.known_epsilon} 
Consider the following optimization problem parameterized by $\epsilon$:
    \begin{align}
    &\max_{\occupancy \in \Occupancy} \vecdot{\occupancy}{\widehat R}
    \label{prob.defense_optimization.known_epsilon}
	\tag{P3}
	\notag
    \quad\quad\quad
	\mbox{ s.t. } \vecdot{\occupancy^{\neighbor{\targetpi}{s}{a}} - \occupancy^{\targetpi}}{\occupancy} \ge 0 \quad \forall s, a \in \tightset.
\end{align}
For $\epsilon = \epsA$, this optimization problem
is always feasible, and its optimal solution $\occupancy_{\max}$ specifies an optimal solution to the optimization problem \eqref{prob.defense_a} with
\begin{align}\label{eq.defense_optimization.known_epsilon}
    \defensepi(a|s) = \frac{\occupancy_{\max}(s, a)}{\sum_{a'} \occupancy_{\max}(s, a')}.
\end{align}
The score of $\defensepi(a|s)$ is lower bounded by $\overline \score^{\defensepi} \ge \widehat \score^{\defensepi}$. Furthermore, $\alignment(\defensepi)$ is non-negative, i.e., $\alignment(\defensepi) \ge 0$ for all $(s,a) \in \tightsettarget$
\end{theorem}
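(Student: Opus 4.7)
The plan is to use Lemma~\ref{lm.tightset} to eliminate the inner minimization over $R$ in \eqref{prob.defense_a}. By the lemma, every $R$ with $\widehat{R} = \attack(R, \targetpi, \epsA)$ has the form $R = \widehat{R} + \sum_{(s,a) \in \tightsettarget} \alpha_{s,a} (\occupancy^{\neighbor{\targetpi}{s}{a}} - \occupancy^{\targetpi})$ with $\alpha_{s,a} \ge 0$. Dotting with $\occupancy^{\pi}$ rewrites the inner objective as
\[
\score^{\pi} = \widehat{\score}^{\pi} + \sum_{(s,a) \in \tightsettarget} \alpha_{s,a}\,\alignment(\pi).
\]
Minimizing over $\alpha \ge 0$: if $\alignment(\pi) < 0$ for any $(s,a) \in \tightsettarget$, sending $\alpha_{s,a} \to \infty$ drives the value to $-\infty$; otherwise the minimizer is $\alpha = 0$ with value $\widehat{\score}^{\pi}$. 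The outer maximization, expressed in occupancy form via $\occupancy = \occupancy^{\pi}$ and $\widehat{\score}^{\pi} = \vecdot{\occupancy}{\widehat{R}}$, therefore coincides with \eqref{prob.defense_optimization.known_epsilon} at $\epsilon = \epsA$.

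For feasibility at $\epsilon = \epsA$, I would exhibit some $\occupancy \in \Occupancy$ satisfying every alignment inequality indexed by $\tightsettarget$. This is the main obstacle: the most plausible route is to invoke the KKT conditions of the attacker's problem \eqref{prob.attack.neighbor}, whose dual multipliers $\lambda_{s,a} \ge 0$ live on exactly $\tightsettarget$, and then use a Farkas-type argument to show that the alignment constraints are jointly consistent. Given feasibility, the recovery $\defensepi(a|s) = \occupancy_{\max}(s,a)/\sum_{a'} \occupancy_{\max}(s,a')$ is the standard occupancy-to-policy map; ergodicity ensures the denominator is strictly positive, and the induced occupancy measure is $\occupancy_{\max}$.

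The remaining two claims are then immediate. The constraint in \eqref{prob.defense_optimization.known_epsilon} gives $\alignment(\defensepi) \ge 0$ for every $(s,a) \in \tightsettarget$ directly. For the score bound, apply Lemma~\ref{lm.tightset} once more to $\overline{R}$ itself (using $\widehat{R} = \attack(\overline{R}, \targetpi, \epsA)$) to obtain coefficients $\alpha_{s,a} \ge 0$ with $\overline{R} = \widehat{R} + \sum_{(s,a) \in \tightsettarget} \alpha_{s,a}(\occupancy^{\neighbor{\targetpi}{s}{a}} - \occupancy^{\targetpi})$; dotting with $\occupancy^{\defensepi}$ yields
\[
\overline{\score}^{\defensepi} = \widehat{\score}^{\defensepi} + \sum_{(s,a) \in \tightsettarget} \alpha_{s,a}\,\alignment(\defensepi) \ge \widehat{\score}^{\defensepi},
\]
since both factors in each summand are nonnegative. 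Everything other than feasibility of \eqref{prob.defense_optimization.known_epsilon} at $\epsilon = \epsA$ is a direct consequence of Lemma~\ref{lm.tightset} and linear algebra; that feasibility step is where I expect the real work to lie.
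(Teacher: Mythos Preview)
Your reduction of \eqref{prob.defense_a} to \eqref{prob.defense_optimization.known_epsilon} via Lemma~\ref{lm.tightset} is exactly what the paper does: the inner minimum over feasible $R$ is $\widehat{\score}^{\pi}$ when all alignments are nonnegative and $-\infty$ otherwise, so the outer maximization over policies (equivalently, over occupancy measures) becomes \eqref{prob.defense_optimization.known_epsilon}. The derivation of $\overline{\score}^{\defensepi} \ge \widehat{\score}^{\defensepi}$ and of $\alignment(\defensepi) \ge 0$ is likewise the same.

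The gap is your feasibility argument. The KKT/Farkas route you sketch does not obviously produce a feasible occupancy measure: the attacker's dual variables live in reward space, not in $\Occupancy$, and there is no evident alternative system whose infeasibility would be forced by those multipliers. More importantly, the paper's argument is far simpler and bypasses this machinery entirely. Take any deterministic policy $\pi$ with $\pi(s) \neq \targetpi(s)$ for every $s$ (possible since $|A| \ge 2$). For such $\pi$ and any $(s,a)$ with $a \neq \targetpi(s)$,
\[
\vecdot{\occupancy^{\neighbor{\targetpi}{s}{a}} - \occupancy^{\targetpi}}{\occupancy^{\pi}}
= \sum_{\tilde s, \tilde a} \occupancy^{\neighbor{\targetpi}{s}{a}}(\tilde s, \tilde a)\,\occupancy^{\pi}(\tilde s, \tilde a)
- \sum_{\tilde s, \tilde a} \occupancy^{\targetpi}(\tilde s, \tilde a)\,\occupancy^{\pi}(\tilde s, \tilde a).
\]
The subtracted sum vanishes because $\occupancy^{\targetpi}(\tilde s,\cdot)$ is supported on $\targetpi(\tilde s)$ while $\occupancy^{\pi}(\tilde s,\cdot)$ is supported on $\pi(\tilde s) \neq \targetpi(\tilde s)$; the remaining sum is a sum of nonnegative terms. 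Hence $\occupancy^{\pi}$ satisfies every alignment constraint---in fact for all $(s,a)$, not just those in $\tightsettarget$---so \eqref{prob.defense_optimization.known_epsilon} is feasible for every $\epsilon > 0$. This is the step you flagged as the real work, and it turns out to be a one-line construction rather than a duality argument.
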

As we discuss in the Appendix, the set of valid occupancy measures, $\Occupancy$, is a subset of $\mathds R^{|S|\cdot |A|}$ 
defined by a set of linear constraints. 
Therefore, since
occupancy measures $\occupancy^{\neighbor{\targetpi}{s}{a}}$ and $\occupancy^{\targetpi}$ can be precomputed,
the optimization problem
\eqref{prob.defense_optimization.known_epsilon} can be efficiently solved.
Theorem \ref{thm.defense_optimization.known_epsilon} also provides a performance guarantee of the defense policy w.r.t. the true reward function, i.e.,  $\overline \score^{\defensepi} \ge \widehat \score^{\defensepi}$. 
Such a bound is important in practice since it provides a certificate of the worst-case performance under the true reward function $\overline{R}$, even though the agent can only optimize over $\widehat{R}$.
\subsection{Attack Influence}\label{sec.defense_characterization_general_MDP.known_epsilon.attack_influence}

While informative, the guarantee of Theorem \ref{thm.defense_optimization.known_epsilon} does not tell us how well this solution fares compared to other policies, and in particular, the attacker's target policy $\targetpi$.
To provide a relative comparison, we turn to the measure of {\em attack influence} $\influence$, which for policy $\pi$, we define as $\influence^{\pi} = \overline \score^{\optpi} - \overline\score^{\pi}$.
Without any defense, the attack influence is equal to $\influence^{\targetpi}$, whereas the attack influence when we do have defense is $\influence^{\defensepi}$.
 In this section, we establish formal results that compare $\influence^{\defensepi}$ to $\influence^{\targetpi}$. 
 
 As we will see in our results, the following condition plays a critical role in comparing $\influence^{\defensepi}$ to $\influence^{\targetpi}$:
 \begin{align}\label{eq.attack_influence_condition}
     \alignment(\defensepi) \ge \alignment(\targetpi),\quad\quad \forall (s, a) \in \tightsettarget.
 \end{align}
 For settings where this condition holds, we derive upper bounds on $\influence^{\defensepi}$ in terms of $\influence^{\targetpi}$ (Theorem \ref{thm.attack_influence_a} and Theorem \ref{cor.attack_influence_a}).
 As for the settings in which this condition does not hold, we show that such bounds cannot be obtained and $\influence^{\defensepi} > \influence^{\targetpi}$ in the worst-case scenario. We start with Theorem \ref{thm.attack_influence_a}.
 
 \begin{theorem}\label{thm.attack_influence_a}
 Let $\defensepi$ be the defense policy obtained from the optimization problem \eqref{prob.defense_optimization.known_epsilon} and Equation \eqref{eq.defense_optimization.known_epsilon} with $\epsilon = \epsA$ and let $\widehat \influence = \widehat \rho^{\targetpi} - \widehat \rho^{\defensepi}$. Furthermore, let us assume that
 the condition in Equation \eqref{eq.attack_influence_condition} holds.
 Then the attack influence $\influence^{\defensepi}$ is bounded by
\begin{align}\label{eq.thm_attack_influence}
     \influence^{\defensepi} \le \max \{ \widehat \influence, \frac{\zeta}{1+\zeta} \cdot [\influence^{\targetpi} + \epsA]  + [\widehat \influence - \epsA] \},
\end{align}
where $\zeta = 0$ {\em if} $\tightsettarget = \emptyset$, and $\zeta = \max_{(s,a) \in \tightsettarget, \pi \in \Pi^{\text{det}}} \frac{\alignment(\pi)-\alignment(\defensepi)}{\alignment(\defensepi)-\alignment(\targetpi)}$ {\em if} $\tightsettarget \ne \emptyset$.
\end{theorem}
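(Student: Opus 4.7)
The plan is to leverage the reward decomposition from Lemma \ref{lm.tightset}: writing $\overline R = \widehat R + \sum_{(s,a)\in \tightsettarget} \alpha_{s,a}\,(\occupancy^{\neighbor{\targetpi}{s}{a}} - \occupancy^{\targetpi})$ with $\alpha_{s,a}\ge 0$, and taking the inner product with $\occupancy^{\pi}$, I obtain the key identity
\[
\overline\rho^{\pi} = \widehat\rho^{\pi} + \sum_{(s,a)\in\tightsettarget} \alpha_{s,a}\,\alignment(\pi).
\]
Since a deterministic optimal policy for $\overline M$ always exists, I take $\optpi$ deterministic. Because $\targetpi$ is uniquely optimal in $\widehat M$ with gap $\epsA$, whenever $\optpi \ne \targetpi$ I have $\widehat\rho^{\optpi} - \widehat\rho^{\targetpi} \le -\epsA$. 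The proof then splits into two cases matching the two arguments of the $\max$ in \eqref{eq.thm_attack_influence}.

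In the first case $\optpi = \targetpi$, applying the identity to $\targetpi$ and $\defensepi$ and subtracting gives
\[
\influence^{\defensepi} = \widehat{\influence} + \sum_{(s,a)\in\tightsettarget} \alpha_{s,a}\,[\alignment(\targetpi) - \alignment(\defensepi)] \le \widehat{\influence},
\]
using $\alpha_{s,a}\ge 0$ and the hypothesis \eqref{eq.attack_influence_condition}; this recovers the first term of the maximum.

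In the main case $\optpi \ne \targetpi$, the crux is the pointwise inequality $\alignment(\optpi) - \alignment(\defensepi) \le \tfrac{\zeta}{1+\zeta}(\alignment(\optpi) - \alignment(\targetpi))$ on each $(s,a)\in\tightsettarget$. Setting $u = \alignment(\optpi) - \alignment(\targetpi)$ and $v = \alignment(\defensepi) - \alignment(\targetpi)\ge 0$, the definition of $\zeta$ applied with $\pi=\optpi$ yields $u - v \le \zeta v$, i.e., $v \ge u/(1+\zeta)$. When $u\ge 0$ this rearranges into the claim; when $u<0$ the claim is immediate from $v \ge 0 \ge u/(1+\zeta)$. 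Multiplying by $\alpha_{s,a}\ge 0$ and summing over $\tightsettarget$ preserves the bound. I then decompose
\[
\influence^{\defensepi} = [\widehat\rho^{\optpi} - \widehat\rho^{\targetpi}] + \widehat{\influence} + \sum_{(s,a)} \alpha_{s,a}\,[\alignment(\optpi) - \alignment(\defensepi)],
\]
substitute the pointwise bound, and use $\sum\alpha_{s,a}[\alignment(\optpi)-\alignment(\targetpi)] = \influence^{\targetpi} - [\widehat\rho^{\optpi} - \widehat\rho^{\targetpi}]$ (another instance of the identity) to regroup into $\influence^{\defensepi} \le \tfrac{1}{1+\zeta}[\widehat\rho^{\optpi}-\widehat\rho^{\targetpi}] + \tfrac{\zeta}{1+\zeta}\influence^{\targetpi} + \widehat{\influence}$. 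Plugging in $\widehat\rho^{\optpi} - \widehat\rho^{\targetpi} \le -\epsA$ delivers the second term of the maximum.

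The main obstacle is the pointwise alignment inequality: the $u<0$ sub-case depends essentially on the hypothesis $\alignment(\defensepi)\ge\alignment(\targetpi)$, and an edge case arises when some $\alignment(\defensepi)-\alignment(\targetpi)=0$ with strictly positive numerator, which forces $\zeta = +\infty$; one then uses the convention $\zeta/(1+\zeta)=1$ so the bound remains valid (though weak). Everything else is bookkeeping driven by the Lemma \ref{lm.tightset} decomposition and the scalar inequality $\widehat\rho^{\optpi}-\widehat\rho^{\targetpi}\le -\epsA$.
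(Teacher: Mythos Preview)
Your proof is correct and follows essentially the same approach as the paper: both split into the cases $\optpi=\targetpi$ and $\optpi\neq\targetpi$, handle the first case directly from the hypothesis \eqref{eq.attack_influence_condition}, and in the second case derive a pointwise alignment inequality from the definition of $\zeta$, weight it by the Lemma~\ref{lm.tightset} coefficients $\alpha_{s,a}$, and finish with $\hatscore^{\optpi}-\hatscore^{\targetpi}\le-\epsA$. Your pointwise form $\alignment(\optpi)-\alignment(\defensepi)\le\tfrac{\zeta}{1+\zeta}(\alignment(\optpi)-\alignment(\targetpi))$ is algebraically equivalent to the paper's $\alignment(\optpi)-\alignment(\defensepi)\le\zeta(\alignment(\defensepi)-\alignment(\targetpi))$, just regrouped to make the final substitution more direct; the only small omission is that you implicitly use $1+\zeta>0$ (e.g.\ when asserting $u/(1+\zeta)<0$ for $u<0$ and when the sign of $\tfrac{1}{1+\zeta}[\hatscore^{\optpi}-\hatscore^{\targetpi}]$ is preserved), which the paper establishes separately as $\zeta\ge 0$ in Lemma~\ref{lm.zeta_ge_0}.
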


 Theorem \ref{thm.attack_influence_a} shows that $\influence^{\defensepi}$ can be lower than $\influence^{\targetpi}$ by factor $\frac{\zeta}{\zeta + 1}$ provided that $\defensepi$ and $\targetpi$ have similar scores under the poisoned reward (i.e., when $\widehat \influence$ is small). This factor, $\frac{\zeta}{\zeta + 1}$, is dependent on the occupancy measures induced by policy $\targetpi$ and its neighbour policies $\neighbor{\targetpi}{s}{a}$ through $\alignment$.
 
 Our next result, Theorem \ref{cor.attack_influence_a}, expresses $\frac{\zeta}{\zeta + 1}$ in terms of the quantity $\beta^{\occstate} = \max_{s,a} \norm{\occstate^{\targetpi} - \occstate^{\neighbor{\targetpi}{s}{a}}}_{\infty}$. 
 This quantity essentially captures how different actions affect transitions to next states. 
 \begin{theorem}\label{cor.attack_influence_a}
 Let $\occstate_{\min} = \min_{\pi, s},  \occstate^{\pi}(s)$, $\widehat \influence = \widehat \rho^{\targetpi} - \widehat \rho^{\defensepi}$, 
 and assume that $\beta^{\mu} \le \occstate_{\min}^2$. Then, the condition in Equation \eqref{eq.attack_influence_condition} holds and the attack influence $\influence^{\defensepi}$ is bounded by 
 \begin{align*}
     \influence^{\defensepi} \le \max \left \{\widehat \influence,
     \quad
     \frac{1 + 2 \cdot \frac{\beta^{\occstate}}{\occstate_{\min}^2}}{2+\frac{\beta^{\occstate}}{\occstate_{\min}^2}}\cdot [\influence^{\targetpi} +  \epsA] + [\widehat \influence - \epsA]\right \}.
 \end{align*}
 \end{theorem}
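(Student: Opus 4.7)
The plan is to derive the bound by reducing to Theorem \ref{thm.attack_influence_a}. Two ingredients are needed: verifying that condition \eqref{eq.attack_influence_condition} holds under the hypothesis $\beta^{\occstate} \le \occstate_{\min}^2$, and producing an upper bound $\zeta \le (1 + 2u)/(1 - u)$ with $u = \beta^{\occstate}/\occstate_{\min}^2$. A short algebraic check then gives $\zeta/(1+\zeta) \le (1 + 2u)/(2 + u)$, which matches the coefficient in the claimed inequality, so substituting into Theorem \ref{thm.attack_influence_a} closes the argument.

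For condition \eqref{eq.attack_influence_condition}, I would expand $\alignment(\targetpi)$ using that $\occupancy^{\neighbor{\targetpi}{s}{a}}$ and $\occupancy^{\targetpi}$ are supported identically except at $(s, a)$, at $(s, \targetpi(s))$, and at pairs of the form $(s', \targetpi(s'))$ for $s' \ne s$. A direct computation yields
\begin{align*}
\alignment(\targetpi) = -\occstate^{\targetpi}(s)^2 + \sum_{s' \ne s} \bigl(\occstate^{\neighbor{\targetpi}{s}{a}}(s') - \occstate^{\targetpi}(s')\bigr)\, \occstate^{\targetpi}(s').
\end{align*}
The residual sum is bounded in magnitude by $\beta^{\occstate}$, since each factor of the form $\occstate^{\neighbor{\targetpi}{s}{a}}(s') - \occstate^{\targetpi}(s')$ has absolute value at most $\beta^{\occstate}$ and $\sum_{s'} \occstate^{\targetpi}(s') = 1$. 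Hence $\alignment(\targetpi) \le -\occstate_{\min}^2 + \beta^{\occstate}$, which is non-positive by hypothesis. Combined with $\alignment(\defensepi) \ge 0$ from Theorem \ref{thm.defense_optimization.known_epsilon}, this establishes the condition.

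For the bound on $\zeta$, I would fix $(s, a) \in \tightsettarget$ and analyze $\alignment(\pi)$ for an arbitrary deterministic $\pi$ via the same support structure. The denominator is immediate: $\alignment(\defensepi) - \alignment(\targetpi) \ge -\alignment(\targetpi) \ge \occstate^{\targetpi}(s)^2 - \beta^{\occstate}$. The goal for the numerator is to show $\alignment(\pi) - \alignment(\defensepi) \le \occstate^{\targetpi}(s)^2 + 2\beta^{\occstate}$. Combining the two yields $\zeta \le (\occstate^{\targetpi}(s)^2 + 2\beta^{\occstate})/(\occstate^{\targetpi}(s)^2 - \beta^{\occstate})$, and since this ratio is monotonically decreasing in $\occstate^{\targetpi}(s)$, its supremum over $(s, a) \in \tightsettarget$ is attained as $\occstate^{\targetpi}(s) \to \occstate_{\min}$, giving the desired $(1 + 2u)/(1 - u)$.

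The hard part is proving the numerator inequality uniformly over all deterministic $\pi$. A naive application of $\occupancy^{\pi}(s', a') \le \occstate^{\pi}(s') \le 1$ yields only a first-order bound $\alignment(\pi) \le \occstate^{\targetpi}(s) + O(\beta^{\occstate})$, which is too loose because $\occstate^{\targetpi}(s)^2 \ll \occstate^{\targetpi}(s)$ when $\occstate^{\targetpi}(s)$ is small. Closing this gap will require either exploiting $\alignment(\defensepi) \ge 0$ to cancel the leading first-order term inside $\alignment(\pi)$, or a careful case split on the value $\pi(s)$ combined with $\sum_{s'} \occstate^{\pi}(s') = 1$ to balance positive and negative contributions on the complement of $s$. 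Once this numerator bound is in hand, the remaining algebraic step from $\zeta \le (1+2u)/(1-u)$ to $\zeta/(1+\zeta) \le (1+2u)/(2+u)$ is routine, and the full bound on $\influence^{\defensepi}$ follows immediately from Theorem \ref{thm.attack_influence_a}.
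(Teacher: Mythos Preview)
Your overall structure matches the paper exactly: reduce to Theorem~\ref{thm.attack_influence_a}, verify condition~\eqref{eq.attack_influence_condition} by showing $\alignment(\targetpi)\le \beta^{\occstate}-\occstate^{\targetpi}(s)^2\le 0$ and combining with $\alignment(\defensepi)\ge 0$, lower-bound the denominator of $\zeta$ by $\occstate^{\targetpi}(s)^2-\beta^{\occstate}$, upper-bound the numerator, and then do the monotone algebra $\zeta\mapsto \zeta/(1+\zeta)$. The condition check and the denominator bound are both correct and are exactly what the paper does.

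The gap is the numerator. Your stated target $\alignment(\pi)-\alignment(\defensepi)\le \occstate^{\targetpi}(s)^2+2\beta^{\occstate}$ is sharper than what the paper proves, and there is no reason to expect it to hold: the quantity $\beta^{\occstate}$ only controls the state-occupancy change under a \emph{single} action swap, whereas an arbitrary deterministic $\pi$ can differ from $\targetpi$ in every state, so $\|\occstate^{\pi}-\occstate^{\targetpi}\|_{\infty}$ need not be $O(\beta^{\occstate})$. Neither of your proposed fixes (cancellation against $\alignment(\defensepi)\ge 0$, or a case split on $\pi(s)$) addresses this accumulation. The paper instead proves the weaker numerator bound
\[
\alignment(\pi)\;\le\;\occstate^{\targetpi}(s)^2 \;+\; 2\,\frac{\beta^{\occstate}}{\occstate_{\min}}\,\occstate^{\targetpi}(s),
\]
which after dividing by $\occstate^{\targetpi}(s)^2$ and using $\occstate^{\targetpi}(s)\ge \occstate_{\min}$ still yields $\zeta\le (1+2u)/(1-u)$.

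The missing ingredient that makes this work is a separate lemma (Lemma~\ref{lemma.fraction.bound} in the paper) bounding $\|\occstate^{\pi}-\occstate^{\targetpi}\|_{\infty}\le \beta^{\occstate}/\occstate_{\min}$ for \emph{every} deterministic $\pi$. Its proof is the non-obvious step: one picks an auxiliary reward $R$ with $R(s',a')=\ind{s'=s_0}$ so that scores coincide with state occupancies, and then applies the performance-difference lemma $\score^{\pi}-\score^{\targetpi}=\sum_{s'}\occstate^{\pi}(s')\bigl(Q^{\targetpi}(s',\pi(s'))-Q^{\targetpi}(s',\targetpi(s'))\bigr)$ to decompose the gap into single-action advantages, each of which is controlled by $\beta^{\occstate}/\occstate_{\min}$. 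This is what bridges from neighboring-policy deviations to arbitrary $\pi$, and it is precisely the piece your sketch is missing.
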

 The bounds in Theorem \ref{thm.attack_influence_a} and 
 Theorem
 \ref{cor.attack_influence_a} have dependency on $\widehat \influence = \widehat \rho^{\targetpi} - \widehat \rho^{\defensepi}$. This quantity is analogous to the notion of influence $\influence$ defined on $\overline{R}$---it measures how suboptimal $\defensepi$ is under $\widehat R$ in terms of score $\widehat \rho$. 
 Moreover, note that when $\beta^{\occstate} = 0$, the factor that multiplies influence $\influence^{\targetpi}$ is equal to $\frac{1}{2}$. We further discuss
 this special case
 and the tightness of this bound in
 the Appendix.

As mentioned earlier, the bounds in Theorem \ref{thm.attack_influence_a} and
 Theorem 
\ref{cor.attack_influence_a} require the condition in Equation \eqref{eq.attack_influence_condition} to hold. 
The next theorem shows that this condition is indeed necessary for establishing these bounds.
\begin{theorem}\label{prop.influence_impossibility}
Fix the poisoned reward function $\widehat{R}$, and assume that there exists state-action pair $(s, a) \in \tightsettarget$ such that $\alignment(\defensepi) < \alignment(\targetpi)$. Then, for any $\delta > 0$, there exists a reward function $\overline{R}$ such that $\widehat R = \attack(\overline R, \targetpi,\epsA)$ and $\influence^{\defensepi} \ge  \influence^{\targetpi} + \delta$.
\end{theorem}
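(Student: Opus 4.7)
The plan is to apply Lemma~\ref{lm.tightset} to construct an explicit one-parameter family of candidate ground-truth rewards $\overline R_\alpha$ that all satisfy $\widehat R = \attack(\overline R_\alpha, \targetpi, \epsA)$, and then to choose the parameter so that $\influence^{\defensepi}$ exceeds $\influence^{\targetpi}$ by at least $\delta$. Fix the pair $(s,a) \in \tightsettarget$ guaranteed by the hypothesis to satisfy $\alignment(\defensepi) < \alignment(\targetpi)$, set $\alpha_{s',a'} = 0$ for every other pair in $\tightsettarget$, and define
\[
\overline R_\alpha \;=\; \widehat R \;+\; \alpha \cdot \bigl(\occupancy^{\neighbor{\targetpi}{s}{a}} - \occupancy^{\targetpi}\bigr), \qquad \alpha \ge 0.
\]
The \emph{if} direction of Lemma~\ref{lm.tightset} immediately certifies that $\widehat R = \attack(\overline R_\alpha, \targetpi, \epsA)$ for every $\alpha \ge 0$, so each such $\overline R_\alpha$ is a legitimate ground-truth reward.

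Next I would exploit the cancellation of $\optpi$ in the difference of influences. For any fixed $\overline R_\alpha$, the optimal policy $\optpi$ is a deterministic function of $\overline R_\alpha$, hence common to both $\influence^{\defensepi} = \overline \rho^{\optpi} - \overline \rho^{\defensepi}$ and $\influence^{\targetpi} = \overline \rho^{\optpi} - \overline \rho^{\targetpi}$, giving
\[
\influence^{\defensepi} - \influence^{\targetpi} \;=\; \overline \rho^{\targetpi} - \overline \rho^{\defensepi}.
\]
Combining $\rho^\pi = \vecdot{\occupancy^\pi}{R}$ with the definition of $\alignment$ yields the decomposition $\overline \rho^\pi = \widehat \rho^\pi + \alpha \cdot \alignment(\pi)$, and substituting $\pi = \targetpi, \defensepi$ gives
\[
\influence^{\defensepi} - \influence^{\targetpi} \;=\; \bigl(\widehat \rho^{\targetpi} - \widehat \rho^{\defensepi}\bigr) \;+\; \alpha \cdot \bigl(\alignment(\targetpi) - \alignment(\defensepi)\bigr).
\]

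Since the hypothesis forces the coefficient of $\alpha$ to be strictly positive, the right-hand side grows without bound in $\alpha$, and any
\[
\alpha \;\ge\; \max\!\Bigl\{\,0,\; \tfrac{\delta \,-\, (\widehat \rho^{\targetpi} - \widehat \rho^{\defensepi})}{\alignment(\targetpi) - \alignment(\defensepi)}\,\Bigr\}
\]
yields $\influence^{\defensepi} \ge \influence^{\targetpi} + \delta$, completing the argument. I do not expect any substantive obstacle: the only conceptual move is recognizing that $\overline \rho^{\optpi}$ drops out of the difference, which frees the argument from having to track how the unknown $\optpi$ varies with $\alpha$. The remaining ingredients---the parametrization from Lemma~\ref{lm.tightset}, the identity $\rho^\pi = \vecdot{\occupancy^\pi}{R}$, and linearity---are already in hand, and the single-coordinate choice of $\alpha_{s',a'}$ is precisely what makes the perturbation point in the direction that penalizes $\defensepi$ relative to $\targetpi$.
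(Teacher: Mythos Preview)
Your proposal is correct and follows essentially the same approach as the paper's proof: both construct $\overline R = \widehat R + \alpha\,(\occupancy^{\neighbor{\targetpi}{s}{a}} - \occupancy^{\targetpi})$ via Lemma~\ref{lm.tightset}, cancel $\overline\rho^{\optpi}$ in $\influence^{\defensepi} - \influence^{\targetpi}$, and use linearity to obtain a term $\alpha\,(\alignment(\targetpi) - \alignment(\defensepi))$ that can be made arbitrarily large. The only cosmetic difference is that the paper drops the nonnegative term $\widehat\rho^{\targetpi} - \widehat\rho^{\defensepi}$ via an inequality and then sets $\alpha = \delta / (\alignment(\targetpi) - \alignment(\defensepi))$, whereas you keep the exact equality and absorb that term into your choice of $\alpha$.
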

We conclude this section by noting that the attack influence analysis (the bounds on $\influence^{\defensepi}$ in Theorems~\ref{thm.attack_influence_a} and \ref{cor.attack_influence_a}) is somewhat orthogonal to the worst-case score analysis (the bound on $\overline\score^{\defensepi}$ in Theorem \ref{thm.defense_optimization.known_epsilon}).
Importantly, $\targetpi$ can be much worse than 
$\defensepi$ in terms of the worst-case guarantees on score $\overline \score$.
In fact, while we certify that
$\overline{\score}^{\defensepi} \ge \widehat{\score}^{\defensepi}$, 
in general the worst-case value of
$\overline{\score}^{\targetpi}$ can be arbitrarily low.
	

\section{Unknown Parameter Setting}\label{sec.defense_characterization_general_MDP.unknown_epsilon}
In this subsection, we focus on the optimization problem \eqref{prob.defense_b}. 
First, note the structural difference between 
\eqref{prob.defense_a} and \eqref{prob.defense_b}. In the former case, 
$\epsA$ is given, and hence, the defense can infer possible values of $\overline R$ by solving an inverse problem to the attack problem $\eqref{prob.attack}$. In particular, we know that the original reward function $\overline{R}$ has to be in the set $\{R: \widehat R = \attack(R, \targetpi, \epsA) \}$. 
In the latter case, $\epsA$ is not known, and instead we use parameter $\epsD$ as an upper bound on $\epsA$. We distinguish two cases:
\begin{itemize}
    \item {\em Overestimating Attack Parameter}: If $\epsA \le \epsD$, then we know that $\overline{R}$ is in the set $\{R: \widehat R = \attack(R, \targetpi, \epsilon) \text{ s.t. } 0 < \epsilon \le \epsD \}$. Note that this set is a super-set of $\{R: \widehat R = \attack(R, \targetpi, \epsA) \}$, which means that it is less informative about $\overline{R}$. 
    \item {\em Underestimating Attack Parameter}: If $\epsA > \epsD$, then the set $\{R: \widehat R = \attack(R, \targetpi, \epsilon) \text{ s.t. } 0 < \epsilon \le \epsD\}$ will have only a single element, i.e., $\widehat R$. In other words, this set typically contains no information about $\overline R$.   
\end{itemize}
We analyze these two cases separately, first focusing on the former one. The proofs of our theoretical results can be found in the Appendix.


\subsection{Overestimating Attack Parameter }\label{sec.defense_characterization_general_MDP.unknown_epsilon.overestimating}
When $\epsD \ge \epsA$, our formal analysis builds on the one presented in Section \ref{sec.defense_characterization_general_MDP.known_epsilon}, and we highlight the main differences. 
Given that $\epsA$ is not exactly known, we cannot directly operate on the set $\tightsettarget$.
However, since $\epsD$ upper bounds $\epsA$, the defense can utilize the procedure from the previous section (Theorem \ref{thm.defense_optimization.known_epsilon}) with appropriately chosen $\epsilon$ to solve \eqref{prob.defense_b}
as we show in the following theorem.
\begin{theorem}\label{thm.overestimate_attack_param}
Assume that $\epsD \ge \epsA$, and define $\widehat \epsilon = \min_{s, a\ne \targetpi(s)} \left [ \widehat \score^{\targetpi}-\widehat \score^{\neighbor{\targetpi}{s}{a}} \right]$. 
Then, the optimization problem \eqref{prob.defense_optimization.known_epsilon} with $\epsilon = \min \{ \epsD, \widehat \epsilon \}$ is feasible and its optimal solution $\occupancy_{\max}$ identifies an optimal policy $\defensepi$ for the optimization problem \eqref{prob.defense_b} via Equation
\eqref{eq.defense_optimization.known_epsilon}. 
This policy $\defensepi$ satisfies $\overline \score^{\defensepi} \ge \widehat \score^{\defensepi}$. Furthermore, if the condition in Equation \eqref{eq.attack_influence_condition} holds, 
the attack influence of policy $\defensepi$ is bounded as in Equation \eqref{eq.thm_attack_influence}.
\end{theorem}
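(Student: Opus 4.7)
The plan is to reduce the min-max problem \eqref{prob.defense_b} to an instance of \eqref{prob.defense_optimization.known_epsilon} with $\epsilon = \min\{\epsD, \widehat\epsilon\}$, so that the existence and construction of $\defensepi$, together with the score bound $\overline\score^{\defensepi} \ge \widehat\score^{\defensepi}$, follow from Theorem \ref{thm.defense_optimization.known_epsilon}, while the influence bound transfers from the argument of Theorem \ref{thm.attack_influence_a}.

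First I would observe that $\widehat\epsilon \ge \epsA$: since $\widehat R = \attack(\overline R, \targetpi, \epsA)$, the constraints of \eqref{prob.attack.neighbor} yield $\widehat \score^{\targetpi} - \widehat \score^{\neighbor{\targetpi}{s}{a}} \ge \epsA$ for every $(s, a)$ with $a \ne \targetpi(s)$, so the minimum of these differences (namely $\widehat\epsilon$) is at least $\epsA$. Next I would characterize the feasible set of $(R, \epsilon)$ in \eqref{prob.defense_b}. A pair is feasible iff $\epsilon \in (0, \epsD]$ and $\widehat R$ solves the attacker's problem \eqref{prob.attack.neighbor} with data $(R, \targetpi, \epsilon)$; the latter requires $\widehat R$ to satisfy the attacker's constraints with gap $\epsilon$, i.e., $\epsilon \le \widehat\epsilon$. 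Thus the feasible range for $\epsilon$ is $(0, \min\{\epsD, \widehat\epsilon\}]$, and Lemma \ref{lm.tightset} (stated for generic $\epsilon$ in place of $\epsA$; its proof is parameter-agnostic) parametrizes the $\epsilon$-slice as
\begin{align*}
R \;=\; \widehat R + \sum_{(s,a) \in \tightset} \alpha_{s,a}\,(\occupancy^{\neighbor{\targetpi}{s}{a}} - \occupancy^{\targetpi}), \qquad \alpha_{s,a} \ge 0.
\end{align*}
Because every gap $\widehat \score^{\targetpi} - \widehat \score^{\neighbor{\targetpi}{s}{a}}$ is at least $\widehat\epsilon$, the set $\tightset$ is empty whenever $\epsilon < \widehat\epsilon$, so the union of tight sets over the feasible range equals $\Theta^{\min\{\epsD, \widehat\epsilon\}}$ (empty if $\widehat\epsilon > \epsD$; equal to $\Theta^{\widehat\epsilon}$ otherwise).

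Using $\score^\pi = \vecdot{\occupancy^\pi}{\widehat R} + \sum_{(s,a)\in\tightset}\alpha_{s,a}\alignment(\pi)$, the inner minimum in \eqref{prob.defense_b} equals $\vecdot{\occupancy^\pi}{\widehat R}$ when $\alignment(\pi) \ge 0$ for every $(s, a) \in \Theta^{\min\{\epsD, \widehat\epsilon\}}$, and $-\infty$ otherwise. Hence \eqref{prob.defense_b} collapses to maximizing $\vecdot{\occupancy^\pi}{\widehat R}$ subject to $\alignment(\pi) \ge 0$ for all $(s,a) \in \Theta^{\min\{\epsD, \widehat\epsilon\}}$, which in occupancy-measure variables is exactly \eqref{prob.defense_optimization.known_epsilon} with $\epsilon = \min\{\epsD, \widehat\epsilon\}$. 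Feasibility of this instance and the validity of the recipe for $\defensepi$ via Equation \eqref{eq.defense_optimization.known_epsilon} follow verbatim from Theorem \ref{thm.defense_optimization.known_epsilon}, whose proof only uses that $\widehat R$ satisfies the attacker's constraints with the chosen gap (which holds by construction since $\min\{\epsD, \widehat\epsilon\} \le \widehat\epsilon$).

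For the score bound I would split into cases. If $\epsA < \widehat\epsilon$, then $\tightsettarget$ is empty and Lemma \ref{lm.tightset} forces $\overline R = \widehat R$, so $\overline\score^{\defensepi} = \widehat\score^{\defensepi}$ trivially. If $\epsA = \widehat\epsilon$, then $\epsD \ge \epsA = \widehat\epsilon$, so $\tightsettarget = \Theta^{\widehat\epsilon} = \Theta^{\min\{\epsD, \widehat\epsilon\}}$; the \eqref{prob.defense_optimization.known_epsilon} constraints give $\alignment(\defensepi) \ge 0$ on exactly this set, and applying Lemma \ref{lm.tightset} to $\overline R$ and taking the inner product with $\occupancy^{\defensepi}$ yields $\overline\score^{\defensepi} = \widehat\score^{\defensepi} + \sum \alpha_{s,a}\alignment(\defensepi) \ge \widehat\score^{\defensepi}$. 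The influence bound follows by re-running the argument of Theorem \ref{thm.attack_influence_a} with $\tightsettarget$ replaced by $\Theta^{\min\{\epsD, \widehat\epsilon\}}$: in the non-trivial case $\epsA = \widehat\epsilon$ the two sets coincide, and in the case $\epsA < \widehat\epsilon$ every $\overline{\cdot}$ quantity collapses to its $\widehat{\cdot}$ counterpart, so $\influence^{\defensepi} = \widehat\influence$, consistent with \eqref{eq.thm_attack_influence} at $\zeta = 0$. The main obstacle, I expect, is cleanly establishing the union-of-tight-sets identity and handling the dichotomy between $\epsA < \widehat\epsilon$ (in which the attacker has been \emph{wasteful} and $\overline R = \widehat R$) and $\epsA = \widehat\epsilon$ (in which the Theorem \ref{thm.defense_optimization.known_epsilon} machinery applies on the matching tight set).
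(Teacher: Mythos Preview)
Your proposal is correct and follows essentially the same approach as the paper. The paper packages the ``union of tight sets'' characterization into a separate lemma (proving that the feasible set of \eqref{prob.defense_b} is exactly the cone generated by $\Theta^{\min\{\epsD,\widehat\epsilon\}}$ via KKT), whereas you derive it inline by observing that $\Theta^{\tilde\epsilon}=\emptyset$ for $\tilde\epsilon<\widehat\epsilon$; and the paper's case split for the score and influence bounds is $\overline R=\widehat R$ versus $\overline R\neq\widehat R$ (the latter forcing $\widehat\epsilon=\epsA$), which is equivalent to your dichotomy $\epsA<\widehat\epsilon$ versus $\epsA=\widehat\epsilon$.
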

To interpret the bounds, let us consider three cases: 
\begin{itemize}
    \item \looseness-1$\overline{R} \ne \widehat{R}$: If the attack indeed poisoned $\overline{R}$, then the smallest $\epsilon' \in (0, \epsD]$ such that $\tightsetprime \ne \emptyset$ corresponds to $\epsA$. In this case, it turns out that 
    $\epsA = \widehat \epsilon$, and somewhat surprisingly, the defense policies of \eqref{prob.defense_a} and \eqref{prob.defense_b} coincide. (Note that this analysis assumes that $\epsD \ge \epsA$.)
    \item $\overline{R} = \widehat{R}$ and $\epsD < \widehat \epsilon$: This corresponds to the case when the attack did not poison $\overline{R}$ and there is no $\epsilon' \in (0, \epsD]$ such that $\tightsetprime \ne \emptyset$. In this case, it turns out that the optimal solution to the optimization problem \eqref{prob.defense_b} is $\defensepi = \targetpi$ (indeed $\targetpi$ is uniquely optimal under $\overline{R}$).
    \item $\overline{R} = \widehat{R}$ and $\epsD \ge \widehat \epsilon$: This corresponds to the case when the attack did not poison $\overline{R}$ and there is $\epsilon' \in (0, \epsD]$ such that $\tightsetprime \ne \emptyset$. In fact, $\widehat \epsilon$ is the smallest such $\epsilon'$. In this case, it turns out that, in general, the optimal solution to the optimization problem \eqref{prob.defense_b} is $\defensepi \ne \targetpi$, even though $\targetpi$ is uniquely optimal under $\overline{R}$. 
\end{itemize}    
These three cases also showcase the importance of choosing $\epsD$ that is a good upper bound on $\epsA$. When $\overline{R} = \widehat{R}$, the agent should select $\epsD$ that is strictly smaller than $\widehat \epsilon$. On the other hand, when $\overline{R} \ne \widehat{R}$, the agent should select $\epsD \ge \widehat \epsilon$, as it will be apparent from the result of the next subsection (Theorem \ref{thm.underestimate_attack_param}). While the agent knows $\widehat \epsilon$, it does not know if $\overline{R} = \widehat{R}$ or $\overline{R} \ne \widehat{R}$.  

\subsection{Underestimating Attack Parameter }\label{sec.defense_characterization_general_MDP.unknown_epsilon.underestimating} 

In this subsection, we analyze the case when $\epsD < \epsA$. We first state our result, and then discuss its implications. 

\begin{theorem}\label{thm.underestimate_attack_param}
If $\epsA > \epsD$, then
$\targetpi$ is the unique solution of
the optimization problem \eqref{prob.defense_b}.
Therefore
$\defensepi=\targetpi$ and
$\influence^{\targetpi} = \influence^{\defensepi}$.
\end{theorem}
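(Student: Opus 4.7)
The plan is to show that when $\epsA > \epsD$, the inner feasible set of \eqref{prob.defense_b} collapses to the single reward function $R = \widehat{R}$ (paired with any admissible $\epsilon$), which reduces the whole problem to $\max_{\pi} \widehat{\score}^{\pi}$. Since $\targetpi$ is uniquely optimal under $\widehat{R}$, the outer maximizer is forced to be $\targetpi$, and the identity $\influence^{\defensepi} = \influence^{\targetpi}$ follows for free.

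First, I would observe that $\widehat{R} = \attack(\overline{R}, \targetpi, \epsA)$ is by definition a solution of the attacker's problem \eqref{prob.attack.neighbor} with parameter $\epsA$, so it satisfies the constraints of that problem evaluated with itself as the reward. Concretely, $\widehat{\score}^{\targetpi} \ge \widehat{\score}^{\neighbor{\targetpi}{s}{a}} + \epsA$ for every $s$ and every $a \ne \targetpi(s)$. Combined with the hypothesis $\epsA > \epsD \ge \epsilon$ for any $\epsilon \in (0, \epsD]$, this gives a strict gap $\widehat{\score}^{\targetpi} - \widehat{\score}^{\neighbor{\targetpi}{s}{a}} \ge \epsA > \epsilon$, and hence the tight set $\tightset = \{(s, a) : \widehat{\score}^{\neighbor{\targetpi}{s}{a}} - \widehat{\score}^{\targetpi} = -\epsilon\}$ is empty for every admissible $\epsilon$.

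Second, I would invoke Lemma \ref{lm.tightset} (in its obvious generalization from $\epsA$ to an arbitrary parameter $\epsilon$; the KKT derivation of the lemma does not use anything specific about $\epsA$): any $R$ obeying $\widehat{R} = \attack(R, \targetpi, \epsilon)$ admits the expansion $R = \widehat{R} + \sum_{(s,a) \in \tightset} \alpha_{s,a} \cdot (\occupancy^{\neighbor{\targetpi}{s}{a}} - \occupancy^{\targetpi})$ with $\alpha_{s,a} \ge 0$. Since $\tightset = \emptyset$ in our regime, this sum is empty and the only candidate reward is $R = \widehat{R}$. Feasibility at $R = \widehat{R}$ is immediate: $\widehat{R}$ already meets the constraints of \eqref{prob.attack.neighbor} with parameter $\epsilon$ strictly, and it trivially minimizes $\|R' - \widehat{R}\|_2$ (attaining zero), so $\attack(\widehat{R}, \targetpi, \epsilon) = \widehat{R}$ for every $\epsilon \in (0, \epsD]$.

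Finally, substituting the collapsed feasible set into \eqref{prob.defense_b}, the inner minimization over $(R, \epsilon)$ evaluates to $\vecdot{\occupancy^{\pi}}{\widehat{R}} = \widehat{\score}^{\pi}$ for every $\pi$. The outer maximization is therefore $\max_{\pi} \widehat{\score}^{\pi}$, whose unique solution is $\targetpi$ by the defining property of the attack. Consequently $\defensepi = \targetpi$, and plugging into the definition $\influence^{\pi} = \overline{\score}^{\optpi} - \overline{\score}^{\pi}$ immediately gives $\influence^{\defensepi} = \influence^{\targetpi}$. The only delicate point I anticipate is justifying the parameter-generalization of Lemma \ref{lm.tightset}: the argument is structurally the same, but one must verify that the KKT multipliers of \eqref{prob.attack.neighbor} with parameter $\epsilon$ yield the same expansion, and that the trivial feasibility of $\widehat{R}$ at $R = \widehat{R}$ makes the attack map well-defined at this point.
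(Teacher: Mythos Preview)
Your proposal is correct and follows essentially the same route as the paper: both show that the strict gap $\widehat{\score}^{\targetpi} - \widehat{\score}^{\neighbor{\targetpi}{s}{a}} \ge \epsA > \epsD \ge \epsilon$ forces $\tightset = \emptyset$, which via the KKT characterization collapses the inner feasible set to $R = \widehat{R}$ and reduces \eqref{prob.defense_b} to $\max_{\pi} \widehat{\score}^{\pi}$. The only cosmetic difference is that the paper packages the parameter-generalized version of Lemma~\ref{lm.tightset} as a separate statement (Lemma~\ref{lm.tightset.unknown}, which characterizes the union over all $\epsilon \in (0,\epsD]$ at once), whereas you invoke it per~$\epsilon$; your anticipated ``delicate point'' is handled there exactly as you describe.
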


Therefore, together with Theorem \ref{thm.overestimate_attack_param}, Theorem \ref{thm.underestimate_attack_param} is showing the importance of having a good prior knowledge about the attack parameter $\epsA$. In particular:
\begin{itemize}
    \item When the attack did not poison the reward function (i.e., $\widehat{R} = \overline{R}$), overestimating $\epsA$ implies that $\defensepi$ might not be equal to $\targetpi$ for larger values of $\epsD$, even though  $\targetpi$ is uniquely optimal under $\overline{R}$. This can have a detrimental effect in terms of attack influence as  $\influence^{\defensepi} > \influence^{\targetpi} = 0$.
    \item When the attack did poison the reward function $\overline{R}$ (i.e., $\widehat{R} \ne \overline{R}$), underestimating $\epsA$ implies $\defensepi = \targetpi$, but $\targetpi$ might be suboptimal. In this case, the defense policy does not limit the influence of the attack at all, i.e., $\influence^{\defensepi} = \influence^{\targetpi} \ge 0$. 
\end{itemize}
We further discuss nuances to selecting $\epsD$ in Section~\ref{sec.conclusion}.
	
	\section{Experimental Evaluation}\label{sec.experiments}
While Figure \ref{fig.example} shows our policy in a simple setting, in this section we evaluate our defense strategy on additional environments in order to better understand its efficacy and robustness.
In the experiments, 
due to limited numerical precision, $\tightset$ is calculated with a tolerance parameter,
which we set to $10^{-4}$ by default.\footnote{
The value was chosen because the CVXPY solver (\cite{diamond2016cvxpy, agrawal2018rewriting}) uses a precision of $10^{-5}$.
}. In other words, $\tightset = \{(s, a): 
|\hatscore^{\targetpi} - \hatscore^{\neighbor{\targetpi}{s}{a}} - \epsilon| \le 10^{-4}\}$.
~\\
\begin{wrapfigure}[23]{r}{0.35\textwidth}
\centering
\begin{subfigure}{\linewidth}
\centering
    \resizebox{\textwidth}{!}{%
		\begin{tikzpicture}

\node[state, minimum size=1.5cm] (s0) {\scalebox{2.3}{$s_0$}};
\node[state, right=0.6 of s0, minimum size=1.5cm] (s1) {\scalebox{2.3}{$s_1$}};
\node[state, right=0.6 of s1, minimum size=1.5cm] (s2) {\scalebox{2.3}{$s_2$}};
\node[state, right=0.6 of s2, minimum size=1.5cm] (s3) {\scalebox{2.3}{$s_3$}};
\node[state, right=1 of s3, minimum size=1.5cm] (s4) {\scalebox{2.3}{$s_4$}};
\node[state, above right=0.6 of s4, minimum size=1.5cm] (s5) {\scalebox{2.3}{$s_5$}};
\node[state, above right=0.6 of s5, minimum size=1.5cm] (s6) {\scalebox{2.3}{$s_6$}};
\node[state, below right=0.6 of s4, minimum size=1.5cm] (s7) {\scalebox{2.3}{$s_7$}};
\node[state, below right=0.6 of s7, minimum size=1.5cm] (s8) {\scalebox{2.3}{$s_8$}};

\draw[black, ->,>=stealth, line width=4pt](0.00,-1.20) -- (0.60, -1.20);
\draw[black, ->,>=stealth, line width=4pt](2.15,-1.20) -- (2.75, -1.20);
\draw[black, ->,>=stealth, line width=4pt](4.30,-1.20) -- (4.90, -1.20);
\draw[black, ->,>=stealth, line width=4pt](6.45,-1.20) -- (7.05, -1.20);
\draw[black, ->,>=stealth, line width=4pt](8.70,-1.20) -- (9.10, -1.60);
\draw[black, ->,>=stealth, line width=4pt](9.80,-2.30) -- (10.20, -2.70);
\draw[black, ->,>=stealth, line width=4pt](11.20,-3.70) -- (11.20, -4.30);
\draw[black, ->,>=stealth, line width=4pt](9.80, 2.35) -- (9.40, 1.95);
\draw[black, ->,>=stealth, line width=4pt](11.30, 4.00) -- (10.90, 3.60);

\draw (s0) edge[loop left, below, pos=0.2] node{\scalebox{0.8}{\Large }} (s0);
\draw (s0) edge[bend left, above] node{\scalebox{0.8}{\Large }} (s1);
\draw (s1) edge[bend left, below] node{\scalebox{0.8}{\Large }} (s0);
\draw (s1) edge[bend left, above] node{\scalebox{0.8}{\Large }} (s2);
\draw (s2) edge[bend left, below] node{\scalebox{0.8}{\Large }} (s1);
\draw (s2) edge[bend left, above] node{\scalebox{0.8}{\Large }} (s3);
\draw (s3) edge[bend left, below] node{\scalebox{0.8}{\Large }} (s2);
\draw (s3) edge[bend left, below] node{\scalebox{0.8}{\Large }} (s4);

\draw (s4) edge[bend left, left] node{\scalebox{0.8}{\Large }} (s5);
\draw (s4) edge[bend left, right] node{\scalebox{0.8}{\Large }} (s7);

\draw (s5) edge[bend left, left] node{\scalebox{0.8}{\Large }} (s6);
\draw (s5) edge[bend left, right] node{\scalebox{0.8}{\Large }} (s4);

\draw (s7) edge[bend left, left] node{\scalebox{0.8}{\Large }} (s4);
\draw (s7) edge[bend left, right] node{\scalebox{0.8}{\Large }} (s8);

\draw (s8) edge[bend left, left] node{\scalebox{0.8}{\Large }} (s7);
\draw (s6) edge[bend left, right] node{\scalebox{0.8}{\Large }} (s5);

\draw (s6) edge[loop above] node{\scalebox{0.8}{\Large }} (s6);
\draw (s8) edge[loop below] node{\scalebox{0.8}{\Large }} (s8);
\end{tikzpicture}
	}
	\caption{\label{fig.env.nav}The navigation environment}
\end{subfigure}
\begin{subfigure}{\linewidth}
\centering
	\resizebox{0.84\textwidth}{!}{%
		\begin{tikzpicture}
	\draw[step=2cm,black,thin,opacity=0.5] (0,0) grid (14,12);
	\fill[black] (0,2) rectangle (2,12);
	\fill[black] (4,0) rectangle (14,2);
	\fill[black] (4,4) rectangle (10,8);
	\fill[black] (4,10) rectangle (8,12);
	\fill[black] (12,0) rectangle (14,12);
	\fill[black] (10,10) rectangle (12,12);
	\fill[gray,opacity=0.8] (2,4) rectangle (4,8);
	\fill[blue,opacity=0.5] (2,10) rectangle (4,12);
	\node at (1,1) {\scalebox{1.5}{\LARGE S}};
	\node at (3,11) {\scalebox{1.5}{\LARGE G}};
	
	
%
	
	
	\draw [black,->,>=stealth,  line width=4pt] (1.5,1) -- (2.5,1);
	
	\draw [black,->,>=stealth, line width = 4pt] (3,1.5) -- (3,2.5);
	
	\draw [black,->,>=stealth, line width = 4pt] (3,3.5) -- (3,4.5);
	
		\draw [black,->,>=stealth, line width = 4pt] (5.5,3) -- (6.5,3);
	
		\draw [black,->,>=stealth, line width=4pt] (7.5,3) -- (8.5,3);
	
		\draw [black,->,>=stealth, line width=4pt] (9.5,3) -- (10.5,3);
	%
		\draw [black,->,>=stealth, line width=4pt] (11,3.5) -- (11,4.5);
	
		\draw [black,->,>=stealth, line width=4pt] (11,5.5) -- (11,6.5);
	%
	\draw [black,->,>=stealth, line width=4pt] (11,7.5) -- (11,8.5);
	%
		\draw [black,->,>=stealth, line width=4pt] (10.5,9) -- (9.5,9);
	
	\draw [black,->,>=stealth, line width=4pt] (8.5,9) -- (7.5,9);
	\draw [black,->,>=stealth, line width=4pt] (9,10.5) -- (9,9.5);
	
		\draw [black,->,>=stealth, line width=4pt] (6.5,9) -- (5.5,9);
	
	\draw [black,->,>=stealth, line width=4pt] (4.5,9) -- (3.5,9);
	
	\draw [black,->,>=stealth, line width=4pt] (3,9.5) -- (3,10.5);
	\draw [black,->,>=stealth, line width=4pt] (3,7.5) -- (3,8.5);
	
		\draw [black,->,>=stealth, line width=4pt] (3, 5.5) -- (3,6.5);
%
	
	\draw [black,->,>=stealth, line width=4pt] (3,11.5) -- (3,12.5);
	
\end{tikzpicture}
	}
    \caption{\label{fig.env.grid}The grid world environment}
\end{subfigure}
\caption{Environments}
\end{wrapfigure}
\textbf{Navigation environment}. 
Our first environment, shown in 
Figure \ref{fig.env.nav} is the Navigation environment taken from \cite{rakhsha2020policy-jmlr}. The environment has 9 states and 2 possible actions. The reward function is action independent and has the following values:
$\overline{R}(s_0, .)=\overline{R}(s_1, .)=\overline{R}(s_2, .)=\overline{R}(s_3, .)=-2.5$, $\overline{R}(s_4, .)=\overline{R}(s_5, .)=1$ and $\overline{R}(s_6, .)=\overline{R}(s_7, .)=\overline{R}(s_8, .)=0$.
When the agent takes an action, it will successfully navigate in the direction shown by the arrows with probability $0.9$; otherwise, the next state will be sampled uniformly at random. The bold arrows in the figure indicate the attacker's target policy. The initial state is $s_0$ and the discounting factor $\gamma$ equals $0.99$. 
~\\
\textbf{Grid world environment.}
For our second environment, shown in Figure \ref{fig.env.grid}, we use the grid world environment from \cite{ma2019policy} with slight modifications in order to ensure ergodicity --- we add a $10\%$ failure probability to each action, sampling the next state randomly in case of failure.
The environment has 18 states and 4 actions: \textit{up}, \textit{down}, \textit{right} and \textit{left}. The white, gray and blue cells in the figure represent the states and the black cells represent walls. In the white and gray states, the agent will attempt to go in the direction specified by its action if there is a neighboring state in that direction. If there is no such state, the agent will attempt to stay in its own place. 
In the blue state $G$, the agent will attempt to stay in its own place regardless of the action taken. In all states, each attempt will succeed with probability $0.9$; with probability $0.1$, the next state will be sampled uniformly at random.
	In the gray and white states, the agent's reward 
	is a function of the state it is attempting to visit. Attempting to visit a gray, white and blue state will yield a reward of $-10$, $-1$ and $2$ respectively. If the agent is in a blue state, it will always receive a reward of $0$. The bold arrows in the figure specify the attacker's target policy. The initial state is $S$ and the discounting factor $\gamma$ equals $0.9$.
\begin{figure*}[ht]
	\centering
    \begin{subfigure}[t]{.48\textwidth}
        \centering
		\resizebox{0.8\linewidth}{!}{\includegraphics{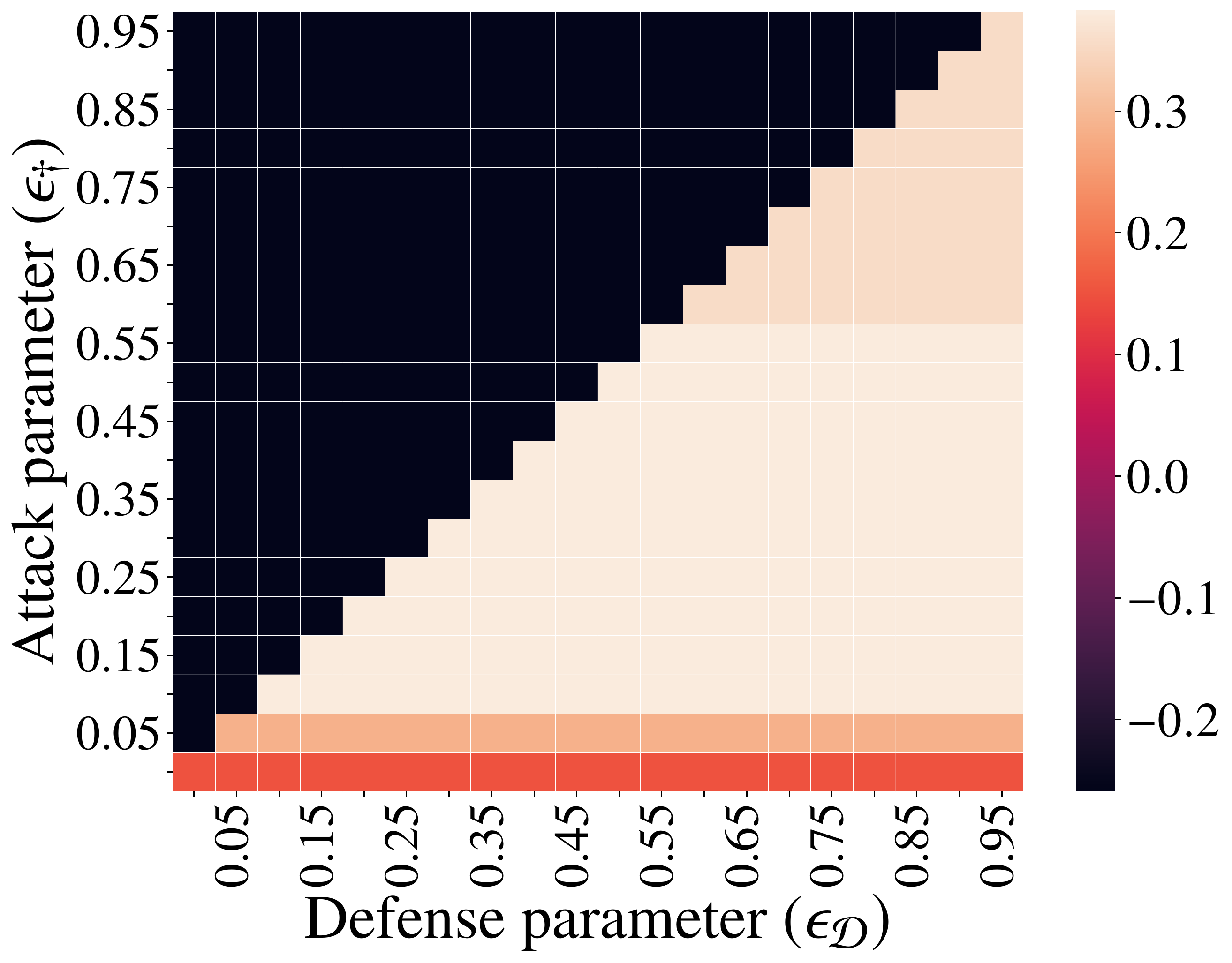}}
		\caption{$\barscore^{\defensepi}$  in the navigation environment.
		For comparison, $\barscore^{\targetpi}=-0.26$ and
		$\barscore^{\optpi}=0.45$.
		}\label{fig.heatmap.nav}
	\end{subfigure}
	\quad
	\begin{subfigure}[t]{.48\textwidth}
	    \centering
		\resizebox{0.8\linewidth}{!}{\includegraphics{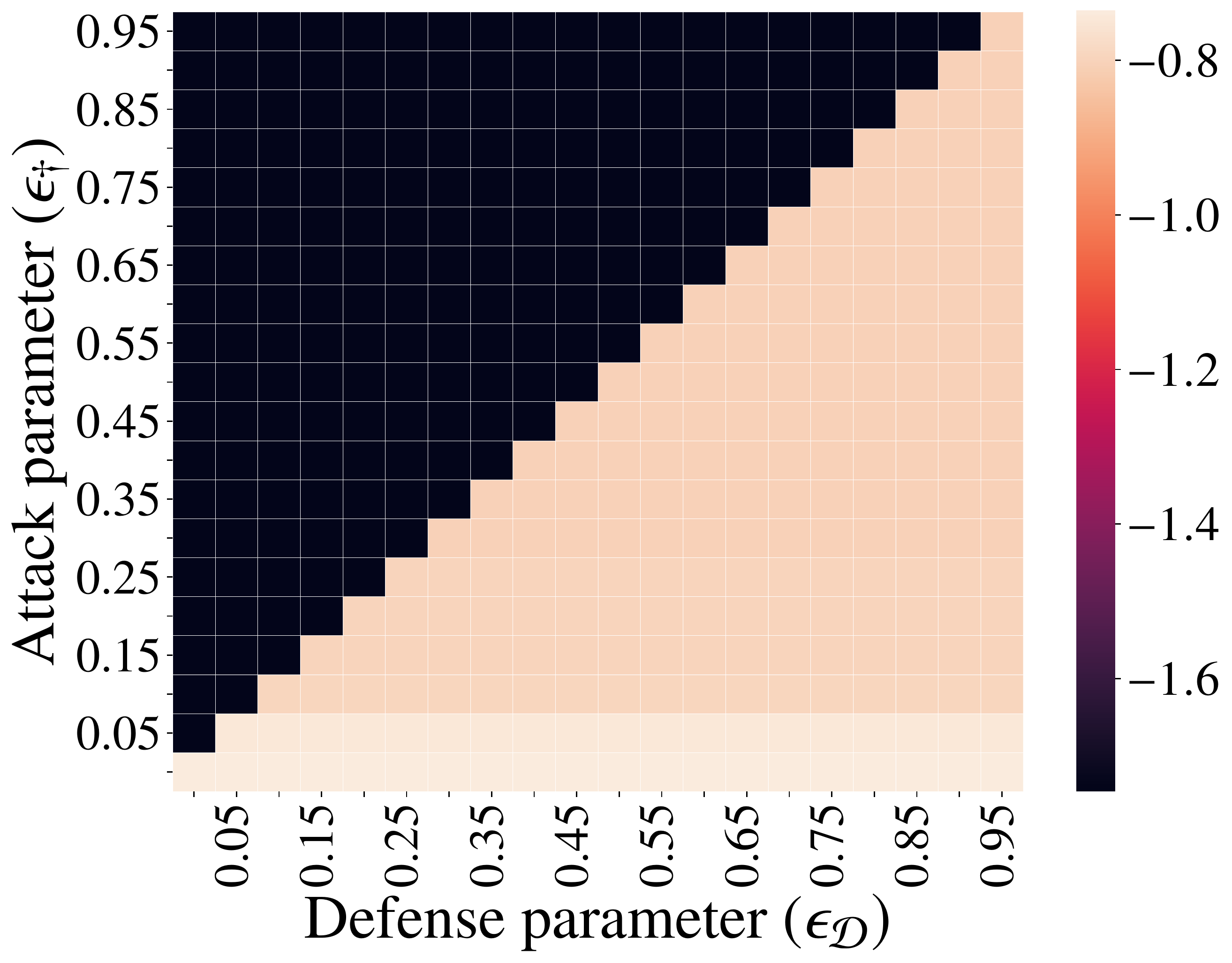}}
		\caption{$\barscore^{\defensepi}$  in the grid world environment.
		For comparison, $\barscore^{\targetpi}=-1.75$ and
		$\barscore^{\optpi}=-0.70$.
		}\label{fig.heatmap.grid}
	\end{subfigure}\\
	\caption{
	Score of the defense policy $\defensepi$ w.r.t $\overline{R}$ for different values of
	$\epsA$ and $\epsD$.
	}
	\vspace{-2mm}
	\label{fig.heatmap}
\end{figure*}
~\\
\textbf{Policy score for different values of parameters.} 
We first analyze the score of our defense policy
in both environments with different values of the attack parameter ($\epsA$) and defense parameter ($\epsD$). 
For comparison, we also report the scores of the target policy ($\targetpi$) and the optimal policy ($\optpi$).
The results are shown in Figures \ref{fig.heatmap}. As seen in the figures, as long as $\epsD \ge \epsA$, our defense policy significantly improves the agent's score compared to $\targetpi$. 
~\\
\textbf{Robustness to perturbations.} 
We now analyze our algorithm's robustness towards uncertainties in the reward functions used by the attacker and the defender. 
For our first experiment, which we call PreAttack, we add a randomly perturb 
the attacker's input.
 In particular,
 the input to the defender's optimization problem $\widehat{R}$ is sampled from 
$\attack(\overline{R} + \mathcal{N}(0, \sigma^2I), \targetpi, \epsA)$  where $I$
is the
identity
matrix, $\mathcal{N}$ denotes the multivariate normal distribution and $\sigma$ is the perturbation parameter varied in the experiment. For our second experiment, called PostAttack, we randomly perturb the reward vector after the attack, sampling the defender's input from 
$\attack(\overline{R}, \targetpi, \epsA) + \mathcal{N}(0, \sigma^2I)$. In both experiments we use $\epsA=0.1$ and $\epsD=\infty$. 
As explained below, when calculating $\tightset$, we also experiment with a larger tolerance parameter of $10^{-1}$, denoting the defense policy in this case with $\pi_{\mathcal{D}+}$.
~\\
The results can be seen in Figure \ref{fig.robustness}. As seen in the figures, our defense policy
$\defensepi$
consistently improves on the baseline obtained with no defense (i.e, $\targetpi$). It is also clear that the PostAttack perturbations have a greater negative impact on our defense strategy's score.
Results for $\pi_{\mathcal D+}$ indicate that this is due to random perturbations prohibiting our algorithm from identifying all of the elements in $\tightset$.
While having a higher tolerance parameter helps with robustness, it can also lead to a lower performance when there is no noise, as $\tightset$ would falsely include additional elements.
We leave choosing the tolerance parameter in a more systematic way for future work.
\begin{figure*}[ht]
	\centering
	\begin{subfigure}{0.24\textwidth}
    	\centering
		\resizebox{0.98\linewidth}{!}{\includegraphics{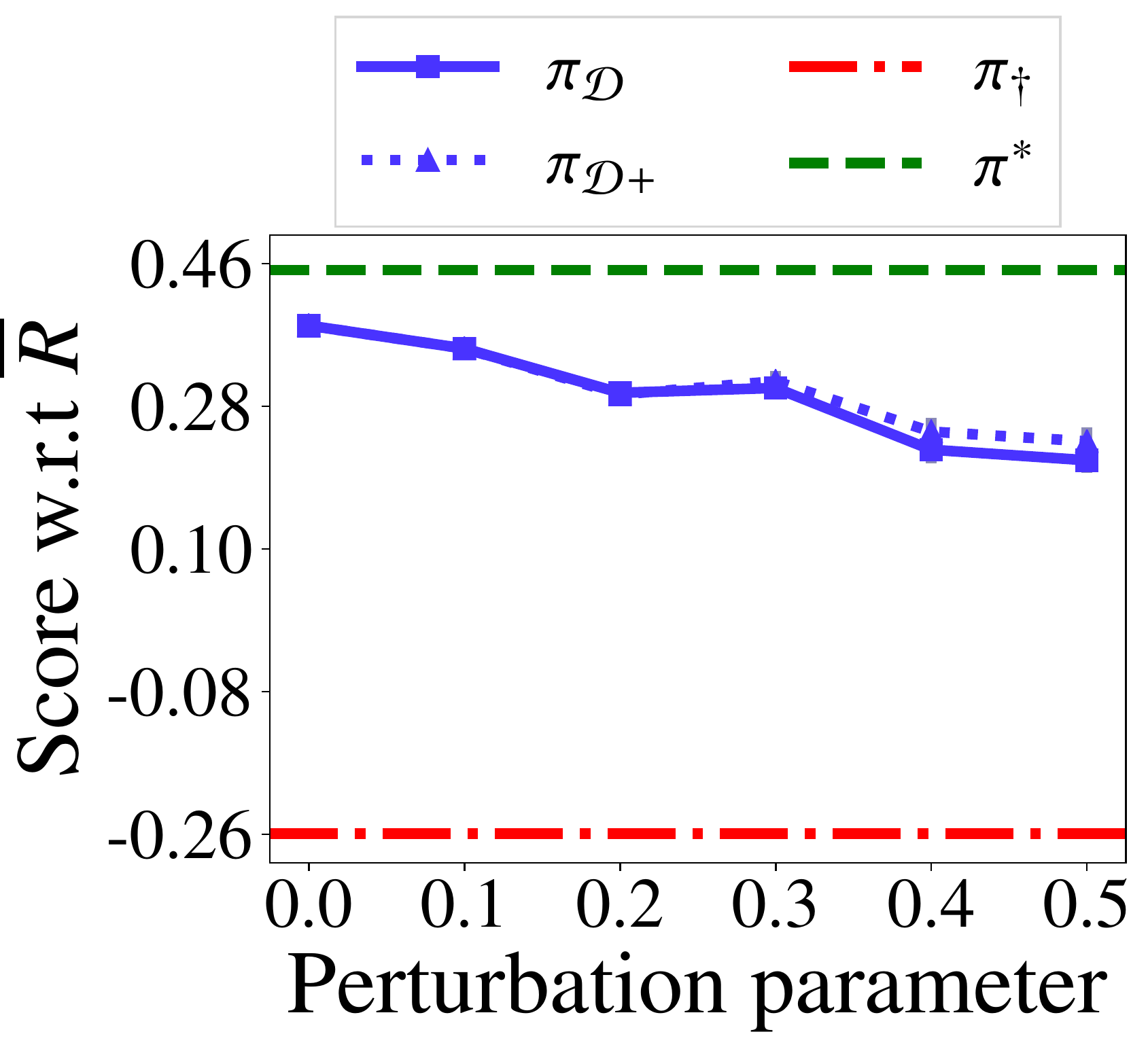}}
		\caption{Navigation, PreAttack}\label{fig.robustness.a}
	\end{subfigure}
	\begin{subfigure}{0.24\textwidth}
    	\centering
		\resizebox{0.98\linewidth}{!}{\includegraphics{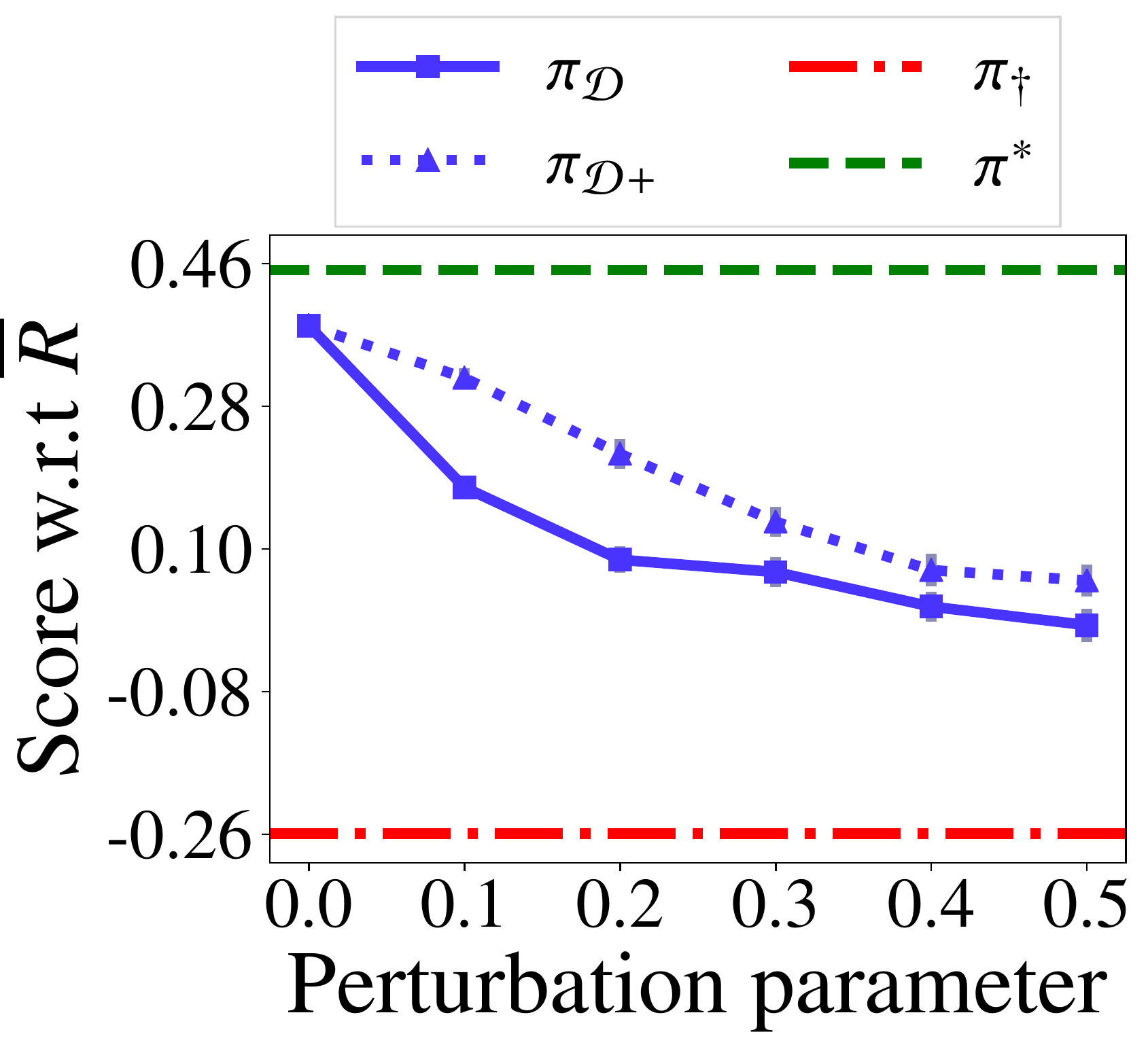}}
		\caption{Navigation, PostAttack}\label{fig.robustness.a}
	\end{subfigure}
	\begin{subfigure}{.24\textwidth}
    	\centering
		\resizebox{0.98\linewidth}{!}{\includegraphics{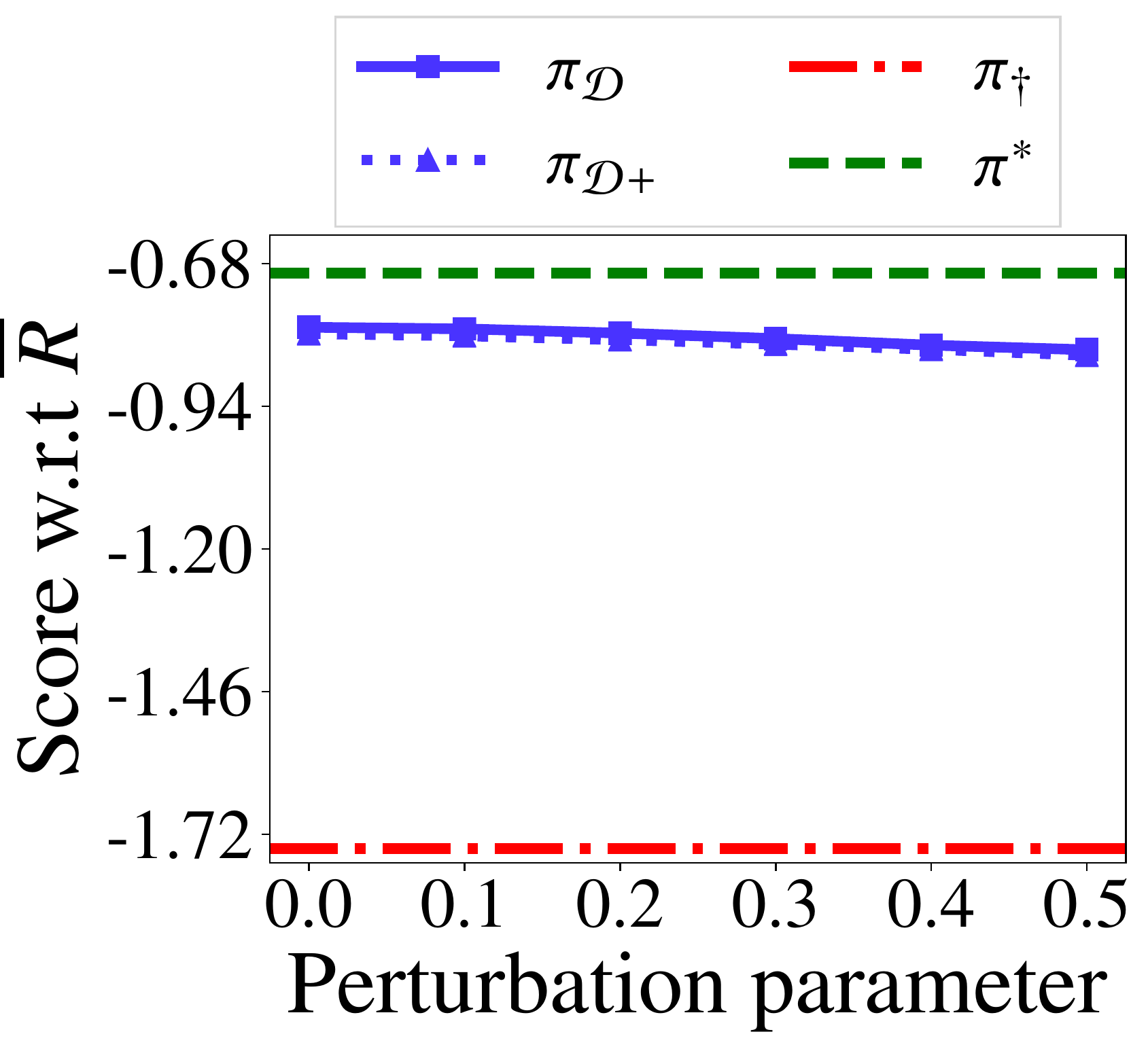}}
		\caption{Grid world, PreAttack}\label{fig.robustness.a}
	\end{subfigure}
	\begin{subfigure}{.24\textwidth}
    	\centering
		\resizebox{0.98\linewidth}{!}{\includegraphics{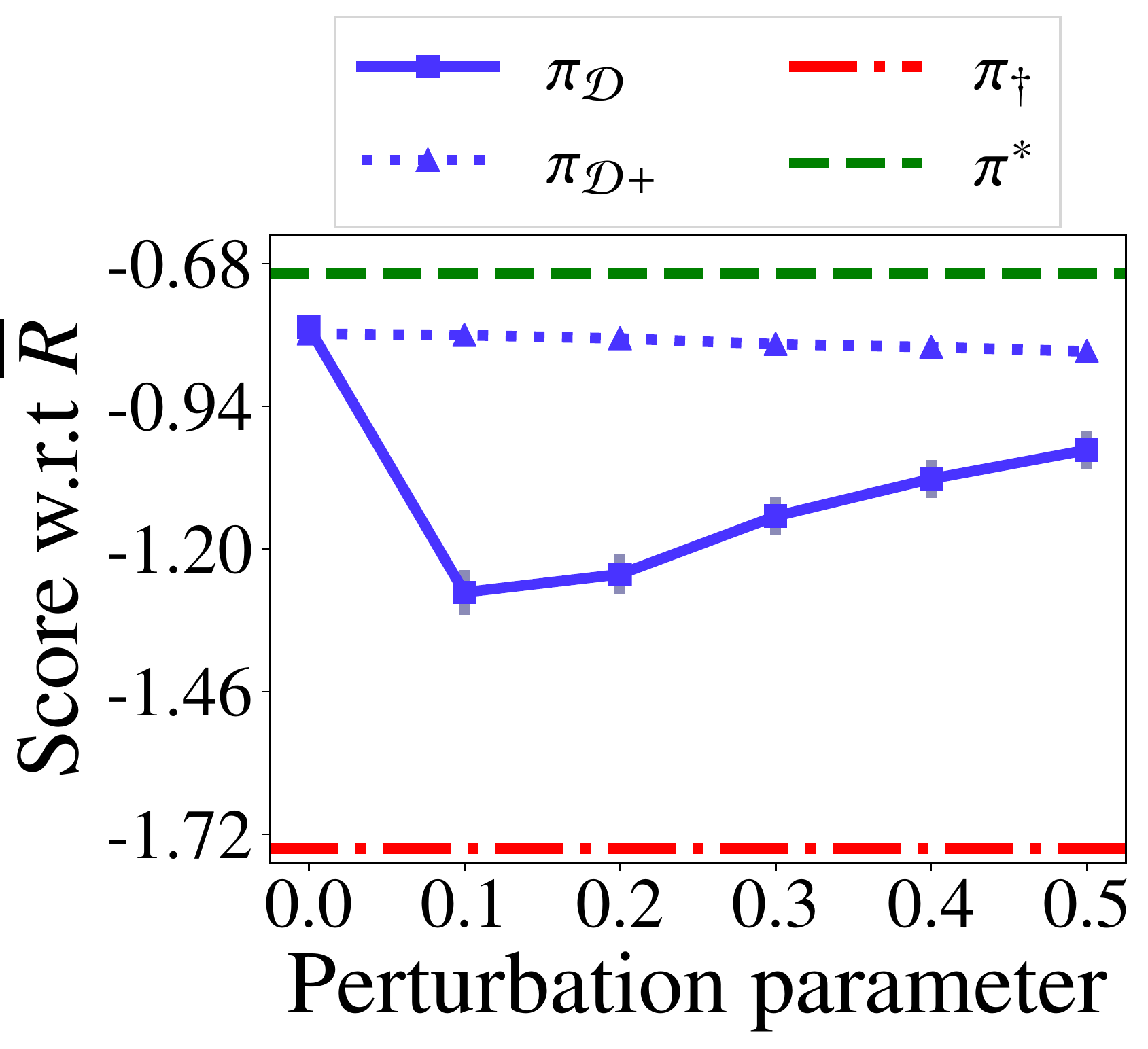}}
		\caption{Grid world, PostAttack}\label{fig.robustness.a}
	\end{subfigure}
    \caption{Robustness of the defense policy against random perturbation. Results are based on average of 100 runs for each data point.
    Error bars around the data points indicate standard error.
	}
	\vspace{-6mm}
	\label{fig.robustness}
\end{figure*}
\section{Concluding Discussions}\label{sec.conclusion}
In this paper, we introduced an optimization framework for designing defense strategies against reward poisoning attacks, in particular, poisoning attacks that change an agent's reward structure in order to steer the agent to adopt a target policy. We further analyzed the utility of using such defense strategies, providing characterization results that specify provable guarantees on their performance. Moving forward we see several interesting future research directions for extending these results.
~\\
\textbf{More refined analysis.} While our theoretical characterization shows
concrete benefits of using the optimization problems \eqref{prob.defense_a} and \eqref{prob.defense_b} as defense strategies, some of the bounds related to the notion of attack influence (i.e.,  Theorem \ref{thm.attack_influence_a}, Theorem \ref{cor.attack_influence_a}, and Theorem \ref{thm.overestimate_attack_param}) depend on the poisoned reward function. In the future work, it would be very interesting to establish bounds that do not have such dependency, or prove that this is not possible.
~\\
\textbf{Beyond the worst-case utility.} In this paper, we defined the defense objective
as the maximization of the agent's worst-case utility. 
While this is a sensible objective, there are other objectives that one could analyze.
For example, instead of focusing on the absolute performance, one can try to optimize performance relative to the target policy. Notice that this is a somewhat different, and possibly weaker goal, given that the target policy can have arbitrarily bad utility under 
$\overline{R}$.
~\\
\textbf{Informed prior.} We did not model prior knowledge that an agent might have about the attacker or the underlying reward function. In practice, we can expect that an agent has some information about, for example, the underlying true reward function. Incorporating such considerations calls for a Bayesian approach that could increase the effectiveness of the agent's defense by, for example, ruling out implausible candidates for $\overline{R}$ in the agent's inference of $\overline{R}$ given $\widehat R$. 
~\\
\textbf{Selecting $\epsD$ and non-oblivious attacks.}
The results in Section \ref{sec.defense_characterization_general_MDP.unknown_epsilon} indicate that choosing good $\epsD$ is important for having a functional defense. 
In practice, a selection procedure for $\epsD$ should take into account the cost that the attacker has for different choices of $\epsA$, as well as game-theoretic considerations: attacks might not be {\em oblivious} in that the strategy for selecting $\epsA$ might depend on the strategy for selecting $\epsD$. Namely, a direct consequence of Theorem \ref{sec.defense_characterization_general_MDP.unknown_epsilon} is that the attack optimization problem \eqref{prob.attack} can successfully achieve its goal if it sets $\epsA$ to large enough values. However, the cost of the attack also grows with $\epsA$, so the attack (if strategic) also needs to reason about $\epsD$ when selecting $\epsA$. We leave the full game-theoretic characterization of the parameter selection problem for the future work, as well as the inspection of other types of non-oblivious attacks (e.g., where the attack optimization problem has a different structure).
~\\
\textbf{Scaling up via function approximation}. 
While in our work we considered a tabular setting, large scale RL problems typically rely on function approximation. 
An interesting direction for future work would be to analyze defense strategies in this setting.

    \bibliographystyle{abbrv}

	\bibliography{main}

\begin{thebibliography}{10}

\bibitem{agrawal2018rewriting}
A.~Agrawal, R.~Verschueren, S.~Diamond, and S.~Boyd.
\newblock A rewriting system for convex optimization problems.
\newblock {\em Journal of Control and Decision}, 5(1):42--60, 2018.

\bibitem{amir2020prediction}
I.~Amir, I.~Attias, T.~Koren, R.~Livni, and Y.~Mansour.
\newblock Prediction with corrupted expert advice.
\newblock {\em CoRR}, abs/2002.10286, 2020.

\bibitem{bagnell2001solving}
J.~A. Bagnell, A.~Y. Ng, and J.~G. Schneider.
\newblock Solving uncertain markov decision processes.
\newblock Technical report, Carnegie Mellon University, 2001.

\bibitem{behzadan2017whatever}
V.~Behzadan and A.~Munir.
\newblock Whatever does not kill deep reinforcement learning, makes it
  stronger.
\newblock {\em CoRR}, abs/1712.09344, 2017.

\bibitem{DBLP:conf/icml/BiggioNL12}
B.~Biggio, B.~Nelson, and P.~Laskov.
\newblock Poisoning attacks against support vector machines.
\newblock In {\em ICML}, 2012.

\bibitem{biggio2018wild}
B.~Biggio and F.~Roli.
\newblock Wild patterns: Ten years after the rise of adversarial machine
  learning.
\newblock {\em Pattern Recognition}, 84:317--331, 2018.

\bibitem{bogunovic2020stochastic}
I.~Bogunovic, A.~Losalka, A.~Krause, and J.~Scarlett.
\newblock Stochastic linear bandits robust to adversarial attacks.
\newblock {\em CoRR}, abs/2007.03285, 2020.

\bibitem{charikar2017learning}
M.~Charikar, J.~Steinhardt, and G.~Valiant.
\newblock Learning from untrusted data.
\newblock In {\em STOC}, pages 47--60, 2017.

\bibitem{cretu2008casting}
G.~F. Cretu, A.~Stavrou, M.~E. Locasto, S.~J. Stolfo, and A.~D. Keromytis.
\newblock Casting out demons: Sanitizing training data for anomaly sensors.
\newblock In {\em IEEE Symposium on Security and Privacy}, pages 81--95. IEEE,
  2008.

\bibitem{diakonikolas2019sever}
I.~Diakonikolas, G.~Kamath, D.~Kane, J.~Li, J.~Steinhardt, and A.~Stewart.
\newblock Sever: A robust meta-algorithm for stochastic optimization.
\newblock In {\em ICML}, pages 1596--1606, 2019.

\bibitem{diamond2016cvxpy}
S.~Diamond and S.~Boyd.
\newblock Cvxpy: A python-embedded modeling language for convex optimization.
\newblock {\em The Journal of Machine Learning Research}, 17(1):2909--2913,
  2016.

\bibitem{dimitrakakis2017multi}
C.~Dimitrakakis, D.~C. Parkes, G.~Radanovic, and P.~Tylkin.
\newblock Multi-view decision processes: The helper-ai problem.
\newblock In {\em NeurIPS}, pages 5443--5452, 2017.

\bibitem{ethicsEU}
{European Commission}.
\newblock {Ethics Guidelines for Trustworthy Artificial Intelligence}.
\newblock URL:
  \url{https://ec.europa.eu/digital-single-market/en/news/ethics-guidelines-trustworthy-ai},
  2019.
\newblock [Online; accessed 15-January-2021].

\bibitem{ghosh2020towards}
A.~Ghosh, S.~Tschiatschek, H.~Mahdavi, and A.~Singla.
\newblock Towards deployment of robust cooperative ai agents: An algorithmic
  framework for learning adaptive policies.
\newblock In {\em AAMAS}, pages 447--455, 2020.

\bibitem{gupta2019better}
A.~Gupta, T.~Koren, and K.~Talwar.
\newblock Better algorithms for stochastic bandits with adversarial
  corruptions.
\newblock In {\em COLT}, pages 1562--1578, 2019.

\bibitem{hamon2020robustness}
R.~Hamon, H.~Junklewitz, and I.~Sanchez.
\newblock Robustness and explainability of artificial intelligence.
\newblock {\em Publications Office of the European Union}, 2020.

\bibitem{huang2011adversarial}
L.~Huang, A.~D. Joseph, B.~Nelson, B.~I. Rubinstein, and J.~D. Tygar.
\newblock Adversarial machine learning.
\newblock In {\em ACM workshop on Security and artificial intelligence}, pages
  43--58, 2011.

\bibitem{huang2017adversarial}
S.~Huang, N.~Papernot, I.~Goodfellow, Y.~Duan, and P.~Abbeel.
\newblock Adversarial attacks on neural network policies.
\newblock {\em CoRR}, abs/1702.02284, 2017.

\bibitem{DBLP:conf/gamesec/HuangZ19a}
Y.~Huang and Q.~Zhu.
\newblock Deceptive reinforcement learning under adversarial manipulations on
  cost signals.
\newblock In {\em GameSec}, pages 217--237, 2019.

\bibitem{iyengar2005robust}
G.~N. Iyengar.
\newblock Robust dynamic programming.
\newblock {\em Mathematics of Operations Research}, 30(2):257--280, 2005.

\bibitem{DBLP:conf/nips/Jun0MZ18}
K.~Jun, L.~Li, Y.~Ma, and X.~Zhu.
\newblock Adversarial attacks on stochastic bandits.
\newblock In {\em NeurIPS}, pages 3644--3653, 2018.

\bibitem{koh2017understanding}
P.~W. Koh and P.~Liang.
\newblock Understanding black-box predictions via influence functions.
\newblock In {\em ICML}, pages 1885--1894. PMLR, 2017.

\bibitem{DBLP:journals/corr/abs-1811-00741}
P.~W. Koh, J.~Steinhardt, and P.~Liang.
\newblock Stronger data poisoning attacks break data sanitization defenses.
\newblock {\em CoRR}, abs/1811.00741, 2018.

\bibitem{li2016data}
B.~Li, Y.~Wang, A.~Singh, and Y.~Vorobeychik.
\newblock Data poisoning attacks on factorization-based collaborative
  filtering.
\newblock In {\em NeurIPS}, pages 1885--1893, 2016.

\bibitem{lim2013reinforcement}
S.~H. Lim, H.~Xu, and S.~Mannor.
\newblock Reinforcement learning in robust markov decision processes.
\newblock In {\em NeurIPS}, pages 701--709, 2013.

\bibitem{DBLP:conf/ijcai/LinHLSLS17}
Y.~Lin, Z.~Hong, Y.~Liao, M.~Shih, M.~Liu, and M.~Sun.
\newblock Tactics of adversarial attack on deep reinforcement learning agents.
\newblock In {\em IJCAI}, pages 3756--3762, 2017.

\bibitem{DBLP:conf/icml/LiuS19a}
F.~Liu and N.~B. Shroff.
\newblock Data poisoning attacks on stochastic bandits.
\newblock In {\em ICML}, pages 4042--4050, 2019.

\bibitem{lykouris2018stochastic}
T.~Lykouris, V.~Mirrokni, and R.~Paes~Leme.
\newblock Stochastic bandits robust to adversarial corruptions.
\newblock In {\em STOC}, pages 114--122, 2018.

\bibitem{lykouris2019corruption}
T.~Lykouris, M.~Simchowitz, A.~Slivkins, and W.~Sun.
\newblock Corruption robust exploration in episodic reinforcement learning.
\newblock {\em CoRR}, abs/1911.08689, 2019.

\bibitem{DBLP:conf/gamesec/MaJ0018}
Y.~Ma, K.~Jun, L.~Li, and X.~Zhu.
\newblock Data poisoning attacks in contextual bandits.
\newblock In {\em GameSec}, pages 186--204, 2018.

\bibitem{ma2019policy}
Y.~Ma, X.~Zhang, W.~Sun, and J.~Zhu.
\newblock Policy poisoning in batch reinforcement learning and control.
\newblock In {\em NeurIPS}, pages 14543--14553, 2019.

\bibitem{mcmahan2003planning}
H.~B. McMahan, G.~J. Gordon, and A.~Blum.
\newblock Planning in the presence of cost functions controlled by an
  adversary.
\newblock In {\em ICML}, pages 536--543, 2003.

\bibitem{mei2015using}
S.~Mei and X.~Zhu.
\newblock Using machine teaching to identify optimal training-set attacks on
  machine learners.
\newblock In {\em AAAI}, pages 2871--2877, 2015.

\bibitem{moosavi2016deepfool}
S.-M. Moosavi-Dezfooli, A.~Fawzi, and P.~Frossard.
\newblock Deepfool: a simple and accurate method to fool deep neural networks.
\newblock In {\em CVPR}, pages 2574--2582, 2016.

\bibitem{nelson2008exploiting}
B.~Nelson, M.~Barreno, F.~J. Chi, A.~D. Joseph, B.~I. Rubinstein, U.~Saini,
  C.~A. Sutton, J.~D. Tygar, and K.~Xia.
\newblock Exploiting machine learning to subvert your spam filter.
\newblock {\em LEET}, 8:1--9, 2008.

\bibitem{nguyen2015deep}
A.~Nguyen, J.~Yosinski, and J.~Clune.
\newblock Deep neural networks are easily fooled: High confidence predictions
  for unrecognizable images.
\newblock In {\em CVPR}, pages 427--436, 2015.

\bibitem{nilim2005robust}
A.~Nilim and L.~El~Ghaoui.
\newblock Robust control of markov decision processes with uncertain transition
  matrices.
\newblock {\em Operations Research}, 53(5):780--798, 2005.

\bibitem{paudice2018detection}
A.~Paudice, L.~Mu{\~n}oz-Gonz{\'a}lez, A.~Gyorgy, and E.~C. Lupu.
\newblock Detection of adversarial training examples in poisoning attacks
  through anomaly detection.
\newblock {\em CoRR}, abs/1802.03041, 2018.

\bibitem{pinto2017robust}
L.~Pinto, J.~Davidson, R.~Sukthankar, and A.~Gupta.
\newblock Robust adversarial reinforcement learning.
\newblock In {\em ICML}, pages 2817--2826, 2017.

\bibitem{Puterman1994}
M.~L. Puterman.
\newblock {\em Markov Decision Processes: Discrete Stochastic Dynamic
  Programming}.
\newblock John Wiley \& Sons, Inc., 1994.

\bibitem{radanovic2019learning}
G.~Radanovic, R.~Devidze, D.~Parkes, and A.~Singla.
\newblock Learning to collaborate in markov decision processes.
\newblock In {\em ICML}, pages 5261--5270, 2019.

\bibitem{rakhsha2020policy-jmlr}
A.~Rakhsha, G.~Radanovic, R.~Devidze, X.~Zhu, and A.~Singla.
\newblock Policy teaching in reinforcement learning via environment poisoning
  attacks.
\newblock {\em CoRR}, abs/2011.10824, 2020.

\bibitem{rakhsha2020policy}
A.~Rakhsha, G.~Radanovic, R.~Devidze, X.~Zhu, and A.~Singla.
\newblock Policy teaching via environment poisoning: Training-time adversarial
  attacks against reinforcement learning.
\newblock In {\em ICML}, 2020.

\bibitem{regan2010robust}
K.~Regan and C.~Boutilier.
\newblock Robust policy computation in reward-uncertain mdps using nondominated
  policies.
\newblock In {\em AAAI}, volume~24, 2010.

\bibitem{schulman2015trust}
J.~Schulman, S.~Levine, P.~Abbeel, M.~Jordan, and P.~Moritz.
\newblock Trust region policy optimization.
\newblock In {\em ICML}, pages 1889--1897, 2015.

\bibitem{steinhardt2017certified}
J.~Steinhardt, P.~W. Koh, and P.~Liang.
\newblock Certified defenses for data poisoning attacks.
\newblock In {\em NeurIPS}, pages 3520--3532, 2017.

\bibitem{sun2020vulnerability}
Y.~Sun and F.~Huang.
\newblock Vulnerability-aware poisoning mechanism for online rl with unknown
  dynamics.
\newblock {\em CoRR}, abs/2009.00774, 2020.

\bibitem{sutton2018reinforcement}
R.~S. Sutton and A.~G. Barto.
\newblock {\em Reinforcement learning: An introduction}.
\newblock MIT press, 2018.

\bibitem{syed2008apprenticeship}
U.~Syed, M.~Bowling, and R.~E. Schapire.
\newblock Apprenticeship learning using linear programming.
\newblock In {\em ICML}, pages 1032--1039, 2008.

\bibitem{szegedy2014intriguing}
C.~Szegedy, W.~Zaremba, I.~Sutskever, J.~Bruna, D.~Erhan, I.~Goodfellow, and
  R.~Fergus.
\newblock Intriguing properties of neural networks.
\newblock In {\em ICLR}, 2014.

\bibitem{szepesvari1997asymptotic}
C.~Szepesv{\'a}ri.
\newblock The asymptotic convergence-rate of q-learning.
\newblock In {\em NeurIPS}, volume~10, pages 1064--1070, 1997.

\bibitem{tamar2014scaling}
A.~Tamar, S.~Mannor, and H.~Xu.
\newblock Scaling up robust mdps using function approximation.
\newblock In {\em ICML}, pages 181--189, 2014.

\bibitem{tretschk2018sequential}
E.~Tretschk, S.~J. Oh, and M.~Fritz.
\newblock Sequential attacks on agents for long-term adversarial goals.
\newblock {\em CoRR}, abs/1805.12487, 2018.

\bibitem{DBLP:conf/icml/XiaoBBFER15}
H.~Xiao, B.~Biggio, G.~Brown, G.~Fumera, C.~Eckert, and F.~Roli.
\newblock Is feature selection secure against training data poisoning?
\newblock In {\em ICML}, pages 1689--1698, 2015.

\bibitem{xiao2012adversarial}
H.~Xiao, H.~Xiao, and C.~Eckert.
\newblock Adversarial label flips attack on support vector machines.
\newblock In {\em ECAI}, pages 870--875, 2012.

\bibitem{zhang2020robust}
H.~Zhang, H.~Chen, C.~Xiao, B.~Li, D.~Boning, and C.-J. Hsieh.
\newblock Robust deep reinforcement learning against adversarial perturbations
  on observations.
\newblock {\em CoRR}, abs/2003.08938, 2020.

\bibitem{DBLP:conf/aaai/ZhangP08}
H.~Zhang and D.~C. Parkes.
\newblock Value-based policy teaching with active indirect elicitation.
\newblock In {\em AAAI}, 2008.

\bibitem{zhang2021robustICML}
X.~Zhang, Y.~Chen, X.~Zhu, and W.~Sun.
\newblock Robust policy gradient against strong data corruption.
\newblock {\em CoRR}, abs/2102.05800, 2021.

\bibitem{xuezhou2020adaptive}
X.~Zhang, Y.~Ma, A.~Singla, and X.~Zhu.
\newblock Adaptive reward-poisoning attacks against reinforcement learning.
\newblock In {\em ICML}, 2020.

\bibitem{zhang2018training}
X.~Zhang, X.~Zhu, and S.~Wright.
\newblock Training set debugging using trusted items.
\newblock In {\em AAAI}, volume~32, 2018.

\end{thebibliography}
	\iftoggle{longversion}{
		\clearpage
		\onecolumn
		\appendix 
		{\allowdisplaybreaks

\section{List of Appendices}\label{sec.appendix.table-of-contents}
In this section we provide a brief description of the content provided in the appendices of the paper.   
\begin{itemize}
\item Appendix \ref{sec.defense_characterization_specific_mdp} provides an intuition of our results using special MDPs in which the agent's actions do not affect the transition dynamics. The proofs of the results presented in this Appendix can be found in Appendix \ref{sec.appendix.additional_results_special_mdps}.
\item Appendix \ref{sec.appendix.experiments} provides additional details regarding the experiments. 
\item Appendix \ref{sec.appendix.background} contains some background on reward poisoning attacks, and a brief overview of the MDP properties
that are important for proving our formal results.  

\item Appendix \ref{sec.appendix.attack.characterization} contains characterization results for the attack optimization problem \eqref{prob.attack}.

\item Appendix \ref{sec.appendix.proofs_defense_characterization_general_MDP_known_epsilon} contains proofs of the formal results in Section \ref{sec.defense_characterization_general_MDP.known_epsilon}.
\begin{itemize}
    \item The proof of Lemma \ref{lm.tightset} is in Section \ref{sec.appendix.proof_lm_tightset}.
    \item The proof of Theorem \ref{thm.defense_optimization.known_epsilon} is in Section \ref{sec.appendix.proof_thm_defense_optimization.known_epsilon}.
    \item The proof of Theorem \ref{thm.attack_influence_a} is in Section \ref{sec.appendix.proof_thm_attack_influence_a}.
    \item The proof of Theorem \ref{cor.attack_influence_a} is in Section \ref{sec.appendix.proof_cor_attack_influence_a}.
    \item The proof of Theorem \ref{prop.influence_impossibility} is in Section \ref{sec.appendix.proof_prop_influence_impossibility}. 
\end{itemize}

\item Appendix \ref{sec.appendix.defense_characterization_general_MDP_uknown_epsilon} contains proofs of the formal results in Section \ref{sec.defense_characterization_general_MDP.unknown_epsilon}.
\begin{itemize}
    \item The proof of Theorem \ref{thm.overestimate_attack_param} is in Section \ref{sec.appendix.thm_prop_overestimate_attack_param}.
    \item The proof of Theorem \ref{thm.underestimate_attack_param} is in Section \ref{sec.appendix.proof_thm_underestimate_attack_param}.
\end{itemize}

\item Appendix \ref{sec.appendix.additional_results_special_mdps} contains
a formal treatment of
the results presented in
Appendix
\ref{sec.defense_characterization_specific_mdp}
\begin{itemize}
    \item Formal results characterizing the defense policy for special MDPs can be found in Section \ref{sec.appendix.special_mdp.defense_characterization}.
    \item Tighter bounds on attack influence for special MDPs can be found in Section \ref{sec.appendix.sepcial.attack.influence}.
    \item Proof of Theorem \ref{thm.bandit_impossibility} (See Appendix \ref{sec.defense_characterization_specific_mdp}) can be found in Section \ref{sec.appendix.thm.bandit_impossibility}.
\end{itemize}
\end{itemize}

\section{Intuition of Results using Special MDPs}\label{sec.defense_characterization_specific_mdp}
In this Appendix, we describe characterization results for special MDPs, in which the agent's actions do not affect the transitions, that is, we assume that
\vspace{-1mm}
\begin{align}\label{eq.condition.no_action_influence}
    P(s, a, s') = P(s, a', s') \quad \forall s, a, a', s'.
\vspace{-1mm}
\end{align}
Variants of the above condition have been studied in the literature (e.g.,  \cite{szepesvari1997asymptotic,dimitrakakis2017multi,sutton2018reinforcement,radanovic2019learning,ghosh2020towards}). Note that this assumption implies that any two policies $\pi$ and $\pi'$ have equal state occupancy measures, so we simplify the notation by denoting $\occstate = \occstate^{\pi} = \occstate^{\pi'}$.

While the results from the previous sections incorporate this special case, we study this setting because: i) the optimal solutions to the defense problem have a simple form, 
enabling us to provide intuitive explanations of our main results from the previous sections, ii) using this setting, we show a tightness result for Theorem \ref{thm.attack_influence_a}, Theorem \ref{cor.attack_influence_a}, and Theorem \ref{thm.overestimate_attack_param}. 

A more formal exposition of our results for this setting inlcuding the proofs can be found in Appendix \ref{sec.appendix.additional_results_special_mdps}.

\begin{figure*}[ht]
	\centering
	\begin{subfigure}{.31\textwidth}
		\resizebox{0.9\linewidth}{!}{	\begin{tikzpicture}
	\begin{axis}[
	axis lines = left,
	xlabel = \scalebox{1.8}{Actions},
	ylabel = {\scalebox{1.8}{$\overline{\text{R}}$}},
	xmin=-1, xmax=7,
	ymin=0, ymax=11,
	ytick={0, 2, 4, 6, 8, 10},
	yticklabels={
\scalebox{1.9}{0},	
\scalebox{1.9}{2},	
\scalebox{1.9}{4},	
\scalebox{1.9}{6},	
\scalebox{1.9}{8},	
\scalebox{1.9}{10},	
},
	xtick={0,1,2,3,4,5, 6},
	xticklabels={
\scalebox{2.1}{$a_1$},
\scalebox{2.1}{$a_2$},
\scalebox{2.1}{$a_3$},
\scalebox{2.1}{$a_4$},
\scalebox{2.1}{$a_5$},
\scalebox{2.1}{$a_6$},
\scalebox{2.1}{$a_7$},
	},
	]
	\addplot[
	only marks,
	color=blue,
	mark=,
	mark size=4pt
	]
coordinates {
	(0, 10.00) (1, 4.00) (2, 6.00) (3, 3.00) (4, 1.00) (5, 5.00) (6, 9.00) };
	\draw[dashed,-, line width=3pt, color=cyan] (axis cs:-1, 10) -- (axis cs:7, 10);
	\end{axis}
	\node[text width=0cm]  at (-0.2, -2.0) (t0) {\scalebox{2.1}{$\optpi$}};
	\node[text width=0cm] at (1.0, -2.1) (t0) {\scalebox{2.1}{$1$}};
	\node[text width=0cm] at (2.05, -2.1) (t0) {\scalebox{2.1}{$0$}};
	\node[text width=0cm] at (3.1, -2.1) (t0) {\scalebox{2.1}{$0$}};
	\node[text width=0cm] at (4.15, -2.1) (t0) {\scalebox{2.1}{$0$}};
	\node[text width=0cm] at (5.20, -2.1) (t0) {\scalebox{2.1}{$0$}};
	\node[text width=0cm] at (6.25, -2.1) (t0) {\scalebox{2.1}{$0$}};
	\node[text width=0cm] at (7.3, -2.1) (t0) {\scalebox{2.1}{$0$}};
	\end{tikzpicture}}
		\caption{$\overline{R}, \optpi$}
	\end{subfigure}
	\quad
	\begin{subfigure}{.31\textwidth}
		\resizebox{0.9\linewidth}{!}{	\begin{tikzpicture}
\begin{axis}[
axis lines = left,
xlabel = \scalebox{1.8}{Actions},
ylabel = {\scalebox{1.8}{$\widehat{\text{R}}$}},
xmin=-1, xmax=7,
ymin=0, ymax=11,
ytick={0, 2, 4, 6, 8, 10},
yticklabels={
	\scalebox{1.9}{0},	
	\scalebox{1.9}{2},	
	\scalebox{1.9}{4},	
	\scalebox{1.9}{6},	
	\scalebox{1.9}{8},	
	\scalebox{1.9}{10},	
},
xtick={0,1,2,3,4,5, 6},
xticklabels={
	\scalebox{2.1}{$a_1$},
	\scalebox{2.1}{$a_2$},
	\scalebox{2.1}{$a_3$},
	\scalebox{2.1}{$a_4$},
	\scalebox{2.1}{$a_5$},
	\scalebox{2.1}{$a_6$},
	\scalebox{2.1}{$a_7$},
},
]
\addplot[
only marks,
color=blue,
mark=,
mark size=4pt
]
coordinates {
	(0, 7.00) (1, 4.00) (2, 6.00) (3, 8.00) (4, 1.00) (5, 5.00) (6, 7.00) };

\draw[dashed,-, line width=3pt, color=cyan] (axis cs:-1, 8) -- (axis cs:7, 8);
\addplot[
only marks,
color=red,
mark=o,
mark size=4pt,
]
coordinates {
	(0, 10.00) (3, 3.00) (6, 9.00) };
\draw[dashed, -{Triangle[width=12,length=7]}, color=red, line width=4pt, opacity=0.9] (axis cs:0, 9.70) -- (axis cs:0, 7.30);
\draw[dashed, -{Triangle[width=12,length=7]}, color=red, line width=4pt, opacity=0.9] (axis cs:3, 3.30) -- (axis cs:3, 7.70);
\draw[dashed, -{Triangle[width=12,length=7]}, color=red, line width=4pt, opacity=0.9] (axis cs:6, 8.70) -- (axis cs:6, 7.30);

\end{axis}
\node[text width=0cm]  at (-0.2, -2.2) (t0) {\scalebox{2.1}{$\targetpi$}};
\node[text width=0cm] at (1.0, -2.1) (t0) {\scalebox{2.1}{$0$}};
\node[text width=0cm] at (2.05, -2.1) (t0) {\scalebox{2.1}{$0$}};
\node[text width=0cm] at (3.1, -2.1) (t0) {\scalebox{2.1}{$0$}};
\node[text width=0cm] at (4.15, -2.1) (t0) {\scalebox{2.1}{$1$}};
\node[text width=0cm] at (5.20, -2.1) (t0) {\scalebox{2.1}{$0$}};
\node[text width=0cm] at (6.25, -2.1) (t0) {\scalebox{2.1}{$0$}};
\node[text width=0cm] at (7.3, -2.1) (t0) {\scalebox{2.1}{$0$}};

\end{tikzpicture}}
		\caption{$\widehat{R}, \targetpi$}
	\end{subfigure}
	\quad 
	\begin{subfigure}{.31\textwidth}
		\resizebox{0.9\linewidth}{!}{	\begin{tikzpicture}
\begin{axis}[
axis lines = left,
xlabel = \scalebox{1.8}{Actions},
ylabel = {\scalebox{1.8}{$\widehat{\text{R}}$}},
xmin=-1, xmax=7,
ymin=0, ymax=11,
ytick={0, 2, 4, 6, 8, 10},
yticklabels={
	\scalebox{1.9}{0},	
	\scalebox{1.9}{2},	
	\scalebox{1.9}{4},	
	\scalebox{1.9}{6},	
	\scalebox{1.9}{8},	
	\scalebox{1.9}{10},	
},
xtick={0,1,2,3,4,5, 6},
xticklabels={
	\scalebox{2.1}{$a_1$},
	\scalebox{2.1}{$a_2$},
	\scalebox{2.1}{$a_3$},
	\scalebox{2.1}{$a_4$},
	\scalebox{2.1}{$a_5$},
	\scalebox{2.1}{$a_6$},
	\scalebox{2.1}{$a_7$},
},
]
\addplot[
only marks,
color=blue,
mark=,
mark size=4pt
]
coordinates {
	(0, 7.00) (1, 4.00) (2, 6.00) (3, 8.00) (4, 1.00) (5, 5.00) (6, 7.00) };
\draw[dashed,-, line width=3pt, color=cyan] (axis cs:-1, 7) -- (axis cs:7, 7);
\addplot[
only marks,
color=red,
mark=o,
mark size=4pt,
]
coordinates {
	(0, 10.00) (3, 3.00) (6, 9.00) };
\draw[dashed, -{Triangle[width=12,length=7]}, color=red, line width=4pt, opacity=0.9] (axis cs:0, 9.70) -- (axis cs:0, 7.30);
\draw[dashed, -{Triangle[width=12,length=7]}, color=red, line width=4pt, opacity=0.9] (axis cs:3, 3.30) -- (axis cs:3, 7.70);
\draw[dashed, -{Triangle[width=12,length=7]}, color=red, line width=4pt, opacity=0.9] (axis cs:6, 8.70) -- (axis cs:6, 7.30);

\end{axis}
\node[text width=0cm]  at (-0.2, -2.1) (t0) {\scalebox{2.1}{$\defensepi$}};
\node[text width=0cm] at (1.0, -2.1) (t0) {\scalebox{2.1}{$\frac 13$}};
\node[text width=0cm] at (2.05, -2.1) (t0) {\scalebox{2.1}{$0$}};
\node[text width=0cm] at (3.1, -2.1) (t0) {\scalebox{2.1}{$0$}};
\node[text width=0cm] at (4.15, -2.1) (t0) {\scalebox{2.1}{$\frac 13$}};
\node[text width=0cm] at (5.20, -2.1) (t0) {\scalebox{2.1}{$0$}};
\node[text width=0cm] at (6.25, -2.1) (t0) {\scalebox{2.1}{$0$}};
\node[text width=0cm] at (7.3, -2.1) (t0) {\scalebox{2.1}{$\frac 13$}};
\end{tikzpicture}}
		\caption{$\widehat{R}, \defensepi$}
		\label{fig.example.bandit.c}
	\end{subfigure}
	\caption{A 
	single-state environment 
	environment with 7 actions. 
	In each figure, the denoted policy is uniform
	over actions on or above the dashed line.
	\textbf{(a)} shows $\overline{R}$ and $\optpi$. Here, the optimal policy selects  action $a_1$.
	\textbf{(b)} shows $\widehat R$ and target policy $\targetpi$ with $\epsA=1$. Here, the target policy selects action $a_4$. 
	%
	%
	\textbf{(c)} shows $\widehat R$ and $\defensepi$ with $\epsD=2$.
	The defense strategy only sees poisoned rewards $\widehat R$, so it first calculates the optimal action and the set of all second best actions under $\widehat R$, in this case $\{a_1, a_7\}$, which then form the set $\tightsetstate=\{a_1,a_7\}$. To obtain defense policy $\defensepi$, we can solve the optimization problem
\eqref{prob.defense_optimization.known_epsilon_b}, which implies that $\defensepi$ should select an action uniformly at random from the set $\{\targetpi(s)\} \cup \tightsetstate =  \{a_1, a_4, a_7 \}$. 
}\label{fig.example.bandit}
\end{figure*}

\subsection{Optimal Defense Policy}\label{sec.defense_characterization_specific_mdp.optimal_policy} 

In this subsection, we provide the intuition behind defense policies for the unknown parameter setting with $\epsD \ge \epsA$ (Section \ref{sec.defense_characterization_general_MDP.unknown_epsilon.overestimating}). 
The key point about
 the assumption in Equation \eqref{eq.condition.no_action_influence} is that it allows us to consider each state separately in
 the defense optimization problems.  
 In particular, it can be shown that the optimization problem \eqref{prob.defense_optimization.known_epsilon} is equivalent to solving $|S|$ optimization problems of the form
\begin{align*}
    &\max_{\pi(\cdot|s) ~\in~ \mathcal P(A)} \vecdot{\pi(\cdot|s)}{\widehat R(s, \cdot)}
    \label{prob.defense_optimization.known_epsilon_b}
	\tag{P3b}\\
    &\quad\pi(a|s)\ge 
    \pi\big(\targetpi(s)~|~s\big)
    \quad \forall a \in \tightsetstate,
\end{align*}
where $\tightsetstate = \{ a :  \widehat{R}(s,a) - \widehat{R}(s, \targetpi(s)) = -\frac{\epsilon}{\occstate(s)}\}$.
If we instantiate Theorem
\ref{thm.overestimate_attack_param} for special MDPs by 
putting $\epsilon = \min\{\epsD, \widehat \epsilon \}$, 
the set
$\tightsetstate$ has
an intuitive description: it is
the set of all ``second-best'' actions (w.r.t $\widehat R$)
in state $s$ such that
their poisoned reward is greater than or equal to
$\widehat{R}(\targetpi(s))-\frac{\epsilon}{\occstate(s)}$.
 It turns out that the defense policy for state $s$ selects an action uniformly at random from the set $\tightsetstate \cup \{ \targetpi(s)\}$. In other words, the defense policy $\defensepi$ is given by: 
 \begin{align*}
     \defensepi(a|s) = \begin{cases}
     \frac{1}{|\tightsetstate| + 1} &\mbox{ if }  a \in \tightsetstate \cup \{ \targetpi(s)\}\\
     0 & \mbox{ otherwise }
     \end{cases}.
 \end{align*}
 To see why, note that the objective in  \eqref{prob.defense_optimization.known_epsilon_b} only improves as we put more probability on selecting $\targetpi(s)$ (since $\targetpi(s)$ is optimal under $\widehat R$). However, the constraints in \eqref{prob.defense_optimization.known_epsilon_b} require that the selection probability of any action in $\tightsetstate$ has to be at least as high as the selection probability of $\targetpi(s)$, which in turn give us the uniform at random selection rule. 
 Figure
\ref{fig.example.bandit}
illustrates attack and defense policies for special MDPs 
using a single-state MDP with action set $\{a_1,...a_7\}$.

\subsection{Attack Influence}\label{sec.defense_characterization_specific_mdp.attack_influence}

We can further inspect the attack influence. For this we will consider the known parameter setting. Following the arguments provided in Section \ref{sec.defense_characterization_general_MDP.known_epsilon.attack_influence}, one obtains that the factor which multiplies $\influence^{\targetpi}$ in Equation \eqref{eq.thm_attack_influence} is in this case equal to $\frac{1}{2}$. This can be seen from Theorem \ref{cor.attack_influence_a}, by using the fact that the assumption in Equation \eqref{eq.condition.no_action_influence} implies $\beta^{\occstate} = 0$. Interestingly, under
the assumption in Equation \eqref{eq.condition.no_action_influence}, the difference $\widehat \influence = \widehat \rho^{\targetpi} - \widehat \rho^{\defensepi}$ is at most $\epsA \cdot |S|$. 
More concretely, 
we can show that the attack influence of $\defensepi$ is bounded by
\begin{align*}
    \influence^{\defensepi} \le \max \left\{ |S| \cdot \epsA, \frac{1}{2} \cdot \influence^{\targetpi} + \frac{2\cdot|S| - 1}{2} \cdot \epsA \right\}.
\end{align*}
Notice that the bound now depends on the attack strategy only through $\influence^{\targetpi}$, whereas in Section \ref{sec.defense_characterization_general_MDP.known_epsilon} it also included the dependency on $\widehat \influence$. A natural question is whether the $\frac 12$ factor that multiplies $\influence^{\targetpi}$
could be improved by using alternative defense strategies. 
Our next result shows that this is not the case.
\begin{theorem}\label{thm.bandit_impossibility}
    Let $\delta > 0$ and $\epsA > 0$. 
    There exists a problem instance with poisoned reward function $\widehat{R}$ 
    and a target policy $\targetpi$ such that for all defense policies $\pi_{\widetilde{\mathcal{D}}} \in \Pi$ and constants $C$,
    we can find $\overline{R}$ that satisfies 
    $\widehat{R} = \attack(\overline{R}, \targetpi, \epsA)$ with the following lower bound on attack influence $\influence^{\pi_{\widetilde{\mathcal{D}}}}$:
     \begin{align*}
         \influence^{\pi_{\widetilde{\mathcal{D}}}} \ge
         \frac{1}{2+\delta}\cdot \influence^{\targetpi} + C.
     \end{align*}
\end{theorem}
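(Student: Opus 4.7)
The plan is to construct a fixed problem instance---a single-state MDP (which trivially satisfies the no-action-influence condition \eqref{eq.condition.no_action_influence})---with a moderate number of tight actions, and exhibit a family of candidate true reward functions all producing the same poisoned reward. An averaging argument over this family will force every defense policy to be bad on at least one of them.

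Concretely, fix an integer $k > 1 + 2/\delta$ and take a single-state MDP on state $s_0$ with actions $\{a_0, a_1, \dots, a_k\}$. Set $\targetpi = a_0$, $\widehat R(a_0) = \epsA$, and $\widehat R(a_i) = 0$ for $i \ge 1$. Then $\targetpi$ is uniquely optimal under $\widehat R$ with gap $\epsA$, and $\tightsettarget = \{(s_0, a_i) : i = 1, \dots, k\}$. By the ``if'' direction of Lemma \ref{lm.tightset}, for each $i \in \{1, \dots, k\}$ and each $\alpha > \epsA$ the reward function $\overline R_i$ defined by $\overline R_i(a_0) = \epsA - \alpha$, $\overline R_i(a_i) = \alpha$, and $\overline R_i(a_j) = 0$ otherwise satisfies $\widehat R = \attack(\overline R_i, \targetpi, \epsA)$. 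Under $\overline R_i$ the unique optimum is $a_i$ with $\overline \score^{\optpi} = \alpha$, so $\influence^{\targetpi}$ evaluated under $\overline R_i$ equals $2\alpha - \epsA$.

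For an arbitrary defense policy $\pi_{\widetilde{\mathcal D}}$ with action probabilities $p_0, p_1, \dots, p_k$ summing to one, a direct calculation yields
\begin{align*}
\influence^{\pi_{\widetilde{\mathcal D}}}_i \;=\; \alpha\bigl(1 + p_0 - p_i\bigr) - p_0 \epsA \qquad (i = 1, \dots, k).
\end{align*}
Since $\sum_{i=1}^{k} p_i = 1 - p_0$, averaging gives $\frac{1}{k}\sum_{i} \influence^{\pi_{\widetilde{\mathcal D}}}_i \;=\; \alpha \frac{k-1}{k} + p_0\bigl(\alpha \frac{k+1}{k} - \epsA\bigr)$, whose second summand is non-negative whenever $\alpha \ge k\epsA/(k+1)$. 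Consequently $\max_{i} \influence^{\pi_{\widetilde{\mathcal D}}}_i \ge \alpha \frac{k-1}{k}$ for all sufficiently large $\alpha$.

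Finally, the choice $k > 1 + 2/\delta$ gives $\frac{k-1}{k} > \frac{2}{2+\delta}$, so the coefficient of $\alpha$ in the difference $\alpha \frac{k-1}{k} - \frac{1}{2+\delta}(2\alpha - \epsA)$ is strictly positive and the expression exceeds any fixed constant $C$ once $\alpha$ is chosen large enough. Selecting $i^{\star} \in \argmax_i \influence^{\pi_{\widetilde{\mathcal D}}}_i$ and $\overline R = \overline R_{i^{\star}}$ produces the required witness, completing the proof. The only mildly delicate point is tracking the additive $-p_0 \epsA$ term through the averaging; it is absorbed into the positive contribution $p_0 \alpha(k+1)/k$ in the large-$\alpha$ regime, after which the argument is routine.
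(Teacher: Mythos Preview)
Your proof is correct and follows essentially the same approach as the paper: a single-state MDP with $k+1$ actions, target action rewarded $\epsA$ and the rest $0$, together with the one-parameter family of true rewards obtained by boosting a single tight action and correspondingly depressing the target. The only cosmetic difference is that you phrase the selection of the bad action via an averaging bound whereas the paper uses the equivalent pigeonhole observation $\min_{i\ge 1} p_i \le 1/k$; your threshold $k > 1 + 2/\delta$ is in fact slightly sharper than the paper's $k \ge 2 + 4/\delta$.
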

In order to prove this results, we construct an MDP which satisfies the assumption in Equation \eqref{eq.condition.no_action_influence}, and we show that no defense policy $\pi_{\widetilde \defense}$
has attack influence which is much better 
than half of the attack influence of $\targetpi$ plus a constant. 
Note that this impossibility results applies to the general cases studied in Section \ref{sec.defense_characterization_general_MDP.known_epsilon} and Section \ref{sec.defense_characterization_general_MDP.unknown_epsilon}, and shows that the bounds in Theorem \ref{thm.attack_influence_a}, Theorem \ref{cor.attack_influence_a}, and Theorem \ref{thm.overestimate_attack_param}, are tight in special MDPs settings.

\section{Additional Details Regarding Experiments}\label{sec.appendix.experiments}

In this section we provide additional details regarding the experiments, focusing on the running times of the attack and defense optimization problems. Since the optimization problems \eqref{prob.attack}, \eqref{prob.defense_a} and \eqref{prob.defense_b} are convex, we use CVXPY to calculate their solutions. 

Following prior work~\cite{rakhsha2020policy-jmlr}, to test the running times, we use the chain environment from Figure \ref{fig.example}, but with different number of states (additional states  are added between $s_2$ and $s_3$, and the corresponding transitions and rewards are defined analogously to those for $s_2$). The attack and defense parameters are set to $\epsA=0.1$ and $\epsD=0.2$.
Table \ref{tab:runtime} shows the average running times (across $10$ runs) of the attack optimization problem \eqref{prob.attack.neighbor} and the defense optimization problem \eqref{prob.defense_b} for different sizes of the chain environment. 

It should be noted that the attack and defense optimization problems are similar in size,
both solve a problem with at most $|S|\cdot(|A| - 1)$ constraints on $\R^{|S|.|A|}$. However, solving the defense problem takes more time, partly because 
$\targetpi$, $\widehat{\epsilon}$ and $\tightset$ need to be identified before \eqref{prob.defense_optimization.known_epsilon} can be solved.

The machine used for obtaining these results is a Macbook Pro personal computer with 4 Gigabytes of memory and a 2.4 GHz Intel Core i5 processor.
\begin{table}[ht]
    \centering
    \renewcommand{\arraystretch}{1.5}
\begin{tabular}{c|c|c}
\diagbox{|S|}{Problem}& Attack& Defense\\\hline
4& $0.01\text{s} \pm 0.5\text{ms}$& $0.05\text{s} \pm 1.6\text{ms}$\\\hline
10& $0.01\text{s} \pm 0.2\text{ms}$& $0.09\text{s} \pm 1.5\text{ms}$\\\hline
20& $0.01\text{s} \pm 0.1\text{ms}$& $0.17\text{s} \pm 4.8\text{ms}$\\\hline
30& $0.02\text{s} \pm 2.0\text{ms}$& $0.27\text{s} \pm 9.7\text{ms}$\\\hline
50& $0.04\text{s} \pm 6.8\text{ms}$& $0.56\text{s} \pm 34.6\text{ms}$\\\hline
70& $0.07\text{s} \pm 3.0\text{ms}$& $1.02\text{s} \pm 69.7\text{ms}$\\\hline
100& $0.13\text{s} \pm 5.4\text{ms}$& $1.83\text{s} \pm 91.2\text{ms}$\end{tabular}

     \caption{Run time of the attack and defense optimization problems for the chain environment with varied number of states $|S|$. Reported numbers are average of 10 runs; standard error is shown with $\pm$.}\label{tab:runtime}
\end{table}


\section{Background and Additional MDP Properties}\label{sec.appendix.background}

In this section we briefly outline the background and MDP properties that we utilize in our proofs. 

\subsection{Reward Poisoning Attacks}
In this section, we provide some background on the cost-efficient reward poisoning attacks, focusing on the results from \cite{rakhsha2020policy-jmlr}.

The setting studied in \cite{rakhsha2020policy-jmlr} incorporates both the average and the discounted reward optimality criteria in a discrete-time Markov Decision Process (MDP), with finite state and action spaces. Our MDP setting is equivalent to their MDP setting under the discounted reward optimality criteria. This criteria can be specified by score $\score$. As defined in the main text, {\em score} $\score^\pi$ of policy $\pi$ is the total expected return scaled by factor $1-\gamma$:
\begin{align*}
    \score^{\pi} = \expct{ 
     (1-\gamma)\sum_{t=1}^{\infty} \gamma^{t-1} R(s_t, a_t) | \pi, \sigma},
\end{align*}
where the state $s_1$ is sampled from the initial state distribution $\sigma$, and subsequent states $s_t$ are obtained by executing policy $\pi$ in the MDP. Actions $a_t$ are sampled from policy $\pi$.

As explained in the main text, the following result is important for our analysis, since it allows us to simplify the optimization problem \eqref{prob.attack} into the optimization problem \eqref{prob.attack.neighbor}.
\begin{lemma}\label{lm.rakhsha}(Lemma 1 in \cite{rakhsha2020policy-jmlr})
    The score of a policy $\targetpi$ is at least $\epsA$ greater than all
    other
    deterministic policies if and only if
    its score is at least $\epsA$ greater than the score of any policy $\neighbor{\targetpi}{s}{a}$. In other words,
    \begin{align*}
        \Big(
        \forall \pi \in \Pi^{\text{det}}\backslash 
        \{ \pi^{\dagger} \}
        :
        \score^{\targetpi}\ge \score^{\pi}+\epsA
        \Big)
        \iff 
        \Big(
        \forall s, a\ne \targetpi(s):
        \score^{\targetpi}\ge \score^{\neighbor{\targetpi}{s}{a}}+\epsA
        \Big).
    \end{align*}
\end{lemma}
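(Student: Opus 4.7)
The forward direction is immediate: for each $s$ and $a \ne \targetpi(s)$ the policy $\neighbor{\targetpi}{s}{a}$ belongs to $\Pi^{\text{det}} \setminus \{\targetpi\}$, so instantiating the LHS hypothesis at $\pi = \neighbor{\targetpi}{s}{a}$ yields exactly the RHS.

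For the reverse direction, my plan is to combine the performance difference lemma (PDL) with induction on the Hamming distance $|D(\pi)| := |\{s : \pi(s) \ne \targetpi(s)\}|$ between a candidate deterministic $\pi$ and $\targetpi$. Let $V^{\targetpi}, Q^{\targetpi}$ denote the value functions of $\targetpi$ under $R$. PDL gives, for any deterministic $\pi$,
\[
\score^{\pi} - \score^{\targetpi} \;=\; \sum_{s \in D(\pi)} \occstate^{\pi}(s) \cdot \bigl[Q^{\targetpi}(s,\pi(s)) - V^{\targetpi}(s)\bigr],
\]
and specializing to $\pi = \neighbor{\targetpi}{s}{a}$ (whose $D$ is just $\{s\}$) gives the single-state identity $\score^{\neighbor{\targetpi}{s}{a}} - \score^{\targetpi} = \occstate^{\neighbor{\targetpi}{s}{a}}(s)\cdot [Q^{\targetpi}(s,a) - V^{\targetpi}(s)]$. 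Because the MDP is ergodic, $\occstate^{\pi'}(s) \in (0,1]$ for every $\pi'$, so the RHS hypothesis is equivalent to the per-pair bound $V^{\targetpi}(s) - Q^{\targetpi}(s,a) \ge \epsA/\occstate^{\neighbor{\targetpi}{s}{a}}(s) \ge \epsA$ for every $s$ and $a \ne \targetpi(s)$.

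The base case $|D(\pi)| = 1$ of the induction is then precisely the hypothesis, since such $\pi$ coincides with some $\neighbor{\targetpi}{s}{a}$. For the inductive step with $|D(\pi)| = k \ge 2$, I pick an $s^* \in D(\pi)$ and form the ``partial repair'' $\pi'$ that agrees with $\pi$ everywhere except at $s^*$, where $\pi'(s^*) = \targetpi(s^*)$. Since $|D(\pi')| = k-1$, the inductive hypothesis supplies $\score^{\targetpi} \ge \score^{\pi'} + \epsA$; so if I can select $s^*$ so that $\score^{\pi'} \ge \score^{\pi}$, chaining yields $\score^{\targetpi} \ge \score^{\pi} + \epsA$ as required.

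The main obstacle is establishing the existence of such an $s^*$ at each inductive step. This is delicate because our per-pair advantage bounds are stated for $\targetpi$ rather than for $\pi$, so classical single-step policy iteration on $\pi$ does not directly apply. I expect the resolution to use flow conservation of $\occstate^{\pi}$ together with the strict positivity of all occupancies (from ergodicity) to argue that at least one coordinate in $D(\pi)$ must be aligned with $\targetpi$ in the sense of yielding a non-worsening reversion; equivalently, this can be recast as an LP-duality claim that each inequality $\score^{\targetpi} - \score^{\pi} \ge \epsA$ already lies in the conic hull of the single-state inequalities $\{\score^{\targetpi} - \score^{\neighbor{\targetpi}{s}{\pi(s)}} \ge \epsA : s \in D(\pi)\}$ with coefficients summing to at least $1$.
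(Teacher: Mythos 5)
First, a remark on scope: the paper does not prove this statement itself --- it imports it verbatim as Lemma 1 of \cite{rakhsha2020policy-jmlr} (see Remark \ref{remark.only.neighbor}) --- so your proposal can only be judged on its own merits, not against an in-paper argument.

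Your forward direction, the PDL setup, and the base case are all fine, but the proof has a genuine gap exactly where you flag it: the existence of a state $s^* \in D(\pi)$ whose reversion to $\targetpi(s^*)$ does not decrease the score is asserted as something you ``expect'' to follow from flow conservation or an LP-duality/conic-hull claim, and neither sketch constitutes a proof. The conic-hull version is essentially the inequality $\sum_{s\in D}\mu^{\pi}(s)/\mu^{\neighbor{\targetpi}{s}{\pi(s)}}(s)\ge 1$, which is not a routine consequence of duality and is precisely the hard content of the lemma; as written, the reverse direction is therefore incomplete at its central step. The good news is that your induction can be rescued, but by a different mechanism than the one you gesture at: apply the performance difference lemma with $\pi$ (not $\targetpi$) as the baseline. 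If $Q^{\pi}(s,\targetpi(s)) < V^{\pi}(s)$ held for \emph{every} $s\in D(\pi)$, then $\score^{\targetpi}-\score^{\pi}=\sum_{s\in D(\pi)}\occstate^{\targetpi}(s)\bigl(Q^{\pi}(s,\targetpi(s))-V^{\pi}(s)\bigr)<0$ by ergodicity ($\occstate^{\targetpi}(s)>0$), contradicting the global optimality of $\targetpi$ --- which you must first extract from the hypothesis: the neighbor conditions plus $\occstate^{\neighbor{\targetpi}{s}{a}}(s)>0$ give $Q^{\targetpi}(s,a)<V^{\targetpi}(s)$ for all $a\ne\targetpi(s)$, so $V^{\targetpi}$ satisfies the Bellman optimality equation. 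Hence some $s^*$ with $Q^{\pi}(s^*,\targetpi(s^*))\ge V^{\pi}(s^*)$ exists, the reversion satisfies $\score^{\pi'}-\score^{\pi}=\occstate^{\pi'}(s^*)\bigl(Q^{\pi}(s^*,\targetpi(s^*))-V^{\pi}(s^*)\bigr)\ge 0$, and your induction closes. Without this (or an equivalent) argument spelled out, the proposal does not yet prove the reverse implication.
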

\begin{remark}\label{remark.only.neighbor}
    As explained in \cite{rakhsha2020policy-jmlr}, this lemma
    implies that the optimization problem 
    \eqref{prob.attack} is equivalent to \eqref{prob.attack.neighbor}.
    Furthermore,
    the optimization problem 
    is always feasible since
    any policy can be made
    optimal with sufficient perturbation
    of the reward
    function
    as formally shown by
    \cite{rakhsha2020policy-jmlr}
    and
    \cite{ma2019policy}.
\end{remark}
\label{lm.feasible}

\subsection{Overview of Important Quantities} 

Next, we provide an overview of standard MDP quantities and the quantities introduced in the main text that are important for our analysis. 

In addition to score $\score$,  
we consider state-action value function, or $Q$-value function, defined as 
\begin{gather*}
    Q^{\pi}(s,a) = 
    \expct{ \sum_{t=1}^{\infty}
    \gamma^{t-1}R(s_t, a_t)|s_1 =s, a_1=a, \pi
    }.
\end{gather*}
In other words, $Q^{\pi}(s,a)$ is the total expected return when 
the first state is $s$, the first action is $a$, 
while subsequent states $s_t$ and actions $a_t$ are obtained by executing policy $\pi$ in the MDP.

We consider two occupancy measures. 
By $\occupancy^{\pi}$ we denote the state-action occupancy measure in the Markov chain induced by policy $\pi$: 
\begin{align*}
   \occupancy^{\pi}(s, a) = \expct{(1-\gamma) \sum_{t=1}^{\infty} \gamma^{t-1} \ind{s_t = s, 
    a_t = a} | \pi, \sigma}.
\end{align*}
Given MDP $M$, the set of realizable occupancy measures under any (stochastic) policy $\pi \in \Pi$ is denoted by $\Occupancy$. 
Note that the following holds: 
\begin{align}\label{eq.score_occupancy_relation}
    \score^{\pi} = \vecdot{\occupancy^{\pi}}{R},
\end{align}
where $\vecdot{.}{.}$ in the above equation computes a dot product between two vectors of size $|S| \cdot |A|$ (i.e., two vectors in $\R^{|S|\cdot |A|}$).
 We also denote by $\occstate^{\pi}$ the state occupancy measure in the Markov chain induced policy $\pi \in \Pi$, i.e.:
\begin{align*}
    \occstate^{\pi}(s) = \expct{ (1-\gamma)\sum_{t=1}^{\infty} \gamma^{t-1}  \ind{s_t = s} | \pi, \sigma}.
\end{align*}
Note that
\begin{align*}
    \sum_{s,a}\occupancy^\pi(s,a) = \sum_{s} \occstate^\pi(s) = 1.
\end{align*}
State-action occupancy measure and state occupancy measure satisfy
\begin{align}\label{eq.occupancy_measures_relation_a}
    \occupancy^{\pi}(s,a) = \occstate^{\pi}(s) \cdot \pi(a|s),
\end{align}
which for deterministic $\pi$ is equivalent to
\begin{align}\label{eq.occupancy_measures_relation_b}
    \occupancy^{\pi}(s, a) = \ind{\pi(s) = a} \cdot \occstate^{\pi}(s).
\end{align}
Apart from the standard MDP quantities mentioned above, we also mention quantities introduced in the main text.  We denote by $\tightset$ state-action pairs $(s, a)$ for which the margin between $\widehat \score^{\targetpi}$ and $\widehat \score^{\neighbor{\targetpi}{s}{a}}$ is equal to $\epsilon$, i.e.:
\begin{align}
\tightset = \left \{ (s , a) : \widehat \score^{\neighbor{\targetpi}{s}{a}} - \widehat \score^{\targetpi} =  -\epsilon \right \},
\label{eq.tightset.definition}
\end{align}
which can be expressed through reward function $\widehat R$ using state-action occupancy measures $\occupancy$:
\begin{align*}
\tightset = \left \{ (s , a) : \vecdot{\occupancy^{\neighbor{\targetpi}{s}{a}} - \occupancy^{\targetpi}}{\widehat R} =  -\epsilon \right \}.
\end{align*}
Finally, quantity $\alignment(\pi)$ measures how well the occupancy measure of $\pi$ is aligned with $\occupancy^{\neighbor{\targetpi}{s}{a}}$ relative to $\occupancy^{\targetpi}$:
\begin{align}\label{eq.alignment}
   \alignment(\pi) = \vecdot{\occupancy^{\neighbor{\targetpi}{s}{a}}  - \occupancy^{\targetpi}}{\occupancy^{\pi}}.
\end{align}

\subsection{Relation Between Scores and $Q$ values}

Our analysis of Theorem \ref{cor.attack_influence_a} is based on relating score $\rho$ to occupancy measure $\occstate$.  
To do so, we utilize the following lemma, which is a well known result that relates state-action values ($Q$ values) to score $\score$.
\begin{lemma}\label{lm.score_diff_q_value}
    (Equation (2) in \cite{schulman2015trust})
    For any two deterministic policies $\pi, \pi'$ we have
    \begin{align}
     \score^{\pi'}-
        \score^{\pi}
        =\sum_{s\in S} \occstate^{\pi'}(s)
        \big(
        Q^{\pi}(s, \pi'(s))-
            Q^{\pi}(s, \pi(s)) 
        \big).
    \end{align}
\end{lemma}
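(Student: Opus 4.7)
The plan is to prove this by a telescoping argument based on the Bellman equation, reducing the difference of infinite sums to an expected one-step advantage under $\pi'$. This is the standard performance difference identity (Kakade--Langford style), so the proof is essentially a bookkeeping exercise; there is no deep obstacle, but care is needed with the $(1-\gamma)$ scaling convention used in this paper for $\score$ and $\occstate$.

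First I would introduce the (unscaled) value function $V^\pi(s) = \expctu{}{\sum_{t \ge 1} \gamma^{t-1} R(s_t,a_t) \mid s_1 = s, \pi}$, so that $Q^\pi(s,a) = R(s,a) + \gamma \sum_{s'} P(s,a,s') V^\pi(s')$ and, for deterministic $\pi$, $V^\pi(s) = Q^\pi(s,\pi(s))$. Also, by definition, $\score^\pi = (1-\gamma) \expctu{s_1 \sim \sigma}{V^\pi(s_1)}$.

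Next I would write, for any fixed initial state $s_1$,
\begin{align*}
V^{\pi'}(s_1) - V^{\pi}(s_1) = \expctu{\pi'}{\sum_{t=1}^{\infty} \gamma^{t-1} R(s_t,a_t)} - V^\pi(s_1),
\end{align*}
and then insert the telescoping identity
\begin{align*}
-V^\pi(s_1) = \expctu{\pi'}{\sum_{t=1}^{\infty} \gamma^{t-1}\bigl(\gamma V^\pi(s_{t+1}) - V^\pi(s_t)\bigr)},
\end{align*}
which holds because the series telescopes and $\gamma^t V^\pi(s_{t+1}) \to 0$. Combining these and using $\expctu{\pi'}{R(s_t,a_t) + \gamma V^\pi(s_{t+1}) \mid s_t,a_t} = Q^\pi(s_t,a_t)$ yields
\begin{align*}
V^{\pi'}(s_1) - V^{\pi}(s_1) = \expctu{\pi'}{\sum_{t=1}^{\infty} \gamma^{t-1}\bigl(Q^\pi(s_t,a_t) - V^\pi(s_t)\bigr)}.
\end{align*}
Since $\pi'$ is deterministic, $a_t = \pi'(s_t)$, and since $\pi$ is deterministic, $V^\pi(s_t) = Q^\pi(s_t,\pi(s_t))$, so the integrand becomes $Q^\pi(s_t,\pi'(s_t)) - Q^\pi(s_t,\pi(s_t))$.

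Finally, I would take expectation over $s_1 \sim \sigma$, multiply both sides by $(1-\gamma)$, and rewrite the time sum as a sum over states by grouping terms with $s_t = s$:
\begin{align*}
\score^{\pi'} - \score^{\pi} = \sum_{s \in S} \Bigl[(1-\gamma)\expctu{\pi',\sigma}{\sum_{t=1}^{\infty}\gamma^{t-1} \ind{s_t = s}}\Bigr] \bigl(Q^\pi(s,\pi'(s)) - Q^\pi(s,\pi(s))\bigr).
\end{align*}
The bracketed quantity is exactly $\occstate^{\pi'}(s)$, giving the claimed identity. The only mildly delicate step is justifying the telescoping of the infinite sum, which follows from $|V^\pi| \le \|R\|_\infty/(1-\gamma)$ and $\gamma^t \to 0$; everything else is algebra.
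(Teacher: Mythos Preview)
Your proof is correct; this is the standard Kakade--Langford performance difference argument and all the steps (telescoping, tower property, and the $(1-\gamma)$ scaling to recover $\occstate^{\pi'}$) are handled properly. The paper itself does not prove this lemma at all---it simply cites it as Equation~(2) in \cite{schulman2015trust}---so your write-up actually supplies more than the paper does.
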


\subsection{Occupancy Measures as Linear Constraints}\label{sec.appendix.background.linear.constraint.occupancy}

In this subsection, we introduce the Bellman flow linear constraints that characterize $\occupancy^{\pi}$ and $\occstate^{\pi}$. In order to characterize
$\occupancy^{\pi}$, we require the following constraints:
\begin{gather}
    \forall  s: \sum_{ a} \occupancy( s, a) = (1-\gamma)\sigma( s)
		+ \sum_{\tilde s,\tilde a} \gamma \cdot P(\tilde s, \tilde a, s) \cdot \occupancy(\tilde s,\tilde a).\label{eq.bellman.occupancy.1}\\
	\forall (s,a):
	\occupancy(s,a)\ge 0
	\label{eq.bellman.occupancy.2}.
\end{gather}
The importance of these constraints is reflected in the following lemma.
\begin{lemma}\label{lm.syed} (Theorem 2 in \cite{syed2008apprenticeship})
Let $\occupancy$ be a vector that 
satisfies the Bellman flow constraints
\eqref{eq.bellman.occupancy.1} and
\eqref{eq.bellman.occupancy.2}.
Define policy $\pi$
 as
 \begin{align}
     \pi(a|s) = \frac{\occupancy(s, a)}{\sum_{\tilde a}\occupancy(s, \tilde a)}
     \label{eq.syed.policy.def}.
 \end{align}
 Then
 $\occupancy$ is the state-action occupancy measure of $\pi$, in other words
 $\occupancy=\occupancy^\pi$.
 Conversely, 
 if $\pi\in \Pi$ is a policy with
 state-action
 occupancy measure $\occupancy$ (i.e, $\occupancy=\occupancy^{\pi}$) then
 $\occupancy$ satisfies the Bellman flow constraints 
 \eqref{eq.bellman.occupancy.1} and
 \eqref{eq.bellman.occupancy.2},
 as well as Equation
 \eqref{eq.syed.policy.def}.
\end{lemma}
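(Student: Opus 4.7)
The statement consists of two directions tying the Bellman flow polytope to policy-induced occupancy measures, and the plan is to handle the easier ``converse'' direction first and then use it, together with a uniqueness-of-solution argument, to establish the forward direction.

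For the converse direction, assume $\occupancy = \occupancy^{\pi}$ for some policy $\pi$. Non-negativity \eqref{eq.bellman.occupancy.2} is immediate from the definition $\occupancy^{\pi}(s,a) = \mathbb{E}[(1-\gamma)\sum_{t \ge 1} \gamma^{t-1} \mathbf{1}[s_t=s,a_t=a] \mid \pi, \sigma]$. For the flow constraint \eqref{eq.bellman.occupancy.1}, I would expand $\sum_{a} \occupancy^{\pi}(s,a) = \occstate^{\pi}(s)$, split the sum over $t$ into the $t=1$ contribution (which yields $(1-\gamma)\sigma(s)$) and the $t \ge 2$ contribution, and apply the Markov property $\mathbb{P}[s_t=s] = \sum_{\tilde s, \tilde a} P(\tilde s,\tilde a,s)\,\mathbb{P}[s_{t-1}=\tilde s, a_{t-1}=\tilde a]$ followed by reindexing $t' = t-1$. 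This yields exactly the right-hand side of \eqref{eq.bellman.occupancy.1}. To recover \eqref{eq.syed.policy.def}, I use \eqref{eq.occupancy_measures_relation_a}, which says $\occupancy^{\pi}(s,a) = \occstate^{\pi}(s)\,\pi(a|s)$; summing over $a$ gives $\sum_{\tilde a} \occupancy^{\pi}(s,\tilde a) = \occstate^{\pi}(s)$, and dividing recovers $\pi(a|s)$. The ergodicity assumption $\occstate^{\pi}(s) > 0$ ensures the division is well-defined.

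For the forward direction, I start from $\occupancy$ satisfying \eqref{eq.bellman.occupancy.1}--\eqref{eq.bellman.occupancy.2} and define $\occstate(s) := \sum_{a} \occupancy(s,a)$ and $\pi$ via \eqref{eq.syed.policy.def}; on any state where $\occstate(s)=0$, define $\pi(\cdot|s)$ arbitrarily (this choice will not matter). Substituting $\occupancy(\tilde s,\tilde a) = \occstate(\tilde s)\,\pi(\tilde a|\tilde s)$ into the right-hand side of \eqref{eq.bellman.occupancy.1} and summing the left-hand side over $a$ gives
\begin{align*}
\occstate(s) = (1-\gamma)\sigma(s) + \gamma \sum_{\tilde s,\tilde a} P(\tilde s,\tilde a,s)\,\pi(\tilde a|\tilde s)\,\occstate(\tilde s).
\end{align*}
By the converse direction applied to $\pi$, the vector $\occstate^{\pi}$ satisfies the same linear system. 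The operator $T_{\pi}: \mu \mapsto (1-\gamma)\sigma + \gamma M_{\pi}^{\top}\mu$, where $M_{\pi}(\tilde s, s) = \sum_{\tilde a} P(\tilde s,\tilde a,s)\,\pi(\tilde a|\tilde s)$ is row-stochastic, is a $\gamma$-contraction in the $\ell_{1}$ norm, so its fixed point is unique. Hence $\occstate = \occstate^{\pi}$, and on states with $\occstate(s) > 0$ we get $\occupancy(s,a) = \occstate(s)\,\pi(a|s) = \occstate^{\pi}(s)\,\pi(a|s) = \occupancy^{\pi}(s,a)$ by \eqref{eq.occupancy_measures_relation_a}. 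On states with $\occstate(s) = 0$, both $\occupancy(s,a)$ and $\occupancy^{\pi}(s,a) = \occstate^{\pi}(s)\,\pi(a|s) = 0$ vanish regardless of the arbitrary choice of $\pi(\cdot|s)$, so equality holds everywhere.

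\textbf{Main obstacle.} The delicate part is justifying the contraction/uniqueness step cleanly and dealing with the well-definedness of $\pi$ in \eqref{eq.syed.policy.def} at states where $\sum_{\tilde a}\occupancy(s,\tilde a)=0$. The trick is that the arbitrary extension of $\pi$ on such states is harmless because they contribute zero mass to both $\occupancy$ and $\occupancy^{\pi}$; once this is noted, the rest reduces to the standard fact that the discounted linear Bellman operator for a fixed policy is a contraction and thus has a unique fixed point.
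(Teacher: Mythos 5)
Your proof is correct, but note that the paper does not prove this lemma at all: it is stated as an imported result, ``(Theorem 2 in \cite{syed2008apprenticeship})'', so there is no in-paper argument to compare against. What you have supplied is a self-contained derivation of that external result, and it is the standard one: the converse direction by splitting the discounted sum defining $\occupancy^{\pi}$ into the $t=1$ term and the $t\ge 2$ terms and applying the Markov property, and the forward direction by observing that both $\occstate(s):=\sum_a\occupancy(s,a)$ and $\occstate^{\pi}$ are fixed points of the affine operator $\mu\mapsto(1-\gamma)\sigma+\gamma M_{\pi}^{\top}\mu$, which is a $\gamma$-contraction in $\ell_1$ because $M_{\pi}$ is row-stochastic, hence has a unique fixed point. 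Your handling of states with $\sum_{\tilde a}\occupancy(s,\tilde a)=0$ is careful and correct, though in this paper it is moot: ergodicity is assumed throughout, so $\occstate^{\pi}(s)>0$ for every $\pi$ and $s$, and once you have shown $\occstate=\occstate^{\pi}$ the degenerate case cannot occur. The one step worth making fully explicit if this were to be written out is that the converse direction, after substituting $\occupancy^{\pi}(\tilde s,\tilde a)=\occstate^{\pi}(\tilde s)\pi(\tilde a|\tilde s)$ into \eqref{eq.bellman.occupancy.1}, yields exactly the state-level fixed-point equation \eqref{eq.bellman.occstate} that the uniqueness argument requires; you assert this implicitly and it does go through.
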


By characterizing the condition
$\occupancy \in \Occupancy$
as linear constraints, namely 
\eqref{eq.bellman.occupancy.1}
and
\eqref{eq.bellman.occupancy.2}
,
the optimization problem
\eqref{prob.defense_optimization.known_epsilon}
becomes a linear program.
Furthermore given the one-to-one correspondence between policies and
occupancy measures, we can work with the latter instead of the former.

As for $\occstate^{\pi}$,
it is well-known (e.g., see \cite{rakhsha2020policy-jmlr})
 that
 a vector
 $\occstate$ is the state occupancy measure
 for policy $\pi$ (i.e., $\occstate=\occstate^{\pi}$), if and only if
 \begin{align}
     \occstate(s) = 
     (1-\gamma)\sigma(s) + 
     \gamma
     \sum_{\tilde s, \tilde a}
     \occstate(\tilde s)\pi(\tilde a|\tilde s)
     P(\tilde s, \tilde a, s)
     \label{eq.bellman.occstate}.
 \end{align}

\subsection{Additional MDP Properties}
\label{sec.appendix.additional_properties}

In this subsection we
state and prove a lemma that we need for
Theorem \ref{cor.attack_influence_a}, that is, Lemma \ref{lemma.fraction.bound}. This lemma compares the state occupancy measures of different policies: $\targetpi$, $\neighbor{\targetpi}{s}{a}$, and a policy $\pi$.  

Now, note that Lemma \ref{lm.score_diff_q_value}
provides an instructive way of comparing the scores 
of different policies. We will therefore try to utilize Lemma \ref{lm.score_diff_q_value} by considering specially designed reward functions (which are vectors of size $|S| \cdot |A|$, i.e., vectors in $\R^{|S|\cdot |A|}$). Note that these reward functions do not play any role in our attack and defense optimization problems; we only introduce  them for our proof technique in this section.
For example, if we set the reward function as
\begin{gather*}
    R(s, a) = \ind{s=s_0},
\end{gather*}
where $s_0$ is an arbitrary state,
then the score of a policy 
$\pi$ will be equal to $\score^{\pi} = \vecdot{\occupancy^{\pi}}{R} = \occstate^\pi(s_0)$.
Since changing the reward function
does not affect $\occstate$, 
this gives us a tool to relate
$\occstate$ to $\score$.

These insights are reflected in the following lemma.
\begin{lemma}\label{lemma.mu.sup.R}
	Let $\mathcal{R}$ be the set of all reward vectors $R$ such that
	\begin{gather}
		\forall(s,a): \sum_{s} |R(s, \targetpi(s))| \le 1 , \quad
		\forall s, a, \tilde a: 
		 R(s,a) =  R(s,\tilde a),
	\end{gather}
	then
	\begin{gather}
		||\occstate^{\pi}-\occstate^{\targetpi}||_{\infty} = 
		\sup_{R\in \mathcal{R}} \Big(
			\score^\pi( R) - \score^\targetpi(R)
		\Big),
	\end{gather}
	where
	$
	    \score^{\pi}(R)
	    =\vecdot{R}{\occupancy^{\pi}}
	    $
	    as in 
	    \eqref{eq.score_occupancy_relation}
	and we use the
	notation $\score^{\pi}(R)$
	to make the dependence on $R$
	explicit.
\end{lemma}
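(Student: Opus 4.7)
The plan is to reduce the claim to the standard $\ell_1$--$\ell_\infty$ duality on $\mathbb{R}^{|S|}$. The constraints defining $\mathcal{R}$ are telling us two things: first, that $R$ depends only on the state (the equality $R(s,a)=R(s,\tilde a)$), and second, that the induced state-only vector has $\ell_1$-norm at most $1$ (the sum $\sum_s |R(s,\targetpi(s))|\le 1$, which under action-independence is just $\|R\|_1$ viewed as a vector in $\mathbb{R}^{|S|}$). So as a first step I would introduce $r \in \mathbb{R}^{|S|}$ defined by $r(s) := R(s,a)$ for any $a$, and note that the feasible set $\mathcal{R}$ is in bijection with the $\ell_1$ unit ball $\{r : \|r\|_1 \le 1\}$.

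Second, I would rewrite the score difference in terms of $r$ and the state occupancy measures. Using $\score^{\pi}(R) = \vecdot{\occupancy^{\pi}}{R}$ from Equation \eqref{eq.score_occupancy_relation} together with $\occupancy^{\pi}(s,a)=\occstate^{\pi}(s)\cdot \pi(a|s)$ from Equation \eqref{eq.occupancy_measures_relation_a}, action-independence of $R$ gives
\begin{align*}
\score^{\pi}(R) \;=\; \sum_{s,a} \occstate^{\pi}(s)\,\pi(a|s)\,r(s) \;=\; \sum_s \occstate^{\pi}(s)\,r(s),
\end{align*}
since $\sum_a \pi(a|s)=1$. Consequently
\begin{align*}
\score^{\pi}(R) - \score^{\targetpi}(R) \;=\; \sum_s r(s)\bigl(\occstate^{\pi}(s)-\occstate^{\targetpi}(s)\bigr) \;=\; \vecdot{r}{\occstate^{\pi}-\occstate^{\targetpi}}.
\end{align*}

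Third, I would invoke the standard $\ell_1$--$\ell_\infty$ duality: for any vector $v \in \mathbb{R}^{|S|}$, $\sup_{\|r\|_1\le 1} \vecdot{r}{v} = \|v\|_\infty$, with the supremum attained (and hence a maximum) by taking $r = \sign(v(s^\star))\,\mathbf{e}_{s^\star}$ where $s^\star \in \argmax_s |v(s)|$ and $\mathbf{e}_{s^\star}$ is the corresponding standard basis vector. Applying this with $v = \occstate^{\pi}-\occstate^{\targetpi}$ yields the desired equality.

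There is no real obstacle here; the lemma is essentially a restatement of $\ell_1$--$\ell_\infty$ duality once the two structural constraints defining $\mathcal{R}$ are unpacked. The only minor care point is verifying that the witnessing $r$ constructed from a single basis vector indeed lies in $\mathcal{R}$, i.e., satisfies both the action-independence condition (it does, by construction) and $\sum_s |r(s)|\le 1$ (it satisfies this with equality), so the supremum is attained within $\mathcal{R}$ and the equality in the statement is exact rather than a mere inequality.
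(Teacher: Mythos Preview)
Your proposal is correct and follows essentially the same approach as the paper: introduce the state-only vector $r$, collapse the action sum using $\occupancy^{\pi}(s,a)=\occstate^{\pi}(s)\pi(a|s)$ to get $\score^{\pi}(R)-\score^{\targetpi}(R)=\vecdot{r}{\occstate^{\pi}-\occstate^{\targetpi}}$, and then invoke $\ell_1$--$\ell_\infty$ duality. Your additional remark that the supremum is attained by a signed basis vector is a nice touch the paper defers to a separate remark.
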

\begin{proof}
    Assume that
    $R\in \mathcal{R}$
    and
	denote by $r$
	the vector in $\R^{|S|}$ with entries $r(s)=R(s,\targetpi(s))$.
	Since $R(s,a)=R(s, \tilde a)$,
	\begin{align*}
	    \score^\pi(R)&=
	    \sum_{s,a}\occupancy^\pi(s,a)R(s,a) \\&=
	    \sum_{s}\occstate^\pi(s)\sum_{a}\pi(a|s)R(s,a)\\&=
	    \sum_{s}\occstate^\pi(s)\sum_{a}\pi(a|s)r(s)\\&=
	    \sum_{s}\occstate^\pi(s)r(s).
	\end{align*}
	Which implies
	\begin{align*}
	  \vecdot{r}{
	  \occstate^{\pi}
	  -\occstate^{\targetpi}
	  }
	  =\score^{\pi}(R)-\score^{\targetpi}(R).
	\end{align*}
	Since the constraint
	$\sum_{s}|R(s, \targetpi(s))|\le 1$ is equivalent to
	$||r||_1\le 1$,
	the claim follows directly from the fact that $||.||_\infty$ and $||.||_1$ are dual norms.
\end{proof}
\begin{remark}
\label{remark.sup.attainable}
	The $\sup$ in the above lemma can be changed to $\max$
	since the set $\mathcal{R}$
	is compact (it is clearly bounded
	and it is closed since it is the intersection of a closed ball with 
	closed subspaces)
	and the function $R \to \score^\pi(R)$
	is continuous.
\end{remark}
\begin{lemma}\label{lemma.fraction.bound}
	For all policies $\pi, \targetpi$, it holds that
\begin{gather*}
	\frac{
		||\occstate^{\pi}-\occstate^{\targetpi}||_{\infty}
	}{
		\max_{s,a}	||\occstate^{\neighbor{\targetpi}{s}{a}}-\occstate^{\targetpi}||_{\infty}
}\le 
\frac{1}{
\occstate_{\min}
},
\end{gather*}
where
\begin{gather*}
	\occstate_{\min} = \min_{s, \tilde{\pi}}\occstate^{\tilde \pi} (s).
\end{gather*}
\end{lemma}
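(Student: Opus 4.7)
The plan is to upper bound $||\occstate^{\pi} - \occstate^{\targetpi}||_\infty$ by applying Lemma \ref{lemma.mu.sup.R} to reduce the task to uniformly bounding $|\score^{\pi}(R) - \score^{\targetpi}(R)|$ over $R \in \mathcal{R}$. I would first derive the following stochastic-policy analogue of Lemma \ref{lm.score_diff_q_value}:
\begin{align*}
\score^{\pi}(R) - \score^{\targetpi}(R) = \sum_{s} \occstate^{\pi}(s) \sum_{a \ne \targetpi(s)} \pi(a|s) \bigl(Q^{\targetpi}(s, a) - Q^{\targetpi}(s, \targetpi(s))\bigr).
\end{align*}
This identity follows by substituting $R(s, a) = Q^{\targetpi}(s, a) - \gamma \sum_{s'} P(s, a, s') V^{\targetpi}(s')$ into $\score^{\pi}(R) = \vecdot{\occupancy^{\pi}}{R}$ and telescoping via the Bellman flow equation \eqref{eq.bellman.occupancy.1}.

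Next, I would apply Lemma \ref{lm.score_diff_q_value} itself to the deterministic pair $(\neighbor{\targetpi}{s}{a}, \targetpi)$. Because these policies agree everywhere except at state $s$, the sum in that lemma collapses to a single term and yields
\begin{align*}
Q^{\targetpi}(s, a) - Q^{\targetpi}(s, \targetpi(s)) = \frac{\score^{\neighbor{\targetpi}{s}{a}}(R) - \score^{\targetpi}(R)}{\occstate^{\neighbor{\targetpi}{s}{a}}(s)}.
\end{align*}
Substituting this back expresses $\score^{\pi}(R) - \score^{\targetpi}(R)$ as a nonnegatively weighted combination of the single-deviation score gaps $\score^{\neighbor{\targetpi}{s}{a}}(R) - \score^{\targetpi}(R)$, with coefficients $\occstate^{\pi}(s)\,\pi(a|s)/\occstate^{\neighbor{\targetpi}{s}{a}}(s)$.

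I would then take absolute values and combine three ingredients. First, $\occstate^{\neighbor{\targetpi}{s}{a}}(s) \ge \occstate_{\min}$ by the definition of $\occstate_{\min}$. Second, for any $R \in \mathcal{R}$ the bound $|\score^{\neighbor{\targetpi}{s}{a}}(R) - \score^{\targetpi}(R)| \le ||\occstate^{\neighbor{\targetpi}{s}{a}} - \occstate^{\targetpi}||_\infty$ follows from Lemma \ref{lemma.mu.sup.R} applied to both $R$ and $-R$, using that $\mathcal{R}$ is closed under negation. Third, the weights satisfy $\sum_{s} \occstate^{\pi}(s) \sum_{a \ne \targetpi(s)} \pi(a|s) \le \sum_{s} \occstate^{\pi}(s) = 1$. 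Taking the supremum over $R \in \mathcal{R}$ via Lemma \ref{lemma.mu.sup.R} and rearranging then yields the claimed inequality.

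The only non-bookkeeping step is deriving the stochastic-$\pi$ form of the performance-difference identity, since the excerpt's Lemma \ref{lm.score_diff_q_value} is stated only for deterministic policies; I expect this to be the one place the proof departs from a direct chaining of cited results, though the derivation is short and parallels the standard Kakade--Langford argument.
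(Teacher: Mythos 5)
Your proof is correct and follows essentially the same route as the paper's: both convert $\norm{\occstate^{\pi}-\occstate^{\targetpi}}_{\infty}$ into a score difference via Lemma \ref{lemma.mu.sup.R}, expand that difference with the performance-difference identity of Lemma \ref{lm.score_diff_q_value}, collapse the single-deviation case so that each $Q$-gap equals $(\score^{\neighbor{\targetpi}{s}{a}}(R)-\score^{\targetpi}(R))/\occstate^{\neighbor{\targetpi}{s}{a}}(s)$, and finish with $\occstate^{\neighbor{\targetpi}{s}{a}}(s)\ge\occstate_{\min}$. The only substantive difference is that you derive the stochastic-policy form of the identity and keep the full weighted sum (with weights summing to at most one), whereas the paper's proof writes $\pi(s)$ and bounds the sum by a maximum over states, implicitly treating $\pi$ as deterministic even though the lemma is stated for all policies; your version is the more careful one on that point.
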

\begin{proof}
	Given Remark
	\ref{remark.sup.attainable}, we can set the reward vector
	$R\in\mathcal{R}$ to be the vector such that
	\begin{gather*}
		||\occstate^{\pi}-\occstate^{\targetpi}||_{\infty} = 
		\Big(
		\score^\pi(R) - \score^\targetpi(R)
		\Big).
	\end{gather*}
	Now, note that given Lemma \ref{lm.score_diff_q_value} (with
	$(\pi', \pi)=(\pi, \targetpi)$), we have
	\begin{align*}
		\score^\pi(R) - \score^\targetpi(R) &= 
		\sum_{s\in S} \occstate^{\pi}(s)
		\big(
		Q^{\targetpi}(s, \pi(s)) - Q^{\targetpi}(s, \targetpi(s))
		\big) \\&\le 
		\max_{s}
		\Big(
		Q^{\targetpi}(s, \pi(s)) - Q^{\targetpi}(s, \targetpi(s))
		\Big).
	\end{align*}
	Furthermore, Lemma \ref{lm.score_diff_q_value} with
	$
	(\pi', \pi)=(\neighbor{\targetpi}{s}{\pi(s)}, \targetpi)$ implies 
	\begin{gather*}
	\score^{\neighbor{\targetpi}{s}{\pi(s)}}(R) - \score^{\targetpi}(R) = 
		\occstate^{\neighbor{\targetpi}{s}{\pi(s)}}(s)
		\big(
		Q^{\targetpi}(s, \pi(s)) - Q^{\targetpi}(s, \targetpi(s))
		\big).
	\end{gather*}
However 
 \begin{align*}
			\score^{\neighbor{\targetpi}{s}{\pi(s)}}(R) - \score^{\targetpi}(R) &\le 
			\sup_{\tilde R \in \mathcal{R}}
			\Big(
			\score^{\neighbor{\targetpi}{s}{\pi(s)}}(\tilde R) - \score^{\targetpi}( \tilde R)
			\Big)
			 \\&\overset{(i)}{\le}
			 \max_{s,a}	||\occstate^{\neighbor{\targetpi}{s}{a}}-\occstate^{\targetpi}||_{\infty},
	\end{align*}
	 where
	 we used 
	 Lemma \ref{lemma.mu.sup.R}
	 to obtain
	 $(i)$.
	 Since $\occstate^{\neighbor{\targetpi}{s}{\pi(s)}}(s) \ge \occstate_{\min}$,
	we conclude that
	\begin{align*}
	    ||\occstate^{\pi}-\occstate^{\targetpi}||_{\infty} &= 
		\Big(
		\score^\pi(R) - \score^\targetpi(R)
		\Big)
		\\&\le 
		\max_{s}
		\Big(
		Q^{\targetpi}(s, \pi(s)) - Q^{\targetpi}(s, \targetpi(s))
		\Big)
		\\&= 
		\max_{s}
		\Big(
		\frac{\score^{\neighbor{\targetpi}{s}{\pi(s)}}(R) - \score^{\targetpi}(R)}{\occstate^{\neighbor{\targetpi}{s}{\pi(s)}}(s)}
		\Big)
		\\&\le \frac{
		\max_{s,a}	||\occstate^{\neighbor{\targetpi}{s}{a}}-\occstate^{\targetpi}||_{\infty}
		}{\occstate_{\min}}.
	\end{align*}
	Rearranging the above proves the claim.
\end{proof}


\section{Attack Characterization Results}\label{sec.appendix.attack.characterization}

In this section we provide characterization results for the attack optimization problem \eqref{prob.attack}, which we use for proving the formal results presented in Section \ref{sec.defense_characterization_general_MDP.known_epsilon} and Section
\ref{sec.defense_characterization_general_MDP.unknown_epsilon}.
 In particular, the main result of this appendix is a set of Karush–Kuhn–Tucker (KKT) conditions that  characterize the solution to the optimization problem \eqref{prob.attack}. 

To compactly express the KKT characterization results, let us introduce state occupancy difference matrix $\occdiffmatrix\in \mathbb{R}^{|S|\cdot(|A|-1)\times |S|\cdot|A|}$ as a matrix with rows consisting of the vectors $\occupancy^{\neighbor{\targetpi}{s}{a}} - \occupancy^{\targetpi}$
for all neighboring policies
$\neighbor{\targetpi}{s}{a}$.
Additionally, for all $s, a \ne \targetpi(s)$, we use $\occdiffmatrix(s,a)$ to denote the transpose of the row of $\occdiffmatrix$ corresponding to $(s,a)$. Note that $\occdiffmatrix(s,a)$ is a column vector.
In this notation, given Remark \ref{remark.only.neighbor}
and Equation 
\eqref{eq.score_occupancy_relation}
, the optimization problem \eqref{prob.attack} is equivalent to 
\begin{align}
\label{prob.attack_compact}
\tag{P1"}
	&\min_{R} \frac 12 \norm{R-R'}_2^2\\
	\notag
&\quad \mbox{ s.t. } \quad \occdiffmatrix \cdot R \preccurlyeq -\epsA \cdot \mathbf 1, 
\end{align}
where $\mathbf 1$ is a $|S| \cdot (|A| - 1)$ vector whose each element equal to $1$, and $\preccurlyeq$ specifies that the left hand side is element-wise less than or equal to the right hand side.
Given this notation, the following lemma states the KKT conditions for a reward function $R$ (i.e., an $|S|\cdot |A|$ vector) to be an optimal solution to the optimization problem \eqref{prob.attack}.

\begin{lemma}{(KKT characterization)}\label{lm.kkt_conditions}
	$R$ is a solution to the optimization problem \eqref{prob.attack} if and only if
	there exists an $|S|\cdot |A|$ vector $\lambda$ such that
	\begin{align*}
		 (R-R') + \occdiffmatrix^T \cdot \lambda = \mathbf 0 & \quad\quad\text{\em stationarity},\\
		\occdiffmatrix \cdot R + \epsA \cdot \mathbf 1 \preccurlyeq \mathbf 0 & \quad\quad\text{\em primal feasibility},\\
		\lambda \succcurlyeq \mathbf 0 &\quad\quad \text{\em dual feasibility},\\
		\quad\quad\forall(s, a\ne \targetpi(s)): \lambda(s,a)\cdot (\occdiffmatrix(s,a)^T \cdot R+ \epsA) = \mathbf 0 & 
		\quad\quad\text{\em complementary slackness},
	\end{align*}
	where $\mathbf 0$ denotes an $|S| \cdot |A|$ vector whose each element equal to $0$, and likewise, $\mathbf 1$ denotes an $|S| \cdot |A|$ vector whose each element equal to $0$.
\end{lemma}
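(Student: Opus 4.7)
The plan is to recognize \eqref{prob.attack_compact} as a convex quadratic program with affine inequality constraints and apply the standard Karush--Kuhn--Tucker theorem. The objective $f(R) = \tfrac{1}{2}\norm{R-R'}_2^2$ is strongly convex with gradient $\nabla f(R) = R - R'$, and each constraint $\occdiffmatrix(s,a)^\top R + \epsA \le 0$ is affine with gradient $\occdiffmatrix(s,a)$. By Remark \ref{remark.only.neighbor} the feasible set is non-empty, so the problem attains a (unique) optimum. Forming the Lagrangian
\begin{align*}
L(R, \lambda) \;=\; \tfrac{1}{2}\norm{R-R'}_2^2 \,+\, \lambda^\top \bigl(\occdiffmatrix R + \epsA \mathbf{1}\bigr),
\end{align*}
with multiplier vector $\lambda$ indexed by pairs $(s,a)$ with $a\ne\targetpi(s)$, the equation $\nabla_R L = 0$ immediately produces the stationarity condition $(R - R') + \occdiffmatrix^\top \lambda = \mathbf{0}$.

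For the necessity direction, i.e.\ $R$ optimal $\Rightarrow$ existence of $\lambda$ satisfying the four KKT conditions, I would invoke the standard fact that affine inequality constraints automatically satisfy Abadie's constraint qualification, so no Slater-type condition is required; this guarantees the existence of $\lambda \succcurlyeq \mathbf{0}$ satisfying stationarity and complementary slackness. Primal feasibility is simply that $R$ lies in the feasible set.

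For sufficiency, I would argue directly from convexity. Suppose $(R, \lambda)$ satisfies all four conditions and let $\widetilde R$ be any feasible point. First-order convexity gives
\begin{align*}
f(\widetilde R) \;\ge\; f(R) + (R - R')^\top(\widetilde R - R) \;=\; f(R) - \lambda^\top \occdiffmatrix (\widetilde R - R),
\end{align*}
where the equality uses stationarity. Primal feasibility of $\widetilde R$ together with dual feasibility of $\lambda$ yields $\lambda^\top \occdiffmatrix \widetilde R \le -\epsA\, \lambda^\top \mathbf{1}$, while complementary slackness at $R$ gives $\lambda^\top \occdiffmatrix R = -\epsA\, \lambda^\top \mathbf{1}$. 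Substituting these two bounds causes the $\epsA\,\lambda^\top \mathbf{1}$ terms to cancel, leaving $f(\widetilde R) \ge f(R)$, which establishes the optimality of $R$.

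The main obstacle is not technical but bookkeeping: the proof is essentially a textbook application of KKT theory, and the only thing to watch out for is the dimensions and sign conventions. In particular, $\lambda$ naturally lives in $\R^{|S|\cdot(|A|-1)}$ (one component per neighbouring policy), not $\R^{|S|\cdot|A|}$, so the zero vectors appearing in the stationarity condition versus the dual-feasibility and complementary-slackness conditions sit in different ambient spaces, and the statement should be read with this in mind. Once the indexing is fixed, each of the four conditions corresponds directly to one of the four standard KKT requirements, and no further work is needed.
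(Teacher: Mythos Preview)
Your proof is correct and follows essentially the same approach as the paper: form the Lagrangian, compute its gradient to obtain stationarity, and invoke standard KKT theory for convex problems with affine constraints. The paper's version is terser (it simply notes feasibility, linearity of constraints, strong duality, and then appeals to ``the standard KKT conditions''), whereas you spell out the constraint qualification and the sufficiency argument explicitly, and you also correctly flag the dimension mismatch in the statement of $\lambda$.
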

\begin{proof}
	Since \eqref{prob.attack} is always feasible (Remark \ref{remark.only.neighbor}) and all of the constrains are linear, strong duality holds. Now, the Lagrangian of the optimization problem is equal to
	\begin{align*}
		\mathcal{L} = \frac 12 \norm{R-R'}_2^2 + \lambda^T(\occdiffmatrix \cdot R + \epsA \cdot \mathbf{1}),
	\end{align*}
	and taking the gradient with respect to $R$ gives us
	\begin{align*}
		\nabla_{R}\mathcal{L} = (R-R') + \occdiffmatrix^T \cdot \lambda.
	\end{align*}
	The statement then follows by applying the standard KKT conditions.
\end{proof}
\begin{remark}\label{rem.unique.attack}
(Uniqueness)
    The solution to the optimization problem 
    \eqref{prob.attack}
    is unique since the objective
    $\frac{1}{2}
    ||R-R'||_2^2 
    $ is strongly convex.
\end{remark}


\section{Proofs of Section \ref{sec.defense_characterization_general_MDP.known_epsilon}}\label{sec.appendix.proofs_defense_characterization_general_MDP_known_epsilon}

This section of the appendix contains the proofs of the formal results presented in Section \ref{sec.defense_characterization_general_MDP.known_epsilon}.  

\subsection{Proof of Lemma \ref{lm.tightset}}\label{sec.appendix.proof_lm_tightset}

\textbf{Statement:} {\em Reward function $R$ satisfies $\widehat{R} = \attack(R, \targetpi, \epsA)$ if any only if
\begin{align*}
    R = \widehat{R} + \sum_{(s, a) \in \tightsettarget} \alpha_{s,a} \cdot \left (\occupancy^{\neighbor{\targetpi}{s}{a}} - \occupancy^{\targetpi} \right),
\end{align*}
for some $\alpha_{s, a} \ge 0$.}
\begin{proof}
    To prove the statement, we use Lemma \ref{lm.kkt_conditions}.
    The primal feasibility condition in the lemma always holds as
    $\widehat R \in \attack(\overline{R},
    \targetpi, \epsA)$. 
    Therefore $\widehat{R} \in \attack(R,
    \targetpi, \epsA)$ if and only if there exists $\lambda$ such that the other three conditions hold. Note that the complementary slackness condition is equivalent to
    \begin{gather*}
    \forall (s, a\ne \targetpi(s)): 
        \lambda(s,a)=0 \lor \occdiffmatrix(s,a)^T\cdot R + \epsA =0 \iff
        \forall (s,a) \notin \tightsettarget: \lambda(s,a)=0.
    \end{gather*}
    Therefore from dual feasibility, stationarity and complemantary slackness it follows that  $\widehat R \in \attack(R, \targetpi, \epsA)$ if and only if there exists $\lambda$ such that
    \begin{gather*}
    \lambda \succcurlyeq 0,\\
    R=\widehat{R} + \sum_{(s,a)} \lambda(s,a) \cdot \left (\occupancy^{\neighbor{\targetpi}{s}{a}} - \occupancy^{\targetpi} \right),\\
     \forall (s,a) \notin \tightsettarget: \lambda(s,a)=0.
    \end{gather*}
    The Lemma therefore follows by setting $\alpha_{s,a} =  \lambda(s,a)$ since setting $\lambda(s,a)=0$ for 
    all $(s,a) \notin \tightsettarget$ is equivalent to not summing over the terms corresponding to $(s,a) \notin \tightsettarget$ 
    in the stationarity condition.
\end{proof}
A direct consequence of this lemma is the following result.
\begin{corollary}\label{lm.epsilon.hat}
    Assume that
    $\widehat{R}=\attack(R, \targetpi, \epsA)$
    and
    $\widehat{R}\ne R$.
    It follows that
    \begin{align*}
        \widehat{\epsilon}=\epsA,
    \end{align*}
    where
    \begin{align*}
    \widehat{\epsilon}
    =
    \min_{s,a\ne \targetpi(s)}
    \left[  \hatscore^{\targetpi}- \hatscore^{\neighbor{\targetpi}{s}{a}} \right].
    \end{align*}
\end{corollary}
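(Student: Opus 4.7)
\textbf{Proof plan for Corollary \ref{lm.epsilon.hat}.} The plan is to establish the two inequalities $\widehat{\epsilon} \ge \epsA$ and $\widehat{\epsilon} \le \epsA$ separately, with the first following from primal feasibility of the attack problem and the second coming from a non-triviality argument via Lemma \ref{lm.tightset}.

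For the lower bound $\widehat{\epsilon} \ge \epsA$, I would simply invoke the fact that $\widehat{R} = \attack(R, \targetpi, \epsA)$ is by definition a feasible point of \eqref{prob.attack.neighbor}, and hence satisfies $\widehat{\score}^{\targetpi} \ge \widehat{\score}^{\neighbor{\targetpi}{s}{a}} + \epsA$ for every $s$ and every $a \ne \targetpi(s)$. Taking the minimum over such $(s,a)$ immediately gives $\widehat{\epsilon} \ge \epsA$.

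For the upper bound $\widehat{\epsilon} \le \epsA$, the key step is to apply Lemma \ref{lm.tightset} to the reward function $R$ itself. The lemma guarantees the existence of non-negative coefficients $\{\alpha_{s,a}\}_{(s,a) \in \tightsettarget}$ such that $R = \widehat{R} + \sum_{(s,a) \in \tightsettarget} \alpha_{s,a} (\occupancy^{\neighbor{\targetpi}{s}{a}} - \occupancy^{\targetpi})$. Since we are given $R \ne \widehat{R}$, the sum on the right cannot be identically zero, so the index set $\tightsettarget$ must be non-empty. By the definition of $\tightsettarget$ in \eqref{eq.tightset.definition}, this means there exists at least one pair $(s,a)$ with $\widehat{\score}^{\targetpi} - \widehat{\score}^{\neighbor{\targetpi}{s}{a}} = \epsA$, which yields $\widehat{\epsilon} \le \epsA$.

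Combining the two inequalities finishes the proof. There is no real obstacle here; the only subtle step is recognizing that the non-triviality condition $R \ne \widehat{R}$ is exactly what forces the complementary slackness set $\tightsettarget$ to be non-empty, which is where the hypothesis $\widehat{R} \ne R$ is consumed. Note that strictly speaking one should also observe that the summation being nonzero requires at least one coefficient to be strictly positive (which follows because the $\alpha_{s,a}$ are non-negative), rather than merely that the index set is non-empty; this is immediate from the representation.
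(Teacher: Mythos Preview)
Your proposal is correct and follows essentially the same approach as the paper: both use primal feasibility for $\widehat{\epsilon} \ge \epsA$ and invoke Lemma \ref{lm.tightset} together with the hypothesis $R \ne \widehat{R}$ to force $\tightsettarget \ne \emptyset$, which gives $\widehat{\epsilon} \le \epsA$. The only cosmetic difference is that the paper phrases the second half as a proof by contradiction (assuming $\widehat{\epsilon} > \epsA$ and deriving $\tightsettarget = \emptyset$, hence $R = \widehat{R}$), whereas you argue the contrapositive directly; the logical content is identical.
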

\begin{proof}
    Assume to the contrary that
    $\widehat{\epsilon}\ne \epsA$.
    Given the primal feasibility
    condition in Lemma \ref{lm.kkt_conditions}, 
    $\widehat{\epsilon}\ge \epsA$. Therefore
    $\widehat{\epsilon} > \epsA$. It follows that
    \begin{align*}
        \forall s, a \ne \targetpi(s):
        \hatscore^{\targetpi}- \hatscore^{\neighbor{\targetpi}{s}{a}} > \epsA \implies
        \tightsettarget=\emptyset.
    \end{align*}
    Given Lemma \ref{lm.tightset}, this implies
    that
    ${R}=\widehat{R}$, which contradicts the
    initial assumption ${R} \ne \widehat{R}$.
\end{proof}

\subsection{Proof of Theorem \ref{thm.defense_optimization.known_epsilon}}\label{sec.appendix.proof_thm_defense_optimization.known_epsilon}
Before proving the theorem we prove some results
that we need for the proof of this theorem, as well as for the results in later sections.
\begin{lemma}\label{lm.wost_case_utility}
     Consider policy $\pi$ with state-action occupancy measure $\occupancy^{\pi}$. Solution $\score^{\pi}_{\min}$ to the following optimization problem:
\begin{align*}
	\label{prob.wost_case_utility}
	\tag{P4}
	&\quad \min_{R} \score^{\pi} \quad \mbox{ s.t. } \quad  \widehat R = \attack(R, \targetpi, \epsA),
\end{align*}
satisfies:
\begin{align*}
    \score^{\pi}_{\min} = \begin{cases}
        \widehat \score^{\pi} &\mbox{ if } \quad \forall s, a \in \tightsettarget: \alignment(\pi) \ge 0  \\
        -\infty &\mbox{ otherwise }
    \end{cases}.
\end{align*} 
\end{lemma}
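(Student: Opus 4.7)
The plan is to leverage Lemma \ref{lm.tightset}, which already gives a complete parametrization of the feasible set of the optimization problem \eqref{prob.wost_case_utility}. Since any feasible $R$ can be written as $R = \widehat{R} + \sum_{(s,a) \in \tightsettarget} \alpha_{s,a} \cdot (\occupancy^{\neighbor{\targetpi}{s}{a}} - \occupancy^{\targetpi})$ with $\alpha_{s,a} \ge 0$, and vice versa, I would first rewrite the objective $\score^{\pi} = \vecdot{\occupancy^{\pi}}{R}$ by substituting this expression and using linearity of the inner product. This produces
\begin{align*}
\score^{\pi} \;=\; \vecdot{\occupancy^{\pi}}{\widehat{R}} \;+\; \sum_{(s,a) \in \tightsettarget} \alpha_{s,a} \cdot \vecdot{\occupancy^{\pi}}{\occupancy^{\neighbor{\targetpi}{s}{a}} - \occupancy^{\targetpi}} \;=\; \widehat{\score}^{\pi} \;+\; \sum_{(s,a) \in \tightsettarget} \alpha_{s,a} \cdot \alignment(\pi),
\end{align*}
after invoking the definition of $\alignment(\pi)$ in Equation \eqref{eq.alignment}.

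The minimization thus reduces to a linear program in the nonnegative variables $\{\alpha_{s,a}\}_{(s,a) \in \tightsettarget}$, which separates across pairs $(s,a)$. I would then split the analysis into two cases. In the first case, when $\alignment(\pi) \ge 0$ for every $(s,a) \in \tightsettarget$, each summand is nonnegative so the objective is minimized by choosing $\alpha_{s,a} = 0$ for every $(s,a)$; this selection is feasible (it yields $R = \widehat{R}$, and since $\widehat{R} = \attack(\widehat{R}, \targetpi, \epsA)$ whenever $\tightsettarget$ already encodes the attack constraints, this is indeed admissible), and the optimal value is $\widehat{\score}^{\pi}$. In the second case, when there exists some $(s_0, a_0) \in \tightsettarget$ with $\alignment(\pi) < 0$, I would take $\alpha_{s_0, a_0} \to \infty$ while setting all other $\alpha_{s,a} = 0$; this keeps the reward vector in the feasible set and drives the score to $-\infty$.

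The only subtle point, and the place I would be most careful, is verifying that setting all $\alpha_{s,a} = 0$ in the first case yields a feasible $R$. Technically this gives $R = \widehat{R}$, and I need to confirm that $\widehat{R} = \attack(\widehat{R}, \targetpi, \epsA)$; this holds because the KKT characterization in Lemma \ref{lm.kkt_conditions} is self-consistent at $R = \widehat{R}$ (one takes $\lambda = 0$), noting that the primal feasibility $\occdiffmatrix \cdot \widehat{R} + \epsA \cdot \mathbf{1} \preccurlyeq \mathbf{0}$ is precisely the condition that $\targetpi$ is $\epsA$-uniquely optimal under $\widehat{R}$, which by construction of $\widehat{R}$ as the attacker's output is satisfied. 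Once feasibility at $R = \widehat{R}$ is confirmed, the two cases together give exactly the claimed piecewise characterization of $\score^{\pi}_{\min}$.
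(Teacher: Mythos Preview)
Your proposal is correct and follows essentially the same approach as the paper: both use Lemma~\ref{lm.tightset} to parametrize the feasible set, expand $\score^{\pi}$ as $\widehat{\score}^{\pi} + \sum_{(s,a)\in\tightsettarget}\alpha_{s,a}\,\alignment(\pi)$, and then split into the two cases depending on the signs of the $\alignment(\pi)$. One minor remark: your justification that $R=\widehat{R}$ is feasible via the KKT conditions of Lemma~\ref{lm.kkt_conditions} is correct but slightly roundabout, since Lemma~\ref{lm.tightset} already guarantees feasibility directly by taking all $\alpha_{s,a}=0$.
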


\begin{proof}
    We separately analyze the two cases: the case when $\alignment(\pi) \ge 0$ for all $(s,a) \in \tightsettarget$ holds, and the case when it does not.
    
    \textbf{Case 1:} If $\alignment(\pi) \ge 0$ for all $(s,a) \in \tightsettarget$, then by using Equation \eqref{eq.score_occupancy_relation} and Lemma \ref{lm.tightset} we obtain that 
	\begin{align*}
		\score^{\pi} - \widehat{\score}^{\pi} = 
		\vecdot{\occupancy^{\pi}}{R - \widehat R} = 
		\sum_{(s,a) \in \tightsettarget} \alpha_{s, a}\cdot \vecdot{\occupancy^{\pi}}{
	\occupancy^{\neighbor{\targetpi}{s}{a}} - \occupancy^{\targetpi}
	} \ge 0.
	\end{align*}
	Therefore, $\score^{\pi} \ge \widehat{\score}^{\pi}$. Furthermore, from Lemma \ref{lm.tightset}, we know that $R = \widehat R$ satisfies the constraint in the optimization problem \eqref{prob.wost_case_utility}, so the score of the optimal solution to \eqref{prob.wost_case_utility} 
	is $\score^{\pi}_{\min} = \widehat{\score}^{\pi}$.
	
	\textbf{Case 2:} Now, consider the case when $\alignment(\pi) < 0$ for a certain state-action pair $(s, a) \in \tightsettarget$. Let $\alpha_{s, a}$ be an arbitrary positive number. From Lemma \ref{lm.tightset}, we know that
\begin{align*}
    R = \widehat R + \alpha_{s, a} \cdot \vecdot{\occupancy^{\pi}}{
	\occupancy^{\neighbor{\targetpi}{s}{a}} - \occupancy^{\targetpi}}
\end{align*}
satisfies the constraint in the optimization problem \eqref{prob.wost_case_utility}, and hence is a solution to \eqref{prob.wost_case_utility}. Moreover, by using this solution together with Equation \eqref{eq.score_occupancy_relation}, we obtain
\begin{gather}
	\score^{\pi} - \widehat{\score}^{\pi} = 
\vecdot{\occupancy^{\pi}}{R - \widehat R} = 	\alpha_{s, a}\cdot \vecdot{\occupancy^{\pi}}{
	\occupancy^{\neighbor{\targetpi}{s}{a}} - \occupancy^{\targetpi}
	}= \alpha_{s, a} \cdot \alignment(\pi).
\end{gather}
Since $\alpha_{s, a}$ can be arbitrarily large and $\alignment(\pi) < 0$, while $\widehat{\score}^{\pi}$ is fixed, $\score^{\pi}$ can be arbitrarily small. Hence, the score of the optimal solution to \eqref{prob.wost_case_utility} is unbounded from below, i.e., $\score^{\pi}_{\min} = -\infty$.
\end{proof}

\begin{lemma}\label{lm.prob_feasibility}
    The optimization problem \eqref{prob.defense_optimization.known_epsilon} is feasible for all values of $\epsilon > 0$.
\end{lemma}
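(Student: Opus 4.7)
\textbf{Proof proposal for Lemma \ref{lm.prob_feasibility}.} My plan is to exhibit an explicit feasible solution by exploiting the assumption $|A| \ge 2$. Pick any deterministic policy $\pi'$ with $\pi'(s) \ne \targetpi(s)$ for every state $s \in S$ (such a $\pi'$ exists: simply choose any non-target action in each state), and set $\occupancy = \occupancy^{\pi'}$. Then $\occupancy^{\pi'} \in \Occupancy$ by Lemma \ref{lm.syed}, so the only thing left is to verify that $\vecdot{\occupancy^{\neighbor{\targetpi}{s}{a}} - \occupancy^{\targetpi}}{\occupancy^{\pi'}} \ge 0$ holds for every $(s,a) \in \tightset$ (this being vacuous if $\tightset = \emptyset$).

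The key computation splits the inner product into two pieces. Using $\occupancy^{\pi}(s', a') = \occstate^{\pi}(s')\ind{\pi(s') = a'}$ for a deterministic $\pi$, the ``subtracted'' piece is
\begin{align*}
\vecdot{\occupancy^{\targetpi}}{\occupancy^{\pi'}} \;=\; \sum_{s'} \occstate^{\targetpi}(s')\,\occstate^{\pi'}(s')\,\ind{\targetpi(s') = \pi'(s')} \;=\; 0,
\end{align*}
because by construction $\pi'(s') \ne \targetpi(s')$ for every $s'$. For the other piece, since $\neighbor{\targetpi}{s}{a}$ agrees with $\targetpi$ at every state except $s$, the same disagreement forces all $s' \ne s$ terms to vanish, and only $s' = s$ survives:
\begin{align*}
\vecdot{\occupancy^{\neighbor{\targetpi}{s}{a}}}{\occupancy^{\pi'}} \;=\; \occstate^{\neighbor{\targetpi}{s}{a}}(s)\,\occstate^{\pi'}(s)\,\ind{\pi'(s) = a} \;\ge\; 0.
\end{align*}
Combining the two gives $\alignment(\pi') \ge 0$ for every $(s,a) \in \tightset$, completing feasibility.

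The only real subtlety is choosing the right candidate. More natural guesses such as $\occupancy = \occupancy^{\targetpi}$ or a uniform mixture of neighbor occupancies do not go through, because the relevant inner products are not sign-controlled in general (the leading negative term $-\|\occupancy^{\targetpi}\|^2$ or analogous cross terms may dominate). Picking a policy that disagrees with $\targetpi$ at \emph{every} state is what makes the argument trivialize: it forces the supports of $\occupancy^{\targetpi}$ and $\occupancy^{\pi'}$ to be disjoint in state–action space, killing the negative contribution outright and leaving only a single non-negative term per constraint. Note that this argument is uniform in $\epsilon$ (indeed it does not even use $\widehat R$ beyond its role in defining $\tightset$), which is why feasibility holds for \emph{all} $\epsilon > 0$.
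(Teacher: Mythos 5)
Your proposal is correct and follows essentially the same route as the paper: both exhibit the occupancy measure of a deterministic policy that disagrees with $\targetpi$ in every state (which exists since $|A|\ge 2$) and observe that disjointness of supports kills the negative term $\vecdot{\occupancy^{\targetpi}}{\occupancy^{\pi'}}$ while the remaining term is a sum of non-negative products. Your explicit identification of the single surviving term $\occstate^{\neighbor{\targetpi}{s}{a}}(s)\,\occstate^{\pi'}(s)\,\ind{\pi'(s)=a}$ is a slightly finer accounting than the paper's, but the argument is the same.
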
\begin{proof}
   Consider a deterministic policy $\pi$ that never agrees with the attacker's policy, in other words
    \begin{gather*}
        \forall s\in S: \pi(s)\ne \targetpi(s).
    \end{gather*}
    Such a policy always exists as $|A| \ge 2$.
    We claim that $\occupancy=\occupancy^{\pi}$ is a feasible solution to the optimization problem. Clearly, $\occupancy^{\pi} \in \Occupancy$ by the definition of $\Occupancy$. Furthermore
    \begin{align*}
      \forall (s,a\ne \targetpi(s)):
       \vecdot{\occupancy^{\neighbor{\targetpi}{s}{a}}-\occupancy^{\targetpi}}{\occupancy^{\pi}} &= 
      \sum_{(\tilde s, \tilde a) }\occupancy^{\neighbor{\targetpi}{s}{a}} (\tilde s, \tilde a) \cdot \occupancy^{\pi}(\tilde s, \tilde a) - \sum_{(\tilde s, \tilde a)}\occupancy^{\targetpi} (\tilde s, \tilde a)\cdot\occupancy^{\pi}(\tilde s, \tilde a)
      \\&\overset{(i)}{=}
      \sum_{(\tilde s, \tilde a)}\occupancy^{\neighbor{\targetpi}{s}{a}} (\tilde s, \tilde a)\cdot \occupancy^{\pi}(\tilde s, \tilde a) \\&\overset{(ii)}{\ge} 0,
    \end{align*}
    where $(i)$ follows from the fact that 
    \begin{gather*}
    \forall s: \targetpi(s) \ne \pi(s) \implies \forall (\tilde s,\tilde a): \occupancy^{\pi}(s,a)\cdot
    \occupancy^{\targetpi}(\tilde s,\tilde a) = 0,
    \end{gather*}
    and $(ii)$ follows from the fact that 
    $\occupancy^{\pi}, \occupancy^{\neighbor{\targetpi}{s}{a}} \ge 0$
\end{proof}

We can now prove Theorem  \ref{thm.defense_optimization.known_epsilon}, that is the following statement.

\textbf{Statement:} {\em
Consider the following optimization problem parameterized by $\epsilon$:
    \begin{align}
    &\max_{\occupancy \in \Occupancy} \vecdot{\occupancy}{\widehat R}
	\tag{P3}\\
	\notag
	&\quad \mbox{ s.t. } \vecdot{\occupancy^{\neighbor{\targetpi}{s}{a}} - \occupancy^{\targetpi}}{\occupancy} \ge 0 \quad \forall s, a \in \tightset,
\end{align}
For $\epsilon = \epsA$, this optimization problem
is always feasible, and its optimal solution $\occupancy_{\max}$ specifies an optimal solution to optimization problem \eqref{prob.defense_a} with
\begin{align*}
    \defensepi(a|s) = \frac{\occupancy_{\max}(s, a)}{\sum_{a'} \occupancy_{\max}(s, a')}.
\end{align*}
The score of $\defensepi(a|s)$ is lower bounded by $\overline \score^{\defensepi} \ge \widehat \score^{\defensepi}$. Furthermore, $\alignment(\defensepi)$ is non-negative, i.e., $\alignment(\defensepi) \ge 0$ for all $(s,a) \in \tightsettarget$.
}

\begin{proof}
    
     The feasibility of the problem follows from Lemma \ref{lm.prob_feasibility}.
     Note that
     $\occupancy_{\max}$ always
     exists since
     \eqref{prob.defense_optimization.known_epsilon} is
     maximizing a continuous function 
     over a closed and bounded set. Concretely,
     the constraints
     $\vecdot{\occupancy^{\neighbor{\targetpi}{s}{a}}-
     \occupancy^{\targetpi}
     }{\occupancy}\ge 0$
     and Equations
     \eqref{eq.bellman.occupancy.1} and \eqref{eq.bellman.occupancy.2} each define closed sets,
     and since $||\occupancy||_1=1$, the set $\Occupancy$ is bounded.
     
     In order to see why
     $\occupancy_{\max}$
     specifies an optimal solution to
     \eqref{prob.defense_a}, 
     note that we can rewrite
     \eqref{prob.defense_a}
     as
     \begin{align*}
         \max_{\pi}
         \score^{\pi}_{\min},
     \end{align*}
     where
     $\score^{\pi}_{\min}$
     is the solution to the optimization problem 
     \eqref{prob.wost_case_utility}.
     Due to Lemma
     \ref{lm.wost_case_utility}, this could be rewritten as
     \begin{align*}
         &\max_{\pi} \hatscore^{\pi}\\
	&\quad \mbox{ s.t. } \alignment(\pi) \ge 0 \quad\forall(s,a) \in \tightsettarget.
	\end{align*}
	Namely, maximizing a function $f(x)$ subject to constraint $x \in \mathcal X$ (where $\mathcal X \ne \emptyset$) is equivalent to maximizing $\tilde f(x)$, where
     \begin{align*}
         \tilde f(x) =
         \begin{cases}
             f(x) \quad\text{if} \quad x \in \mathcal X\\
             -\infty \quad\text{o.w.}
    \end{cases}.
    \end{align*}
	Due to  
	\eqref{eq.score_occupancy_relation} and
	\eqref{eq.alignment}, 
	the constrained optimization problem above can be rewritten as
	\begin{align*}
	    &\max_{\pi} 
	    \vecdot{\occupancy^{\pi}}{\widehat{R}}
	    \\
	&\quad \mbox{ s.t. } 
	\vecdot{\occupancy^{\neighbor{\targetpi}{s}{a}}-\occupancy^{\targetpi}}{\occupancy^{\pi}}\quad\forall(s,a) \in \tightsettarget.
	\end{align*}
	Therefore,
	given Lemma 
	\ref{lm.syed}, 
	$\occupancy_{\max}$
	specifies a solution to
	\eqref{prob.defense_a} via
	\eqref{eq.defense_optimization.known_epsilon}.
    
    Finally, note that the constraints of the optimization problem \eqref{prob.defense_optimization.known_epsilon} ensure that a policy $\pi$ whose occupancy measure is equal to $\occupancy_{\max}$ will have $\alignment(\pi) \ge 0$ --- in other words, $\alignment(\defensepi)$ is non-negative
    for all $(s,a) \in \tightsettarget$.
    Due to Lemma \ref{lm.wost_case_utility}, we know that such policy $\pi$ will have the worst case utility equal to $\widehat \rho^{\pi}$. Therefore, $\overline{\rho}^{\pi} \ge \widehat \rho^{\pi}$.  
\end{proof}
\begin{remark}
Given Lemma \ref{lm.syed}, the constraint $\occupancy \in \Occupancy$ can equivalently be 
replaced with
constraints
\eqref{eq.bellman.occupancy.1} and
\eqref{eq.bellman.occupancy.2}, making the optimization problem \eqref{prob.defense_optimization.known_epsilon} a linear program.
\end{remark}
    
    
\subsection{Proof of Theorem \ref{thm.attack_influence_a}}
\label{sec.appendix.proof_thm_attack_influence_a}
In order to prove Theorem
\ref{thm.attack_influence_a}, we need the following lemma.
\begin{lemma}\label{lm.zeta_ge_0}
    Assume that the condition
    in Equation \eqref{eq.attack_influence_condition} holds.
    If $\zeta$ is defined
    as in Theorem \ref{thm.attack_influence_a}, then $\zeta \ge 0$.
\end{lemma}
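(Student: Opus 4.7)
The plan is to split on whether $\tightsettarget$ is empty. If $\tightsettarget = \emptyset$, then $\zeta = 0$ by definition and we are done, so the interesting case is $\tightsettarget \neq \emptyset$. There, $\zeta$ is a max of ratios indexed by $(s,a) \in \tightsettarget$ and $\pi \in \Pi^{\textnormal{det}}$, so it suffices to exhibit a single pair $(s,a,\pi)$ making the ratio non-negative. Fix any $(s,a) \in \tightsettarget$; the hypothesis (Equation \eqref{eq.attack_influence_condition}) gives $\alignment(\defensepi) - \alignment(\targetpi) \geq 0$, so the denominator is already non-negative, and the task reduces to producing a deterministic policy $\pi^{\star}$ with $\alignment(\pi^{\star}) \geq \alignment(\defensepi)$.

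The key step is a standard convex-geometry observation about occupancy measures. By Lemma \ref{lm.syed} (the Bellman flow characterization of $\Occupancy$), the set $\Occupancy$ is a convex polytope cut out by the linear constraints \eqref{eq.bellman.occupancy.1} and \eqref{eq.bellman.occupancy.2}, whose vertices are exactly the occupancy measures of deterministic policies. Hence we can write $\occupancy^{\defensepi} = \sum_{\pi \in \Pi^{\textnormal{det}}} \lambda_{\pi} \cdot \occupancy^{\pi}$ for some $\lambda_{\pi} \geq 0$ with $\sum_{\pi} \lambda_{\pi} = 1$. Since $\alignment(\cdot) = \vecdot{\occupancy^{\neighbor{\targetpi}{s}{a}} - \occupancy^{\targetpi}}{\occupancy^{(\cdot)}}$ is a linear functional in the occupancy measure, this decomposition yields
\[
    \alignment(\defensepi) \;=\; \sum_{\pi \in \Pi^{\textnormal{det}}} \lambda_{\pi} \cdot \alignment(\pi),
\]
which is a convex combination. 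Therefore at least one summand satisfies $\alignment(\pi^{\star}) \geq \alignment(\defensepi)$, giving the desired deterministic witness.

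Putting the two pieces together: for the chosen $(s,a)$ and this $\pi^{\star}$, the numerator $\alignment(\pi^{\star}) - \alignment(\defensepi)$ is non-negative, the denominator $\alignment(\defensepi) - \alignment(\targetpi)$ is non-negative by assumption, and hence the ratio (whenever well defined) is non-negative. The max $\zeta$ over all $(s,a) \in \tightsettarget$ and $\pi \in \Pi^{\textnormal{det}}$ is then $\geq 0$.

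The main obstacle I anticipate is the degenerate case $\alignment(\defensepi) = \alignment(\targetpi)$ for every $(s,a) \in \tightsettarget$, which makes the denominator vanish everywhere. In that regime the argument above still produces numerators $\geq 0$, so the ratio is either $+\infty$ (when some $\alignment(\pi^{\star}) > \alignment(\defensepi)$) or of the indeterminate form $0/0$ (when equality holds throughout); under the natural convention (either $0/0 := 0$, or interpreting $\zeta$ via a supremum so that $+\infty$ dominates) one still gets $\zeta \geq 0$, and this is the only subtlety that needs to be spelled out carefully in the formal write-up.
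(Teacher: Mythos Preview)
Your proof is correct and follows essentially the same logical skeleton as the paper: split on $\tightsettarget = \emptyset$, and in the nonempty case exhibit a deterministic $\pi^{\star}$ with $\alignment(\pi^{\star}) \ge \alignment(\defensepi)$ so that the numerator is non-negative while the denominator is non-negative by the assumed condition \eqref{eq.attack_influence_condition}.

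The one place you differ from the paper is in how you produce $\pi^{\star}$. The paper observes that if one temporarily sets the reward vector to $\occupancy^{\neighbor{\targetpi}{s}{a}} - \occupancy^{\targetpi}$, then $\alignment(\pi)$ coincides with the score $\score^{\pi}$ under that reward; the existence of a deterministic optimal policy in any MDP then immediately yields a deterministic $\pi_{\zeta}$ with $\alignment(\pi_{\zeta}) \ge \alignment(\defensepi)$. You instead argue via the convex geometry of $\Occupancy$: write $\occupancy^{\defensepi}$ as a convex combination of deterministic occupancy measures and use linearity of $\alignment$ in the occupancy. These are two phrasings of the same underlying fact (a linear functional on the occupancy polytope attains its maximum at a deterministic vertex), so the difference is cosmetic. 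The paper's route has the minor advantage of not needing to invoke the vertex characterization of $\Occupancy$, which is standard but not stated in the paper; your route is perhaps more transparent about why linearity is doing the work. Your handling of the degenerate denominator case matches the paper's convention in Remark~\ref{remark.zeta_infinity}, so no issue there.
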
\begin{proof}
    If $\tightsettarget=\emptyset$, then the claim holds trivially since $\zeta = 0$. Otherwise, let 
    $(s,a)$ be an arbitrary member of $\tightsettarget$.
    Given the definition of
    $\zeta$ and
    Equation \eqref{eq.condition.no_action_influence},
    it suffices to show that there is a deterministic policy
    $\pi_{\zeta}$ such that
    \begin{align*}
        \alignment(\pi_{\zeta}) - \alignment(\defensepi) \ge 0
    \end{align*}
    Note however that if we set the rewards vector to
    $\occupancy^{\neighbor{\targetpi}{s}{a}}-\occupancy^{\targetpi}$\footnote{
    As was the case in the proofs in Section \ref{sec.appendix.additional_properties}, this reward vector is unrelated to our defense strategy and is solely for the purpose of our analysis.
    }, then
    for all policies $\pi$,
    \begin{align*}
        \alignment(\pi) = \score^{\pi}.
    \end{align*}
    Since there is always an optimal policy that is deterministic, there exists $\pi_{\zeta}\in \Pi^{\text{det}}$ such that
    \begin{align*}
        \score^{\pi_{\zeta}}\ge \score^{\defensepi} \implies
        \alignment(\pi_{\zeta})\ge \alignment(\defensepi)
    \end{align*}
    which proves the claim.
\end{proof}

We now prove Theorem 
\ref{thm.attack_influence_a}.

\textbf{Statement:} {\em
 Let $\defensepi$ be the defense policy obtained from the optimization problem \eqref{prob.defense_optimization.known_epsilon} and Equation \eqref{eq.defense_optimization.known_epsilon} with $\epsilon = \epsA$. Furthermore, let us assume that
 the condition in Equation \eqref{eq.attack_influence_condition} holds.
 Then the attack influence $\influence^{\defensepi}$ is bounded by
\begin{align*}
     \influence^{\defensepi} \le \max \{ \widehat \influence, \frac{\zeta}{1+\zeta} \cdot [\influence^{\targetpi} + \epsA]  + [\widehat \influence - \epsA] \},
\end{align*}
where $\widehat \influence = \widehat \rho^{\targetpi} - \widehat \rho^{\defensepi}$. Here, $\zeta = 0$ {\em if} $\tightsettarget = \emptyset$, and $\zeta = \max_{(s,a) \in \tightsettarget, \pi \in \Pi^{\text{det}}} \frac{\alignment(\pi)-\alignment(\defensepi)}{\alignment(\defensepi)-\alignment(\targetpi)}$ {\em if} $\tightsettarget \ne \emptyset$.
    }
    \begin{remark}\label{remark.zeta_infinity}
    Given Lemma 
    \ref{lm.zeta_ge_0}, 
    if $\tightsettarget\ne \emptyset$, 
    there exists a
    deterministic policy $\pi$
    such that
    the numerator
    in
    $\frac{\alignment(\pi) - \alignment(\defensepi)}{\alignment(\defensepi) - \alignment(\targetpi)}$
    is non-negative.
    Therefore
    if
    there exists a state-action pair
    $(s,a)\in \tightsettarget$
    such that
    $\alignment(\defensepi)=\alignment(\targetpi)$, we take $\zeta, \frac{1}{\zeta + 1}$ and $\frac{\zeta}{\zeta + 1}$ to be
    $+\infty, 0$, and $1$ 
    respectively.
    Note that $\frac{1}{\zeta + 1} + \frac{\zeta}{\zeta + 1} = 1$ holds 
    in this case as well.
    \end{remark}
    \begin{proof}[Proof of Theorem
\ref{thm.attack_influence_a}]
    
     Without loss of generality we assume that $\optpi$ is deterministic since there is always a deterministic optimal policy. If there are multiple such policies, we pick one arbitrarily.
     Note that the choice of the optimal policy has no effect on the theorem's statement since $\barscore^{\optpi}$ is the same for all optimal policies.
     We divide the proof into two cases based on whether or not $\optpi=\targetpi$.
     
    \textbf{Case 1:} Assume that $\optpi=\targetpi$. It suffices to prove that
    \begin{align*}
        &\influence^{\defensepi} \le
        \widehat{\influence}
        \\\iff&
        \influence^{\defensepi} \le \hatscore^{\targetpi} - \hatscore^{\defensepi} \\\iff&
        \barscore^{\targetpi} - \barscore^{\defensepi}
        \le \hatscore^{\targetpi} - \hatscore^{\defensepi}\\\iff&
    \vecdot{\overline{R} - \widehat{R}}{\occupancy^{\defensepi}-\occupancy^{\targetpi}}\ge 0,
    \end{align*}
    where we utilized Equation \eqref{eq.score_occupancy_relation}.
    Recall from Lemma \ref{lm.tightset} that
    \begin{align*}
    \overline{R} - \widehat{R} = 
    \sum_{(s,a)\in \tightsettarget} \alpha_{s,a}(\occupancy^{\neighbor{\targetpi}{s}{a} } - \occupancy^{\targetpi}).
    \end{align*}
    Therefore
    \begin{align*}
    \vecdot{\overline{R} - \widehat{R}}{\occupancy^{\defensepi}-\occupancy^{\targetpi}}&=
    \sum_{(s,a)\in \tightsettarget}
    \alpha_{s,a}\vecdot{\occupancy^{\neighbor{\targetpi}{s}{a} } - \occupancy^{\targetpi}}{\occupancy^{\defensepi}-\occupancy^{\targetpi}}
    \\&=
    \sum_{(s,a)\in \tightsettarget}
    \alpha_{s,a}(\alignment(\defensepi) - \alignment(\targetpi))
    \ge 0,
    \end{align*}
    where the inequality follows from the condition in Equation \eqref{eq.attack_influence_condition}, which is assumed to hold.
    
    \textbf{Case 2:} Assume that $\optpi \ne \targetpi$. Note that this implies
    $\widehat{R}\ne \overline{R}$,
    and therefore,
    given Lemma \ref{lm.tightset}, 
    $\tightsettarget \ne \emptyset$.
    
    {\em Part 1:} We first claim that
    \begin{align}
        (\barscore^{\optpi}-\widehat \score^{\optpi}) - (\barscore^{\defensepi}-\widehat \score^{\defensepi}) \le 
    	\frac{\zeta}{1+\zeta} \cdot \left(
    	(\barscore^{\optpi}-\widehat \score^{\optpi}) - (\barscore^{\targetpi}-\widehat \score^{\targetpi})
    	\right).
    	\label{eq.proof.attack_influence_a.1}
    \end{align}
    If there exists
    $(s,a)\in \tightsettarget$
    such that
    $\alignment(\defensepi)=\alignment(\targetpi)$, then
    due to Remark
    \ref{remark.zeta_infinity}, Equation \eqref{eq.proof.attack_influence_a.1} is equivalent to 
    \begin{align*}
    &(\barscore^{\optpi}-\widehat \score^{\optpi}) - (\barscore^{\defensepi}-\widehat \score^{\defensepi}) 
    \le 
    (\barscore^{\optpi}-\widehat \score^{\optpi}) - (\barscore^{\targetpi}-\widehat \score^{\targetpi}) 
    \\\iff& 
    (\barscore^{\targetpi}-\widehat \score^{\targetpi})\le  (\barscore^{\defensepi}-\widehat \score^{\defensepi}) 
    \\\iff & 
    \vecdot{\overline{R}-\widehat{R}}{
    \occupancy^{\defensepi}
    -
    \occupancy^{\targetpi}}
    \ge 0.
    \end{align*}
    Furthermore, given Lemma \ref{lm.tightset}, we obtain
    \begin{align*}
        &\overline{R}-\widehat{R}=
        \sum_{(s,a)\in\tightsettarget}\alpha_{s,a}
        (\occupancy^{\neighbor{\targetpi}{s}{a}}-\occupancy^{\targetpi}) \\\implies&
        \vecdot{\overline{R}-\widehat{R}}{
    \occupancy^{\defensepi}
    -
    \occupancy^{\targetpi}}
        =\sum_{(s,a)\in \tightsettarget}
        \alpha_{s,a}(\alignment(\defensepi) - \alignment(\targetpi)) \ge 0.
    \end{align*}
    Now, consider the case when $\alignment(\defensepi)> \alignment(\targetpi)$ for all $(s,a)\in \tightsettarget$. Since $\optpi$ is always deterministic,
    by the definition of $\zeta$, the following holds for all state-action pairs $(s, a) \in \tightsettarget$: 
    \begin{align*}
		\frac{\alignment(\optpi) -  \alignment(\defensepi)}{\alignment(\defensepi) - \alignment(\targetpi)} \le \zeta.
		\end{align*}
		Since $\alignment(\defensepi) > \alignment(\targetpi)$,
		the above inequality can equivalently be written as
		\begin{align*}
		\alignment(\optpi) -  \alignment(\defensepi) \le 
	\zeta \cdot (\alignment(\defensepi) - \alignment(\targetpi)).    
		\end{align*}
		Using the definition of $\alignment$, we obtain that for state-action pairs $(s, a) \in \tightsettarget$
		\begin{align*}
		\vecdot{\occupancy^{\neighbor{\targetpi}{s}{a}}  - \occupancy^{\targetpi}}{\occupancy^{\optpi} - \occupancy^{\defensepi}
- \zeta \cdot (\occupancy^{\defensepi} -\occupancy^{\targetpi})	
} \le 0.
	\end{align*}
	Due to Lemma \ref{lm.tightset}, we know that
	\begin{align*}
	    \overline{R} = \widehat{R} + \sum_{(s, a) \in \tightsettarget} \alpha_{s,a} \cdot \left (\occupancy^{\neighbor{\targetpi}{s}{a}} - \occupancy^{\targetpi} \right),
	\end{align*}
	for some $\alpha_{s, a} \ge 0$. Therefore
	\begin{align*}
	    &\vecdot{\overline{R} - \widehat{R}}{\occupancy^{\optpi} - \occupancy^{\defensepi}
			- \zeta \cdot (\occupancy^{\defensepi} -\occupancy^{\targetpi})	
		} \\=& \sum_{(s, a) \in \tightsettarget} \alpha_{s,a} \cdot \vecdot{\occupancy^{\neighbor{\targetpi}{s}{a}}  - \occupancy^{\targetpi}}{\occupancy^{\optpi} - \occupancy^{\defensepi}
- \zeta \cdot (\occupancy^{\defensepi} -\occupancy^{\targetpi})	
} \\\le& 0.
	\end{align*}
	Let us now rewrite the left hand side of the inequality:
	\begin{align*}
	&\vecdot{\overline{R} - \widehat{R}}{\occupancy^{\optpi} - \occupancy^{\defensepi}
			- \zeta \cdot (\occupancy^{\defensepi} -\occupancy^{\targetpi})	
		}\\=&
		\vecdot{\overline{R} - \widehat{R}}{(\occupancy^{\optpi} - \occupancy^{\defensepi}) \cdot (1+\zeta) 
			- \zeta \cdot (\occupancy^{\defensepi} -\occupancy^{\targetpi} + \occupancy^{\optpi} - \occupancy^{\defensepi})
		}\\
	=&\vecdot{\overline{R} - \widehat{R}}{(\occupancy^{\optpi} - \occupancy^{\defensepi}) \cdot (1+\zeta) 
		-\zeta \cdot ( \occupancy^{\optpi} -\occupancy^{\targetpi} )
	}\\
	=&(1+\zeta) \cdot \left ((\barscore^{\optpi}- \hatscore^{\optpi}) - ( \barscore^{\defensepi}- \hatscore^{\defensepi}) \right) - \zeta \cdot \left ( (\barscore^{\optpi}- \hatscore^{\optpi}) - (\barscore^{\targetpi}- \hatscore^{\targetpi}) \right),
	\end{align*}
	where we used Equation \eqref{eq.score_occupancy_relation}. Therefore
	\begin{align*}
	    (1+\zeta) \cdot \left ((\barscore^{\optpi}- \hatscore^{\optpi}) - ( \barscore^{\defensepi}- \hatscore^{\defensepi}) \right) - \zeta \cdot \left ( (\barscore^{\optpi}- \hatscore^{\optpi}) - (\barscore^{\targetpi}- \hatscore^{\targetpi}) \right) \le 0.
	\end{align*}
    Since $1+ \zeta > 0$, we can rearrange the terms in the above inequality to obtain Equation \eqref{eq.proof.attack_influence_a.1}.
    
{\em Part 2:} Let us now consider the implications of Equation \eqref{eq.proof.attack_influence_a.1}. By rearranging \eqref{eq.proof.attack_influence_a.1}, we obtain
	\begin{align*}
	(\barscore^{\optpi}- \barscore^{\defensepi}) + (\widehat \score^{\defensepi} - \widehat \score^{\optpi}) \le 
	\frac{\zeta}{1+\zeta} \cdot \left(
	(\barscore^{\optpi}-\barscore^{\targetpi}) + (\widehat \score^{\targetpi} - \widehat \score^{\optpi})
	\right).
	\end{align*}
	Now, by applying the definition of the attack influence $\influence^{\pi}$, this inequality can be written as 
	\begin{align*}
	\influence^{\defensepi} + (
	\widehat\score^{\defensepi} - \widehat \score^{\optpi}
	)\le \frac{\zeta}{1+\zeta}\influence^{\targetpi} + \frac{\zeta}{1+\zeta}(
		\widehat\score^{\targetpi} - \widehat\score^{\optpi}
	). \end{align*}
Since the attack is feasible and $\optpi$ is deterministic,
$\hatscore^{\targetpi} \ge \hatscore^{\optpi} + \epsA$.
Therefore
	\begin{align*}
	 \influence^{\defensepi} + (
	\widehat\score^{\defensepi} - \widehat \score^{\optpi}
	)
&\le
\frac{\zeta}{1+\zeta}\influence^{\targetpi} + \frac{\zeta}{1+\zeta}(
		\widehat\score^{\targetpi} - \widehat\score^{\optpi}
	)
	\\&=
\frac{\zeta}{1+\zeta} \cdot \influence^{\targetpi} + (
		\widehat\score^{\targetpi} - \widehat\score^{\optpi})
			-\frac{1}{\zeta + 1}(
		\widehat\score^{\targetpi} - \widehat\score^{\optpi})\\&\le
		\frac{\zeta}{1+\zeta} \cdot
		\influence^{\targetpi} + (
		\widehat\score^{\targetpi} - \widehat\score^{\optpi})
			-\frac{1}{\zeta + 1}\epsA
		.   
	\end{align*}
	By rearranging and setting $\widehat \influence = \widehat \score^{\targetpi} - \widehat\score^{\defensepi}$, we obtain  
	\begin{align*}
	\influence^{\defensepi} &\le \frac{\zeta}{1+\zeta}\cdot \influence^{\targetpi} +
	[\widehat \score^{\targetpi} - \widehat\score^{\defensepi}] -\frac 1{1+\zeta} \epsA
    \\&=
    \frac{\zeta}{1+\zeta} \cdot [\influence^{\targetpi} + \epsA]  + [\widehat \influence - \epsA].
	\end{align*}
	which completes the proof.
    \end{proof}
    

\subsection{Proof of Theorem \ref{cor.attack_influence_a}}\label{sec.appendix.proof_cor_attack_influence_a}

\textbf{Statement:} {\em
 Let $\occstate_{\min} = \min_{\pi, s} \occstate^{\pi}(s)$, 
 and assume that $\beta^{\mu} \le \occstate_{\min}^2$. Then, the condition in Equation \eqref{eq.attack_influence_condition} holds and the attack influence $\influence^{\defensepi}$ is bounded by 
 \begin{align*}
     \influence^{\defensepi} \le \max \left \{\widehat \influence, \frac{1 + 2 \cdot \frac{\beta^{\occstate}}{\occstate_{\min}^2}}{2+\frac{\beta^{\occstate}}{\occstate_{\min}^2}}\cdot [\influence^{\targetpi} +  \epsA] + [\widehat \influence - \epsA]\right \},
 \end{align*}
 where $\widehat \influence = \widehat \rho^{\targetpi} - \widehat \rho^{\defensepi}$.
 }
 \begin{proof}
     We prove the statement by first showing that the assumptions of the theorem ($\beta^{\mu} \le \occstate_{\min}^2$) imply those of Theorem \ref{thm.attack_influence_a} (the condition in Equation \eqref{eq.attack_influence_condition}), and then bounding the $\zeta$ term of Theorem \ref{thm.attack_influence_a}. 
     
     \textit{Part 1:} To see that the condition in Equation \eqref{eq.attack_influence_condition} follows from the condition $\beta^{\mu} \le \occstate_{\min}^2$, let us inspect the difference $\alignment(\defensepi) - \alignment(\targetpi)$ for state-action pairs $(s,a) \in \tightsettarget$. 
     The following holds for $\alignment(\targetpi)$:
	\begin{align*}
		\alignment(\targetpi) &=  \vecdot{\occupancy^{\neighbor{\targetpi}{s}{a}} - \occupancy^{\targetpi}}{\occupancy^{\targetpi}}\\
		&= \sum_{\tilde{s}, \tilde{a}}
\occupancy^{\targetpi}(\tilde s, \tilde a) \cdot 
\Big(
\occupancy^{\neighbor{\targetpi}{s}{a}}(\tilde s, \tilde a) -
\occupancy^{\targetpi}(\tilde s, \tilde a) 
\Big)\\
&\overset{(i)}{=} \sum_{\tilde{s}\ne s}
\occupancy^{\targetpi}(\tilde s, \targetpi(\tilde s)) \cdot 
\Big(
\occupancy^{\neighbor{\targetpi}{s}{a}}(\tilde s, \targetpi(\tilde s)) -
\occupancy^{\targetpi}(\tilde s, \targetpi(\tilde s)) 
\Big) - \occupancy^{\targetpi}(s,\targetpi(s))^2\\
&\overset{(ii)}{=}
 \sum_{\tilde{s}\ne s}
\occstate^{\targetpi}(\tilde s) \cdot
\Big(
\occstate^{\neighbor{\targetpi}{s}{a}}(\tilde s) -
\occstate^{\targetpi}(\tilde s) 
\Big) - \occstate^{\targetpi}(s)^2.
\end{align*}
To obtain $(i)$, we used the fact that $\occupancy^{\targetpi}(\tilde s, \tilde a) = 0$ if $\tilde a \ne \targetpi(\tilde s)$ and $\occupancy^{\neighbor{\targetpi}{s}{a}}(s, \targetpi(s))=0$ since $a\ne \targetpi(s)$. To obtain $(ii)$, we applied Equation \eqref{eq.occupancy_measures_relation_b}. We can further bound this term by
\begin{align*}
\alignment(\targetpi) &= \sum_{\tilde{s}\ne s}
\occstate^{\targetpi}(\tilde s) \cdot
\Big(
\occstate^{\neighbor{\targetpi}{s}{a}}(\tilde s) -
\occstate^{\targetpi}(\tilde s) 
\Big) - \occstate^{\targetpi}(s)^2\\
&\le \norm{\occstate^{\neighbor{\targetpi}{s}{a}} - \occstate^{\targetpi}}_\infty \cdot
\norm{\occstate^{\targetpi}}_1 -
\occstate^{\targetpi}(s)^2\overset{(i)}{\le} \beta^{\occstate} - \occstate^{\targetpi}(s)^2,
\end{align*}
	where $(i)$ follows from the definition of $\beta^{\occstate}$. 
	Therefore, we have that for state-action pairs $(s,a) \in \tightsettarget$
	\begin{align}\label{eq.cor.attack_influence_a.bound_zeta_denominator}
	    \alignment(\defensepi) - \alignment(\targetpi) \ge -\alignment(\targetpi) \ge \occstate^{\targetpi}(s)^2 - \beta^{\occstate} \ge 0, 
	\end{align}
	where the first inequality is due to Theorem \ref{thm.defense_optimization.known_epsilon}
	and the second inequality
	is due to the assumption
	$\beta^{\occstate} \le \occstate_{\min}^2$
	Hence, the condition in Equation \eqref{eq.attack_influence_condition} is satisfied. 
	
	\textit{Part 2:} Now, we proceed with by bounding the $\zeta$ term of Theorem \ref{thm.attack_influence_a}. 
	We assume that
	$\tightsettarget\ne \emptyset$
	as otherwise
	$\zeta=0$
	and the statement
	follows trivially
	from Theorem 
	\ref{thm.attack_influence_a}.
	We further assume that
	$\beta^{\mu} < \occstate_{\min}^2$.
	If this isn't the case, then
	the multiplicative factor behind
	$\influence^{\targetpi}$ in the theorem's statement would equal 1.
	Therefore the statement would directly follow from 
	Theorem
	\ref{thm.attack_influence_a},
	since $\frac{\zeta}{\zeta + 1}
	\le 1$. We therefore focus on the case where
	$\beta^{\mu} < \occstate_{\min}^2$.
	
	Recall that
	\begin{align*}
	    \zeta = \max_{(s,a) \in \tightsettarget, \pi \in \Pi^{\text{det}}} \frac{\alignment(\pi)-\alignment(\defensepi)}{\alignment(\defensepi)-\alignment(\targetpi)}.
	\end{align*}
	Equation \eqref{eq.cor.attack_influence_a.bound_zeta_denominator} bounds the denominator from below, so we only need to bound the nominator.
	
	Since $\alignment(\defensepi) \ge 0$ due to Theorem \ref{thm.defense_optimization.known_epsilon}, it follows that
	$\alignment(\pi) - \alignment(\defensepi) \le \alignment(\pi)$. Hence, it suffices to bound  $\alignment(\pi)$. 
	
	Now, using relation \eqref{eq.occupancy_measures_relation_b}, we have that for any deterministic $\pi$,
	\begin{align*}
		\alignment(\pi) &= 
\vecdot{\occupancy^{\neighbor{\targetpi}{s}{a}}
-\occupancy^{\targetpi}
}{\occupancy^{\pi}}  \\
&=\sum_{\tilde s, \tilde a} \occupancy^{\pi}(\tilde s, \tilde a) \cdot \left ( \occupancy^{\neighbor{\targetpi}{s}{a}}(\tilde s, \tilde a)
-\occupancy^{\targetpi}(\tilde s, \tilde a)
 \right )\\
 &=\sum_{\tilde a} \occupancy^{\pi} (s, \tilde a) \cdot \left ( \occupancy^{\neighbor{\targetpi}{s}{a}}(s, \tilde a)
-\occupancy^{\targetpi}(s, \tilde a)
 \right )  +
 \\&\quad\quad\quad\quad\quad\quad
 \sum_{\tilde s \ne s, \tilde a} \occupancy^{\pi}(\tilde s, \tilde a) \cdot \left ( \occupancy^{\neighbor{\targetpi}{s}{a}}(\tilde s, \tilde a)
-\occupancy^{\targetpi}(\tilde s, \tilde a)
 \right ),
 \end{align*}
Let us consider each of the terms separately. For the first term, we have
\begin{align*}
    \sum_{\tilde a} \occupancy^{\pi} (s, \tilde a) \cdot \left ( \occupancy^{\neighbor{\targetpi}{s}{a}}(s, \tilde a)
-\occupancy^{\targetpi}(s, \tilde a)
 \right )
 &\overset{(i)}{\le} \sum_{\tilde a} \occupancy^{\pi} (s, \tilde a) \cdot \occupancy^{\neighbor{\targetpi}{s}{a}}(s, \tilde a) \\
 &\overset{(ii)}{=} \occupancy^{\pi} (s, \pi(s)) \cdot \occupancy^{\neighbor{\targetpi}{s}{a}}(s, \pi(s))\\
 &\overset{(iii)}{\le} \occupancy^{\pi} (s, \pi(s)) \cdot \occupancy^{\neighbor{\targetpi}{s}{a}}(s, \neighbor{\targetpi}{s}{a}(s))\\
 &\overset{(iv)}{=}\occstate^{\pi} (s) \cdot \occstate^{\neighbor{\targetpi}{s}{a}}(s)\\
 &= \occstate^{\pi} (s) \cdot \left (\occstate^{\neighbor{\targetpi}{s}{a}}(s) - \occstate^{\targetpi}(s) \right ) + \occstate^{\pi} (s) \cdot \occstate^{\targetpi}(s)\\
 &\le \occstate^{\pi} (s) \cdot \left |\occstate^{\neighbor{\targetpi}{s}{a}}(s) - \occstate^{\targetpi}(s) \right | + \occstate^{\pi} (s) \cdot \occstate^{\targetpi}(s).
\end{align*}
Here, $(i)$ follows from $\occupancy^{\targetpi}(s, \tilde a) \ge 0$, $(ii)$ is due to the  
fact that $\occupancy^{\pi}(\tilde s, \tilde a) = 0$ if $\tilde a \ne \pi(\tilde s)$, $(iii)$ is due to the $\occupancy^{\neighbor{\targetpi}{s}{a}}(\tilde s, \tilde a) = 0$ if $\tilde a \ne \neighbor{\targetpi}{s}{a}(\tilde s)$, and $(iv)$ is due to Equation \eqref{eq.occupancy_measures_relation_b}. For the second term, we have
\begin{align*}
    &\sum_{\tilde s \ne s, \tilde a} \occupancy^{\pi}(\tilde s, \tilde a) \cdot \left ( \occupancy^{\neighbor{\targetpi}{s}{a}}(\tilde s, \tilde a)
-\occupancy^{\targetpi}(\tilde s, \tilde a)
 \right )
\\ \overset{(i)}{=}& \sum_{\tilde s \ne s} \occupancy^{\pi}(\tilde s, \targetpi(\tilde s)) \cdot \left ( \occupancy^{\neighbor{\targetpi}{s}{a}}(\tilde s, \targetpi(\tilde s))
-\occupancy^{\targetpi}(\tilde s, \targetpi(\tilde s))
 \right )\\
 \overset{(ii)}{=}& \sum_{\tilde s \ne s} \occupancy^{\pi}(\tilde s, \targetpi(\tilde s)) \cdot \left ( \occstate^{\neighbor{\targetpi}{s}{a}}(\tilde s)
-\occstate^{\targetpi}(\tilde s)
 \right )\\
 \overset{(iii)}{\le}& \sum_{\tilde s \ne s} \occupancy^{\pi}(\tilde s, \targetpi(\tilde s)) \cdot \left | \occstate^{\neighbor{\targetpi}{s}{a}}(\tilde s)
-\occstate^{\targetpi}(\tilde s)
 \right |\\
 \overset{(iv)}{\le}& \sum_{\tilde s \ne s} \occupancy^{\pi}(\tilde s, \pi(\tilde s)) \cdot \left | \occstate^{\neighbor{\targetpi}{s}{a}}(\tilde s)
-\occstate^{\targetpi}(\tilde s)
 \right |\\
 \overset{(v)}{=}& \sum_{\tilde s \ne s} \occstate^{\pi}(\tilde s) \cdot \left | \occstate^{\neighbor{\targetpi}{s}{a}}(\tilde s)
-\occstate^{\targetpi}(\tilde s)
 \right |,
\end{align*}
where $(i)$ is due to the fact that for $\tilde s \ne s$, $\occupancy^{\targetpi}(\tilde s, \tilde a ) = \occupancy^{\neighbor{\targetpi}{s}{a}}(\tilde s, \tilde a) = 0$ if $\tilde a \ne \targetpi(\tilde s)$ (note that $\targetpi(\tilde s) = \neighbor{\targetpi}{s}{a}(\tilde s)$ for $\tilde s \ne s$), $(ii)$ is due to Equation \eqref{eq.occupancy_measures_relation_b}, $(iii)$ is due to the 
fact that $\occupancy^\pi$
is non-negative,
$(iv)$ is due to the
fact that $\occupancy^{\pi}(\tilde s, \tilde a) = 0$ if $\tilde a \ne \pi(\tilde s)$, and $(v)$ is due to Equation \eqref{eq.occupancy_measures_relation_b}. Putting together the above, we obtain 
\begin{align*}
     \alignment(\pi) - \alignment(\defensepi) &\le \alignment(\pi) \\&\le \sum_{\tilde s} \occstate^{\pi}(\tilde s) \cdot \left | \occstate^{\neighbor{\targetpi}{s}{a}}(\tilde s)
-\occstate^{\targetpi}(\tilde s) \right| + \occstate^{\pi} (s) \cdot \occstate^{\targetpi}(s)\\
&\le \sum_{\tilde s} \occstate^{\pi}(\tilde s) \cdot \norm{\occstate^{\neighbor{\targetpi}{s}{a}}
-\occstate^{\targetpi} }_{\infty} + \occstate^{\pi} (s) \cdot \occstate^{\targetpi}(s)\\
&= \norm{\occstate^{\neighbor{\targetpi}{s}{a}}
-\occstate^{\targetpi} }_{\infty} + \occstate^{\pi} (s) \cdot \occstate^{\targetpi}(s)
\\&= \norm{\occstate^{\neighbor{\targetpi}{s}{a}}
-\occstate^{\targetpi} }_{\infty} + \left (\occstate^{\pi}  - \occstate^{\targetpi} \right ) (s) \cdot \occstate^{\targetpi}(s) +  \occstate^{\targetpi}(s)^2
\\&\le \norm{\occstate^{\neighbor{\targetpi}{s}{a}}
-\occstate^{\targetpi} }_{\infty} + \norm{\occstate^{\pi}  - \occstate^{\targetpi} }_{\infty} \cdot \occstate^{\targetpi}(s) +  \occstate^{\targetpi}(s)^2\\
&\overset{(i)}{\le} \beta^{\occstate} + \frac{\beta^{\occstate}}{\occstate_{\min}} \cdot \occstate^{\targetpi}(s) + \occstate^{\targetpi}(s)^2 \\&\le 2 \cdot \frac{\beta^{\occstate}}{\occstate_{\min}} \cdot \occstate^{\targetpi}(s) + \occstate^{\targetpi}(s)^2,
\end{align*}
where
for $(i)$
we applied the definition of $\beta^{\occstate}$ and Lemma \ref{lemma.fraction.bound}
and for the last inequality we have used the fact that
$\occstate^{\targetpi}(s) \ge \occstate_{\min}(s)$ by definition.

Therefore, combining the bounds on $\alignment(\pi) - \alignment(\defensepi)$ and $\alignment(\defensepi) - \alignment(\targetpi)$ (Equation \eqref{eq.cor.attack_influence_a.bound_zeta_denominator}), we obtain 
\begin{align*}
    \zeta \le \frac{2 \cdot \frac{\beta^{\occstate}}{\occstate_{\min}} \cdot \occstate^{\targetpi}(s) + \occstate^{\targetpi}(s)^2}{\occstate^{\targetpi}(s)^2 - \beta^{\occstate}} = \frac{1 + 2 \cdot \frac{\beta^{\occstate}}{\occstate_{\min} \cdot \occstate^{\targetpi}(s)} }{1- \frac{\beta^{\occstate}}{\occstate^{\targetpi}(s)^2}} \le \frac{1 + 2 \cdot \frac{\beta^{\occstate}}{\occstate_{\min}^2} }{1- \frac{\beta^{\occstate}}{\occstate_{\min}^2}}. 
\end{align*}
To obtain the last inequality, we have used the definition of $\occstate_{\min}$.
Since $x\to \frac{x}{1 + x}$ is an increasing function for $x > -1$, this further implies that 
\begin{align*}
    \frac{\zeta}{\zeta + 1}  \le \frac{\frac{1 + 2 \cdot \frac{\beta^{\occstate}}{\occstate_{\min}^2} }{1- \frac{\beta^{\occstate}}{\occstate_{\min}^2}}}{1 + \frac{1 + 2 \cdot \frac{\beta^{\occstate}}{\occstate_{\min}^2} }{1- \frac{\beta^{\occstate}}{\occstate_{\min}^2}}} = \frac{\frac{1 + 2 \cdot \frac{\beta^{\occstate}}{\occstate_{\min}^2} }{1- \frac{\beta^{\occstate}}{\occstate_{\min}^2}}}{\frac{1- \frac{\beta^{\occstate}}{\occstate_{\min}^2} + 1 + 2 \cdot \frac{\beta^{\occstate}}{\occstate_{\min}^2} }{1- \frac{\beta^{\occstate}}{\occstate_{\min}^2}}} \le \frac{1 + 2 \cdot \frac{\beta^{\occstate}}{\occstate_{\min}^2}}{2+\frac{\beta^{\occstate}}{\occstate_{\min}^2}},
\end{align*}
and hence, the claim of the theorem follows from Theorem \ref{thm.attack_influence_a}.
 \end{proof}

\subsection{Proof of Theorem \ref{prop.influence_impossibility}}\label{sec.appendix.proof_prop_influence_impossibility}
\textbf{Statement: }{\em 
Fix the poisoned reward function $\widehat{R}$, and assume that there exists state-action pair $(s, a) \in \tightsettarget$ such that $\alignment(\defensepi) < \alignment(\targetpi)$. Then for any $\delta > 0$, there exists a reward function $\overline{R}$ such that $\widehat R = \attack(\overline R, \targetpi,\epsA)$ and $\influence^{\defensepi} \ge  \influence^{\targetpi} + \delta$.
}

\begin{proof}
	Consider a state-action pair $(s,a) \in \tightsettarget$ such that
	\begin{align*}
		 \alignment(\defensepi) < \alignment(\targetpi).
	\end{align*}
	Note that the statement of the theorem assumes that there is at least one such pair. Due to Lemma \ref{lm.tightset}, reward function $R$ defined as 
	\begin{align*}
	R = \widehat{R} + 
	\alpha \cdot (\occupancy^{\neighbor{\targetpi}{s}{a}}
	 - \occupancy^{\targetpi}),
	\end{align*}
	where $\alpha > 0$ is an arbitrary positive number, 
	satisfies $\widehat{R} = \attack(R, \targetpi, \epsA)$. Therefore, such $R$ is a plausible candidate for $\overline{R}$, so let us consider the case when $\overline{R} = R$. 
	
	Now, using the definition of the attack influence and Equation \eqref{eq.score_occupancy_relation}, we obtain
	\begin{align*}
		\influence^\defensepi - \influence^\targetpi &=
		(\overline \score^{\optpi} - \overline \score^{\defensepi}) - (\overline \score^{\optpi} - \overline \score^{\targetpi}) = \overline \score^{\targetpi} -  \overline \score^{\defensepi} \\&\overset{(i)}{\ge}
		\overline \score^{\targetpi} - \overline \score^{\defensepi} - 	[\widehat\score^{\targetpi} - \widehat\score^{\defensepi}] = (\overline \score^{\targetpi} - \widehat\score^{\targetpi}) - 
		(\overline \score^{\defensepi} - \widehat\score^{\defensepi}),
		\end{align*}
		where $(i)$ is due to the optimality of $\targetpi$ under the reward function $\widehat R$. Since $\overline{R} = R$, and due to Equation \eqref{eq.score_occupancy_relation}, we have that
		\begin{align*}
		    \overline \score^{\targetpi} - \widehat\score^{\targetpi} &= \vecdot{\occupancy^{\targetpi}}{\overline{R}} - \vecdot{\occupancy^{\targetpi}}{\widehat{R}} \\&= \vecdot{\overline{R} - \widehat{R}}{\occupancy^{\targetpi}} = \vecdot{R - \widehat{R}}{\occupancy^{\targetpi}} \\&= \vecdot{\alpha \cdot (\occupancy^{\neighbor{\targetpi}{s}{a}}
	 - \occupancy^{\targetpi})}{\occupancy^{\targetpi}}, 
		\end{align*}
		and similarly
		\begin{align*}
		    \overline \score^{\defensepi} - \widehat\score^{\defensepi} &= \vecdot{\occupancy^\defensepi}{\overline{R}} - \vecdot{\occupancy^\defensepi}{\widehat{R}} \\&= \vecdot{\overline{R} - \widehat{R}}{\occupancy^\defensepi} = \vecdot{R - \widehat{R}}{\occupancy^\defensepi} \\&= \vecdot{\alpha \cdot (\occupancy^{\neighbor{\targetpi}{s}{a}}
	 - \occupancy^{\targetpi})}{\occupancy^\defensepi}.
		\end{align*}
		Therefore, putting everything together, we obtain 
		\begin{align*}
		\influence^\defensepi - \influence^\targetpi \ge (\barscore^{\targetpi} - \widehat\score^{\targetpi}) - 
		(\barscore^{\defensepi} - \widehat\score^{\defensepi}) &= \alpha \cdot  \vecdot{\occupancy^{\neighbor{\targetpi}{s}{a}}
			- \occupancy^{\targetpi}}{\occupancy^{\targetpi} - \occupancy^{\defensepi}} \\&=
		 \alpha \cdot ( \alignment(\targetpi) - \alignment(\defensepi) ).
	\end{align*}
	Since $\alpha$ is any positive number, by setting it to $\alpha = \frac{\delta}{\alignment(\targetpi) - \alignment(\defensepi)}$, we obtain $\influence^\defensepi - \influence^\targetpi \ge \delta$,
	which proves the claim. 
\end{proof}


\section{Proofs of Section \ref{sec.defense_characterization_general_MDP.unknown_epsilon}}\label{sec.appendix.defense_characterization_general_MDP_uknown_epsilon}

\subsection{Proof of Theorem \ref{thm.overestimate_attack_param}}\label{sec.appendix.thm_prop_overestimate_attack_param}
The proof of the theorem is similar to the proof of
Theorem
\ref{thm.defense_optimization.known_epsilon} and builds on two lemmas which we introduce in this section.
\begin{lemma}\label{lm.tightset.unknown}
Set $\epsilon=\min\{\epsD, \widehat \epsilon\}$, where
\begin{align*}
    \widehat \epsilon = \min_{s, a\ne \targetpi(s)} \left [ \widehat \score^{\targetpi}-\widehat \score^{\neighbor{\targetpi}{s}{a}} \right]. 
\end{align*}
    Reward function $R$ satisfies $\widehat{R} = \attack(R, \targetpi, \tilde \epsilon)$ for some $\tilde \epsilon \in (0, \epsD]$ if any only if
\begin{align*}
    R = \widehat{R} + \sum_{(s, a) \in \tightset} \alpha_{s,a} \cdot \left (\occupancy^{\neighbor{\targetpi}{s}{a}} - \occupancy^{\targetpi} \right),
\end{align*}
for some $\alpha_{s, a} \ge 0$.
\end{lemma}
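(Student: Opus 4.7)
The plan is to reduce this lemma to Lemma \ref{lm.tightset} applied separately at each admissible value of $\tilde\epsilon$, and then show that the union of the resulting representations (as $\tilde\epsilon$ ranges over $(0, \epsD]$) collapses to the single representation indexed by $\tightset$ stated here. The main tools will be the primal feasibility condition from Lemma \ref{lm.kkt_conditions} together with the structural properties of $\Theta^{\tilde\epsilon}$ implied by the definition of $\widehat\epsilon$.

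First I would record two structural facts. By the definition of $\widehat\epsilon$ as $\min_{s, a\ne \targetpi(s)}[\widehat\score^{\targetpi}-\widehat\score^{\neighbor{\targetpi}{s}{a}}]$, the set $\Theta^{\tilde\epsilon} = \{(s,a): \widehat\score^{\neighbor{\targetpi}{s}{a}} - \widehat\score^{\targetpi} = -\tilde\epsilon\}$ is empty whenever $\tilde\epsilon < \widehat\epsilon$. Second, if $\widehat R = \attack(R, \targetpi, \tilde\epsilon)$, then primal feasibility in Lemma \ref{lm.kkt_conditions} applied at $\widehat R$ forces $\widehat\score^{\targetpi} \ge \widehat\score^{\neighbor{\targetpi}{s}{a}} + \tilde\epsilon$ for all $s, a\ne\targetpi(s)$, i.e.\ $\tilde\epsilon \le \widehat\epsilon$.

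Then I would split into two cases. If $\widehat\epsilon \le \epsD$, so $\epsilon = \widehat\epsilon$ and $\tightset = \Theta^{\widehat\epsilon}$: for the forward direction, suppose $\widehat R = \attack(R, \targetpi, \tilde\epsilon)$ with $\tilde\epsilon \in (0, \epsD]$; by the second fact $\tilde\epsilon \le \widehat\epsilon$, and if $\tilde\epsilon < \widehat\epsilon$ then $\Theta^{\tilde\epsilon} = \emptyset$, so Lemma \ref{lm.tightset} at parameter $\tilde\epsilon$ gives $R = \widehat R$ (the trivial representation with all $\alpha_{s,a}=0$), while if $\tilde\epsilon = \widehat\epsilon$ the lemma directly yields the claimed representation; for the backward direction, Lemma \ref{lm.tightset} at $\widehat\epsilon$ converts the representation into $\widehat R = \attack(R, \targetpi, \widehat\epsilon)$, with $\widehat\epsilon \in (0, \epsD]$ serving as the required witness. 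If instead $\widehat\epsilon > \epsD$, so $\epsilon = \epsD$ and $\tightset = \Theta^{\epsD} = \emptyset$, the stated representation forces $R = \widehat R$; the forward direction gives the same conclusion since any $\tilde\epsilon \in (0, \epsD]$ satisfies $\tilde\epsilon < \widehat\epsilon$ and hence $\Theta^{\tilde\epsilon}=\emptyset$, and the backward direction follows because $R = \widehat R$ is trivially a minimizer of $\|R-\widehat R\|_2$ over a feasible set that contains $\widehat R$ (feasibility holds for any $\tilde\epsilon \le \widehat\epsilon$).

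The main obstacle I anticipate is the bookkeeping around which index set is being used: each application of Lemma \ref{lm.tightset} yields a representation indexed by the $\tilde\epsilon$-specific set $\Theta^{\tilde\epsilon}$, whereas the target statement uses the single set $\tightset$ determined by $\epsilon = \min\{\epsD, \widehat\epsilon\}$. The saving grace is that within the feasible range $\tilde\epsilon \in (0, \widehat\epsilon]$, the set $\Theta^{\tilde\epsilon}$ is non-empty only at $\tilde\epsilon = \widehat\epsilon$, so the union of representations across admissible $\tilde\epsilon$ collapses to at most one nontrivial representation, which is precisely the one indexed by $\tightset$ in each of the two cases above.
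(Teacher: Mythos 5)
Your argument is correct, and it reaches the conclusion by a more modular route than the paper. The paper's proof of this lemma does not invoke Lemma \ref{lm.tightset} at all: it re-runs the KKT argument from scratch, using Lemma \ref{lm.kkt_conditions} to show in the necessity direction that any active multiplier $\lambda(s,a)\ne 0$ forces $\tilde\epsilon=\widehat\epsilon=\epsilon$ and hence $(s,a)\in\tightset$, and in the sufficiency direction constructing an explicit multiplier vector and verifying all four KKT conditions with $\tilde\epsilon=\epsilon$. You instead factor the work into three observations --- primal feasibility of $\widehat R$ forces $\tilde\epsilon\le\widehat\epsilon$; $\Theta^{\tilde\epsilon}=\emptyset$ for $\tilde\epsilon<\widehat\epsilon$; and Lemma \ref{lm.tightset} applied at the only surviving nontrivial level $\tilde\epsilon=\widehat\epsilon$ --- which makes the case analysis over $\min\{\epsD,\widehat\epsilon\}$ transparent and avoids repeating the KKT bookkeeping. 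The one thing you should make explicit is that Lemma \ref{lm.tightset} is stated and proved in the paper only for the specific parameter $\epsA$, with primal feasibility supplied by the fact that the attack was actually run at $\epsA$; your reduction needs the parametric version at an arbitrary $\tilde\epsilon$, valid exactly when $\tilde\epsilon\le\widehat\epsilon$. That generalization does hold (the proof of Lemma \ref{lm.tightset} goes through verbatim once primal feasibility is supplied by the hypothesis $\tilde\epsilon\le\widehat\epsilon$ instead), and you clearly rely on this condition, but it deserves a sentence rather than being left implicit. With that stated, each approach has its merit: yours is shorter and reuses proved machinery; the paper's is self-contained and makes the role of complementary slackness in pinning down $\tilde\epsilon=\widehat\epsilon$ visible.
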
\begin{proof}
    We divide the proof into two parts, respectively proving the sufficiency and the necessity of the condition.
    
    \textit{Part 1 (Necessity):} Assume that $\widehat{R} = \attack(R, \targetpi, \tilde \epsilon)$ for some $\tilde \epsilon \in (0, \epsD]$. 
		From the stationariry and dual feasibility
		conditions in Lemma \ref{lm.kkt_conditions}, we deduce
		\begin{align}
		\exists\lambda \succcurlyeq 0:
			R = \widehat{R} + \sum_{s,a\ne \targetpi(s)}  \lambda(s,a) \cdot (
		\occupancy^{\neighbor{\targetpi}{s}{a}} - \occupancy^{\targetpi}
		)
		\label{eq.proof.lm.tightset.unkonwn.1}.
		\end{align}
		We claim that $\lambda(s,a)=0$ for all $(s,a) \notin \tightset$. Note that this would imply the
		lemma's statement by setting $
\alpha_{s,a}= \lambda(s,a)$,
since the terms corresponding to $(s,a)\notin \tightset$ could be skipped in the summation of \eqref{eq.proof.lm.tightset.unkonwn.1}.

		To see why the claim holds, assume that $\lambda(s,a) \ne 0$ for some $(s,a)$
		where $a\ne\targetpi(s)$
		.
		From complementary slackness, we know that
		$\occdiffmatrix(s,a)^T \cdot R + \tilde \epsilon = 0$, which implies that
\begin{align}
\widehat{\epsilon} = \min_{\tilde s, \tilde a \ne \targetpi(\tilde s)} (-\occdiffmatrix(\tilde s,\tilde a)^T \cdot R) \le 
	-\occdiffmatrix(s,a)^T \cdot R  = \tilde \epsilon
	\label{eq.proof.lm.tightset.unkonwn.2}.
\end{align}
However, $\tilde \epsilon \le \widehat{\epsilon}$ holds by primal feasibility. 
Therefore, all the inequalities are equalities, which implies
$
\tilde	\epsilon=\widehat{\epsilon}
$.
Since $\tilde \epsilon\le \epsD$,
we conclude that $\tilde \epsilon=\min\{\epsD, \widehat{\epsilon}\}=\epsilon$.
Since
all of the inequalities in 
\eqref{eq.proof.lm.tightset.unkonwn.2} are indeed
equalities, we conclude
\begin{align*}
		-\occdiffmatrix(s,a)^T \cdot R =\epsilon \implies
		(s,a) \in \tightset,
\end{align*}
which proves the claim.

\textit{Part 2 (Sufficiency):}
Assume that
\begin{align*}
    R = \widehat{R} + \sum_{(s, a) \in \tightset} \alpha_{s,a} \cdot \left (\occupancy^{\neighbor{\targetpi}{s}{a}} - \occupancy^{\targetpi} \right),
\end{align*}
for some $\alpha_{s, a} \ge 0$.
Set $\tilde \epsilon=\epsilon$ and note that
$\tilde \epsilon \le \epsD$ by definition. 
Set 
\begin{align*}
    \lambda(s,a)
=\begin{cases}
\alpha_{s,a} \quad\text{if} \quad (s,a) \in \tightset\\
0 \quad\text{o.w.}
\end{cases}.
\end{align*}
We now verify all the conditions of Lemma \ref{lm.kkt_conditions}
hold. 
Stationarity and dual feasibility hold because
$R = \widehat{R}+ \sum_{s, a} \lambda(s,a) \cdot \occdiffmatrix(s,a)$ and $\lambda \succcurlyeq 0$.
Primal feasibility holds because $\tilde \epsilon=\min\{\epsD, \widehat{\epsilon}\}\le \widehat{\epsilon}$.
Finally, complementary slackness holds because
\begin{align*}
\lambda(s,a) \ne 0 \implies
(s,a) \in \tightset \implies
\occdiffmatrix(s,a)^TR + \epsilon = 0.
\end{align*}
\end{proof}
\begin{lemma}\label{lm.wost_case_utility.unkown}
    Let $\score^{\pi}_{min}$ be the solution to
	the following optimization problem
	\begin{align}
	\label{prob.wost_case_utility.unkown}
	\tag{P5}
		\min_{R} \score^{\pi}\quad s.t. \quad
		\widehat{R} = \attack(R, \targetpi, \tilde\epsilon)\land 
		0< \tilde \epsilon \le \epsD.
	\end{align}
	Then
	\begin{align*}
		\score^{\pi}_{min} = \begin{cases}
			\hatscore^{\pi} \quad\text{if } \quad\forall (s,a) \in \tightset: \alignment(\pi)\ge 0\\
			-\infty \quad \text{o.w.}
		\end{cases},
	\end{align*}
	where $\epsilon=\min\{\epsD, \widehat \epsilon\}$,
	and 
	\begin{align*}
    \widehat \epsilon = \min_{s, a\ne \targetpi(s)} \left [ \widehat \score^{\targetpi}-\widehat \score^{\neighbor{\targetpi}{s}{a}} \right]. 
\end{align*}
\end{lemma}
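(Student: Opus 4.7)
The plan is to mimic the strategy used to prove Lemma \ref{lm.wost_case_utility}, replacing the characterization from Lemma \ref{lm.tightset} with its unknown-parameter analog from Lemma \ref{lm.tightset.unknown}. Concretely, I will rewrite the feasible set of \eqref{prob.wost_case_utility.unkown} in a parameterized form, use it to express $\score^{\pi}$ as $\hatscore^{\pi}$ plus a nonnegative linear combination involving the $\alignment(\pi)$ values, and then optimize over the coefficients to read off the two cases.

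First, I invoke Lemma \ref{lm.tightset.unknown} to conclude that $R$ is feasible for \eqref{prob.wost_case_utility.unkown} if and only if
\begin{align*}
    R \;=\; \widehat R + \sum_{(s,a)\in \tightset} \alpha_{s,a}\cdot \bigl(\occupancy^{\neighbor{\targetpi}{s}{a}} - \occupancy^{\targetpi}\bigr), \quad \alpha_{s,a}\ge 0,
\end{align*}
where $\epsilon = \min\{\epsD, \widehat\epsilon\}$. Using $\score^{\pi} = \vecdot{\occupancy^{\pi}}{R}$ from Equation \eqref{eq.score_occupancy_relation} together with the definition of $\alignment(\pi)$, this gives the key identity
\begin{align*}
    \score^{\pi} \;=\; \hatscore^{\pi} + \sum_{(s,a)\in \tightset} \alpha_{s,a}\cdot \alignment(\pi).
\end{align*}
The optimization \eqref{prob.wost_case_utility.unkown} is thus equivalent to minimizing a linear function over the nonnegative orthant indexed by $\tightset$.

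The proof then splits into two cases on the sign of $\alignment(\pi)$ for $(s,a)\in\tightset$. If all $\alignment(\pi) \ge 0$, the minimum of the linear function over $\alpha_{s,a}\ge 0$ is attained at $\alpha_{s,a} = 0$, which corresponds to $R = \widehat R$; this $R$ is feasible (it is the trivial choice in the parameterization), so $\score^{\pi}_{min} = \hatscore^{\pi}$. If instead there exists $(s,a)\in\tightset$ with $\alignment(\pi)<0$, I set all other $\alpha_{s',a'} = 0$ and send $\alpha_{s,a}\to\infty$; by the parameterization each such choice remains feasible, and the corresponding $\score^{\pi}$ tends to $-\infty$, yielding $\score^{\pi}_{min}=-\infty$.

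I do not anticipate a substantive obstacle: the heavy lifting has already been done in Lemma \ref{lm.tightset.unknown}, and once the feasibility set is written in the explicit conical form above, the remainder is a direct reduction to the corresponding step in the proof of Lemma \ref{lm.wost_case_utility}. The only subtlety to double-check is that the choice $R = \widehat R$ (i.e.\ all $\alpha_{s,a} = 0$) is genuinely feasible for \eqref{prob.wost_case_utility.unkown}, which follows because $\widehat\epsilon>0$ under our ergodicity assumption and Lemma \ref{lm.tightset.unknown} admits the all-zero choice of coefficients.
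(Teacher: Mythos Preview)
Your proposal is correct and follows essentially the same approach as the paper: both invoke Lemma \ref{lm.tightset.unknown} to rewrite the feasible set, obtain the identity $\score^{\pi} = \hatscore^{\pi} + \sum_{(s,a)\in\tightset}\alpha_{s,a}\,\alignment(\pi)$, and then split into the two cases depending on the sign of $\alignment(\pi)$. Your explicit framing as minimizing a linear function over the nonnegative orthant is a slight polish, but the argument and the handling of feasibility of $R=\widehat R$ match the paper's proof.
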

\begin{proof}
The proof is similar to the proof of Lemma \ref{lm.wost_case_utility}.
    We separately analyze the two cases: the case when $\alignment(\pi) \ge 0$ for all $(s,a) \in \tightset$ holds, and the case when it does not.
    
    \textbf{Case 1:} If $\alignment(\pi) \ge 0$ for all $(s,a) \in \tightset$, then by using Equation \eqref{eq.score_occupancy_relation} and Lemma \ref{lm.tightset.unknown} we obtain that 
	\begin{align*}
		\score^{\pi} - \widehat{\score}^{\pi} = 
		\vecdot{\occupancy^{\pi}}{R - \widehat R} = 
		\sum_{(s,a) \in\tightset} \alpha_{s, a}\cdot \vecdot{\occupancy^{\pi}}{
	\occupancy^{\neighbor{\targetpi}{s}{a}} - \occupancy^{\targetpi}
	} \ge 0.
	\end{align*}
	Therefore, $\score^{\pi} \ge \widehat{\score}^{\pi}$. Furthermore, from Lemma \ref{lm.tightset.unknown}, we know that $R = \widehat R$ satisfies the constraint in the optimization problem \eqref{prob.wost_case_utility.unkown}, so the score of the optimal solution to \eqref{prob.wost_case_utility.unkown} is $\score^{\pi}_{\min} = \widehat{\score}^{\pi}$.
	
	\textbf{Case 2:} Now, consider the case when $\alignment(\pi) < 0$ for a certain state-action pair $(s, a) \in \tightset$. Let $\alpha_{s, a}$ be an arbitrary positive number. From Lemma \ref{lm.tightset.unknown}, we know that
\begin{align*}
    R = \widehat R + \alpha_{s, a} \cdot \vecdot{\occupancy^{\pi}}{
	\occupancy^{\neighbor{\targetpi}{s}{a}} - \occupancy^{\targetpi}}
\end{align*}
satisfies the constraint in the optimization problem \eqref{prob.wost_case_utility.unkown}, and hence is a solution to \eqref{prob.wost_case_utility.unkown}. Moreover, by using this solution together with Equation \eqref{eq.score_occupancy_relation}, we obtain
\begin{gather*}
	\score^{\pi} - \widehat{\score}^{\pi} = 
\vecdot{\occupancy^{\pi}}{R - \widehat R} = 	\alpha_{s, a}\cdot \vecdot{\occupancy^{\pi}}{
	\occupancy^{\neighbor{\targetpi}{s}{a}} - \occupancy^{\targetpi}
	}= \alpha_{s, a} \cdot \alignment.
\end{gather*}
Since $\alpha_{s, a}$ can be arbitrarily large and $\alignment < 0$, while $\widehat{\score}^{\pi}$ is fixed, $\score^{\pi}$ can be arbitrarily small. Hence, the score of the optimal solution to \eqref{prob.wost_case_utility.unkown} is unbounded from below, i.e., $\score^{\pi}_{\min} = -\infty$.
\end{proof}

We are now ready to prove Theorem \ref{thm.overestimate_attack_param}.

\textbf{Statement:} { \em
Assume that $\epsD \ge \epsA$, and define $\widehat \epsilon = \min_{s, a\ne \targetpi(s)} \left [ \widehat \score^{\targetpi}-\widehat \score^{\neighbor{\targetpi}{s}{a}} \right]$. 
Then, the optimization problem \eqref{prob.defense_optimization.known_epsilon} with $\epsilon = \min \{ \epsD, \widehat \epsilon \}$ is feasible and its optimal solution $\occupancy_{\max}$ identifies an optimal policy $\defensepi$ for the optimization problem \eqref{prob.defense_b} via Equation
\eqref{eq.defense_optimization.known_epsilon}.
This policy $\defensepi$ satisfies $\overline \score^{\defensepi} \ge \widehat \score^{\defensepi}$. Furthermore, if the condition in Equation \eqref{eq.attack_influence_condition} holds, 
the attack influence of policy $\defensepi$ is bounded as in Equation \eqref{eq.thm_attack_influence}.
}
\begin{proof}
    The proof is divide into two parts, respectively proving the first and the second claim in the theorem statement.
    
    \textit{Part 1 (Solution to \eqref{prob.defense_b}):} We prove that the optimization problem  \eqref{prob.defense_optimization.known_epsilon} is feasible, its optimal solution $\occupancy_{\max}$ identifies an optimal solution to \eqref{prob.defense_b} via Equation \eqref{eq.defense_optimization.known_epsilon}, and satisfies
    $\barscore^{\defensepi} \ge \hatscore^{\defensepi}$.
    
    The feasibility of the problem follows from Lemma \ref{lm.prob_feasibility}.
     Note that
     $\occupancy_{\max}$ always
     exists since
     \eqref{prob.defense_optimization.known_epsilon} is
     maximizing a continuous function 
     over a closed and bounded set. Concretely,
     the constraints
     $\vecdot{\occupancy^{\neighbor{\targetpi}{s}{a}}-
     \occupancy^{\targetpi}
     }{\occupancy}\ge 0$
     and Equations
     \eqref{eq.bellman.occupancy.1} and \eqref{eq.bellman.occupancy.2} each define closed sets
     and since $||\occupancy||_1=1$, the set $\Occupancy$ is bounded.
    
      In order to see why
     $\occupancy_{\max}$
     specifies an optimal solution to
     \eqref{prob.defense_b}, 
     note that we can rewrite
     \eqref{prob.defense_b}
     as
     \begin{align*}
         \max_{\pi}
         \score^{\pi}_{\min},
     \end{align*}
     where
     $\score^{\pi}_{\min}$
     is the solution to the optimization problem 
     \eqref{prob.wost_case_utility.unkown}.
     Due to Lemma
     \ref{lm.wost_case_utility.unkown}, this could be rewritten as
     \begin{align*}
         &\max_{\pi} \hatscore^{\pi}\\
	&\quad \mbox{ s.t. } \alignment(\pi) \ge 0 \quad\forall(s,a) \in \tightset,
	\end{align*}
	where
	$\epsilon = \min\{\epsD, \widehat{\epsilon}\}$.
	Namely, maximizing a function $f(x)$ subject to constraint $x \in \mathcal X$ (where $\mathcal X \ne \emptyset$) is equivalent to maximizing $\tilde f(x)$, where
     \begin{align*}
         \tilde f(x) =
         \begin{cases}
             f(x) \quad\text{if} \quad x \in \mathcal X\\
             -\infty \quad\text{o.w.}
    \end{cases}.
    \end{align*}
	Due to 
	\eqref{eq.score_occupancy_relation} and
	\eqref{eq.alignment}, 
	the constrained optimization problem above can be rewritten as
	\begin{align*}
	    &\max_{\pi} 
	    \vecdot{\occupancy^{\pi}}{\widehat{R}}
	    \\
	&\quad \mbox{ s.t. } 
	\vecdot{\occupancy^{\neighbor{\targetpi}{s}{a}}-\occupancy^{\targetpi}}{\occupancy^{\pi}}\quad\forall(s,a) \in \tightset.
	\end{align*}
	Therefore,
	given Lemma 
	\ref{lm.syed}, 
	$\occupancy_{\max}$
	specifies a solution to
	\eqref{prob.defense_b} via
	\eqref{eq.defense_optimization.known_epsilon}.
    Finally, given Lemma
    \ref{lm.wost_case_utility.unkown}, $\occupancy^{\defensepi}$
    satisfies the constraints of \eqref{prob.wost_case_utility.unkown} and therefore $\hatscore^{\defensepi}$ is a lower bound on
    $\barscore^{\defensepi}$.
    
   \textit{Part 2 (Attack Influence):} We prove that 
   if the condition in Equation \eqref{eq.attack_influence_condition} holds,
   then Equation \eqref{eq.thm_attack_influence} holds with $\epsilon$.
   We divide the proof into two cases.
   
   \textbf{Case 1:}
   Assume that
    $\widehat{R} = \overline{R}$. In this case Equation \eqref{eq.thm_attack_influence} holds trivially as $\targetpi$ is optimal and 
    $\influence^{\defensepi} = \hatscore^{\targetpi} - \hatscore^{\defensepi}=\widehat{\influence}$.
    
    \textbf{Case 2:} 
    Assume that $\widehat{R}\ne \overline{R}$.
    Corollary 
    \ref{lm.epsilon.hat} implies that
    $\widehat{\epsilon}=\epsA$, which further implies
    $\epsilon=\min\{\widehat{\epsilon}, \epsD\}=\epsA$.
    Now,
    given Part 1 of this theorem,
    the optimization problem
     \eqref{prob.defense_b}
     is equivalent
     to
     optimization problem
     \eqref{prob.defense_optimization.known_epsilon}
     with $\epsilon=\epsA$.
     Therefore the statement
     follows from 
     Theorem
     \ref{thm.attack_influence_a}.
\end{proof}

\subsection{Proof of Theorem \ref{thm.underestimate_attack_param}}\label{sec.appendix.proof_thm_underestimate_attack_param}
\textbf{Statement:} {\em If $\epsA > \epsD$, then
$\targetpi$ is the unique solution of
the optimization problem \eqref{prob.defense_b}.
Therefore
$\defensepi=\targetpi$ and
$\influence^{\targetpi} = \influence^{\defensepi}$.}
\begin{proof}
    As in Theorem \ref{thm.overestimate_attack_param}, set $\epsilon=\min\{\widehat{\epsilon}, \epsD\}$
    where 
	\begin{align*}
    \widehat \epsilon = \min_{s, a\ne \targetpi(s)} \left [ \widehat \score^{\targetpi}-\widehat \score^{\neighbor{\targetpi}{s}{a}} \right]. 
\end{align*}
    From the feasibility of the attack, we have that
\begin{align*}
    \forall s, a\ne \targetpi(s):
    \hatscore^{\targetpi}
     - \hatscore^{\neighbor{\targetpi}{s}{a}}
    \ge \epsA > \epsD \ge \epsilon \implies
    \tightset=\emptyset.
\end{align*}
    Therefore, given Lemma \ref{lm.tightset.unknown}, the constraint in the optimization problem \eqref{prob.defense_b} is satisfied only for $R = \widehat R$. This reduces the optimization problem \eqref{prob.defense_b} to $\max_{\pi} \widehat \score^{\pi}$, which has a unique optimal solution: $\targetpi$. This, in turn, implies that $\defensepi = \targetpi$ and $\influence^{\targetpi} = \influence^{\defensepi}$.
\end{proof}

\section{Proofs of Appendix \ref{sec.defense_characterization_specific_mdp}}\label{sec.appendix.additional_results_special_mdps}

In this section, we provide a more formal treatment of the results in Appendix \ref{sec.defense_characterization_specific_mdp}, formally stating and proving these results.

\subsection{Characterization of the defense policy}\label{sec.appendix.special_mdp.defense_characterization}

In this section we provide
a formal treatment of the results outlined
in Appendix \ref{sec.defense_characterization_specific_mdp}, related to the defense policy $\defensepi$.

\begin{proposition}\label{lm.bandit.defense.character}
Assume that condition \eqref{eq.condition.no_action_influence} holds.
Set $\widehat{\epsilon}$ as
\begin{align*}
    \widehat \epsilon = \min_{s, a\ne \targetpi(s)} \left [ \widehat \score^{\targetpi}-\widehat \score^{\neighbor{\targetpi}{s}{a}} \right]. 
\end{align*}
Consider the following policy
\begin{align}
\defensepi(a|s) = \frac{\ind{a \in \tightsetstate
\cup \{ \targetpi(s) \}
}
}{|\tightsetstate| + 1}.
\label{eq.lm.bandit.characterization.policy}
\end{align}
Equation \eqref{eq.lm.bandit.characterization.policy} characterizes the solution to the optimization problems
\eqref{prob.defense_a}
and \eqref{prob.defense_b} with parameters
$\epsilon=\epsA$ and
$\epsilon=\min\{\widehat{\epsilon}, \epsD\}$
respectively.
Furthermore, in both cases 
$\barscore^{\defensepi}=\hatscore^{\defensepi}$
\end{proposition}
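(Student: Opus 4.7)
The plan is to exploit the action-independent transitions to reduce the global optimization \eqref{prob.defense_optimization.known_epsilon} to $|S|$ uncoupled per-state linear programs of the form \eqref{prob.defense_optimization.known_epsilon_b}, solve each one in closed form, and then use the existing characterization theorems to lift this back to solutions of \eqref{prob.defense_a} and \eqref{prob.defense_b}. Concretely, under assumption \eqref{eq.condition.no_action_influence} the state occupancy measure is policy-independent, so write $\occstate^{\pi} = \occstate$ and $\occupancy^{\pi}(s,a) = \occstate(s)\,\pi(a|s)$. A direct calculation then gives $\hatscore^{\targetpi} - \hatscore^{\neighbor{\targetpi}{s}{a}} = \occstate(s)\bigl(\widehat R(s,\targetpi(s)) - \widehat R(s,a)\bigr)$, so $(s,a)\in \tightset$ if and only if $a \in \tightsetstate$. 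Moreover, $\occupancy^{\neighbor{\targetpi}{s}{a}} - \occupancy^{\targetpi}$ is supported only on the two entries $(s,a)$ and $(s,\targetpi(s))$, with values $\pm \occstate(s)$, so the constraint $\vecdot{\occupancy^{\neighbor{\targetpi}{s}{a}} - \occupancy^{\targetpi}}{\occupancy^{\pi}} \ge 0$ becomes $\pi(a|s) \ge \pi(\targetpi(s)|s)$. The objective $\vecdot{\occupancy^{\pi}}{\widehat R}$ decouples over $s$, confirming the per-state reduction.

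Next I would solve the per-state LP \eqref{prob.defense_optimization.known_epsilon_b}. The attack-feasibility condition ensures $\widehat R(s,\targetpi(s)) - \widehat R(s,a) \ge \epsilon/\occstate(s) > 0$ for every $a \ne \targetpi(s)$, so $\targetpi(s)$ is the strictly best action. Letting $x = \pi(\targetpi(s)|s)$, the constraints force $\pi(a|s) \ge x$ for each $a\in \tightsetstate$, and a substitution expresses the objective as $x\cdot\bigl(\widehat R(s,\targetpi(s))-r_2\bigr) + r_2 + \sum_{a \notin \tightsetstate\cup\{\targetpi(s)\}} z_a\,(\widehat R(s,a)-r_2)$, where $r_2 = \widehat R(s,\targetpi(s)) - \epsilon/\occstate(s)$ and $\widehat R(s,a) < r_2$ for $a$ outside $\tightsetstate \cup \{\targetpi(s)\}$. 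Both the coefficient of $x$ and the fact that leftover weights multiply strictly negative quantities force $z_a = 0$ and $x = 1/(|\tightsetstate|+1)$ at optimum, with $\pi(a|s) = x$ for every $a\in\tightsetstate$. This is precisely \eqref{eq.lm.bandit.characterization.policy}. Optimality for \eqref{prob.defense_a} with $\epsilon = \epsA$ and for \eqref{prob.defense_b} with $\epsilon = \min\{\widehat\epsilon,\epsD\}$ is then inherited directly from Theorem~\ref{thm.defense_optimization.known_epsilon} and Theorem~\ref{thm.overestimate_attack_param} respectively.

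Finally, to prove $\barscore^{\defensepi} = \hatscore^{\defensepi}$, the inequality $\ge$ is immediate from the two invoked theorems. For the reverse, I would apply Lemma~\ref{lm.tightset} (respectively Lemma~\ref{lm.tightset.unknown}) to write $\overline R - \widehat R = \sum_{(s,a)\in\tightset}\alpha_{s,a}\bigl(\occupancy^{\neighbor{\targetpi}{s}{a}} - \occupancy^{\targetpi}\bigr)$ with $\alpha_{s,a}\ge 0$, and then compute $\alignment(\defensepi) = \occstate(s)^2\bigl(\defensepi(a|s) - \defensepi(\targetpi(s)|s)\bigr) = 0$ for every $(s,a)\in\tightset$, since both values equal $1/(|\tightsetstate|+1)$. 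Consequently $\barscore^{\defensepi} - \hatscore^{\defensepi} = \vecdot{\overline R - \widehat R}{\occupancy^{\defensepi}} = \sum_{(s,a)\in\tightset}\alpha_{s,a}\,\alignment(\defensepi) = 0$. The main obstacle I anticipate is the per-state LP argument: one has to simultaneously rule out any mass on actions outside $\tightsetstate\cup\{\targetpi(s)\}$ and force equality of mass on $\tightsetstate$ and on $\{\targetpi(s)\}$, which hinges on the strict suboptimality gap $\epsilon > 0$. In the unknown-parameter case this requires checking that $\widehat\epsilon > 0$ (which follows from unique optimality of $\targetpi$ under $\widehat R$), so that $\epsilon = \min\{\widehat\epsilon,\epsD\}$ is indeed positive and the gap argument goes through uniformly across both optimization problems.
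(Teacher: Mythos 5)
Your proposal is correct and follows essentially the same route as the paper: reduce \eqref{prob.defense_optimization.known_epsilon} to per-state problems using the policy-independence of $\occstate$, identify $\tightset$ with the sets $\tightsetstate$, invoke Theorems \ref{thm.defense_optimization.known_epsilon} and \ref{thm.overestimate_attack_param} to lift the solution back to \eqref{prob.defense_a} and \eqref{prob.defense_b}, and conclude $\barscore^{\defensepi}=\hatscore^{\defensepi}$ from $\alignment(\defensepi)=0$ together with Lemmas \ref{lm.tightset} and \ref{lm.tightset.unknown}. The only (cosmetic) difference is in solving the per-state LP: you eliminate variables and read off the signs of the coefficients, whereas the paper uses an exchange argument that redistributes mass to derive a contradiction; both are valid and yield the same unique optimum.
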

\begin{proof}
    Given Theorem \ref{thm.defense_optimization.known_epsilon}, Theorem \ref{thm.overestimate_attack_param}, and Lemma \ref{lm.syed},
    it suffices to show that if $\epsilon \le \widehat{\epsilon}$, the solution to the optimization problem
    \begin{align*}
    &\max_{\occupancy \in \Occupancy} \vecdot{\occupancy}{\widehat R}
	\tag{P4}\\
	\notag
	&\quad \mbox{ s.t. } \vecdot{\occupancy^{\neighbor{\targetpi}{s}{a}} - \occupancy^{\targetpi}}{\occupancy} \ge 0 \quad \forall s, a \in \tightset,
\end{align*}
corresponds to the occupancy measure of policy
$\defensepi$
defined by Equation
\eqref{eq.lm.bandit.characterization.policy}.
Namely, the optimization problems
\eqref{prob.defense_a} and
\eqref{prob.defense_b} correspond to
the optimization problem
\eqref{prob.defense_optimization.known_epsilon}
with parameters
$\epsilon=\epsA$ and
$\epsilon=\min\{\widehat{\epsilon}, \epsD\}$
respectively. 
Since $\widehat{\epsilon}\le \epsA$, the primal
feasibility condition in Lemma \ref{lm.kkt_conditions} implies that the solution to the above optimization problem characterizes both cases (\eqref{prob.defense_a} and
\eqref{prob.defense_b}).

Now, due to Lemma \ref{lm.syed}, we have
\begin{align*}
    \occupancy \in \Occupancy \iff 
    \occupancy \succcurlyeq 0 \land \forall s: 
    \sum_{a}\occupancy(s,a) = 
    (1-\gamma)\sigma(s) + 
    \gamma\sum_{\tilde s, \tilde a}
    P(\tilde s, \tilde a, s)\occupancy(\tilde s, \tilde a).
\end{align*}
Since $P(\tilde s, \tilde a, s)$ is independent of
$\tilde a$, 
the second condition is equivalent to
\begin{align*}
\forall s: 
    \sum_{a}\occupancy(s,a) =
     (1-\gamma)\sigma(s) + 
     \gamma \sum_{\tilde s}\Big(P(\tilde s, \targetpi(\tilde s), s)(\sum_{\tilde a}\occupancy(\tilde s, \tilde a))
     \Big),
\end{align*}
which,  due to \eqref{eq.bellman.occstate},
is equivalent to
\begin{align*}
  \sum_{a}\occupancy(s,a) = \occstate(s).
\end{align*}

 Furthermore, given the independence of the transition distributions from policies, we have the following 
\begin{align}
   \big(
   \occupancy^{\neighbor{\targetpi}{s}{a}}-\occupancy^{\targetpi}
   \big)(\tilde s, \tilde a)
   =
   \begin{cases}
       \occstate(s)\quad
       \text{if} \quad
       (\tilde s, \tilde a)=(s,a)\\
       -\occstate(s)\quad
       \text{if} \quad
       (\tilde s, \tilde a) = 
       (s, \targetpi(s))\\
       0 \quad\text{o.w}
   \end{cases}.
   \label{eq.occ_diff_characterization_special_mdp}
\end{align}
Therefore, the constraint
$\vecdot{\occupancy^{\neighbor{\targetpi}{s}{a}} - \occupancy^{\targetpi}}{\occupancy} \ge 0$
is equivalent to
$\occupancy(s,a)\ge
\occupancy(s, \targetpi(s))
$.
Furthermore, note that
\begin{align*}
    (s,a)\in \tightset &\iff 
    \hatscore^{\neighbor{\targetpi}{s}{a}}-\hatscore^{\targetpi}=-\epsA \\&\iff
    \vecdot{\occupancy^{\neighbor{\targetpi}{s}{a}}-\occupancy^{\targetpi}}{\widehat{R}} \le -\epsA
    \\&\iff
    \widehat{R}(s,a)-\widehat{R}(s,\targetpi(s))=-\frac{\epsA}{\occstate(s)} 
    \\&\iff a \in \tightsetstate.
\end{align*}
Putting it all together,
the optimization problem
\eqref{prob.defense_optimization.known_epsilon} is equivalent to
\begin{align*}
    &\max_{\occupancy}
    \vecdot{\widehat R}{\occupancy}\\&
    \text{s.t.} \quad
    \occupancy(s,\targetpi(s)) \le \occupancy(s,a)
    \quad\forall s, a\in \tightsetstate\\&
    \sum_{a}\occupancy(s,a) = \occstate(s)
    \quad\forall s \in S
    \\&
    \occupancy(s,a)\ge 0\quad \forall (s,a).
\end{align*}
    
    Note that the maximization is now over all vectors $\occupancy\in \mathbb{R}^{|S|.|A|}$ as
    the constraint $\occupancy\in \Occupancy$ has been made explicit.
    Furthermore,
    given Lemma \ref{lm.syed}
    and
    Equation
    \eqref{eq.occupancy_measures_relation_a},
    any vector
    $\occupancy$
    satisfying
    the last two constraints (the bellman constraints)
    corresponds to a policy
    $\pi$ through
    \begin{align*}
        \pi(a|s) = \frac{\occupancy(s,a)}{\occstate(s)}.
    \end{align*}
    In other words, probability of choosing $a$ in state $s$ is proportional to $\occupancy(s,a)$.
    
    Now, let us analyze the solution to this optimization problem which we will denote by
    $\occupancy_{\max}$.
    This solution
    $\occupancy_{\max}$ exists,
    since
    the optimization problem
    is maximizing a continuous function
    on a closed and bounded set.
    
    We first claim that if
    $a \notin \tightsetstate \cup \{\targetpi(s)\}$,
    then
    $\occupancy_{\max}(s,a)=0$.
    If this is not the case, then $\occupancy_{\max}$ is not optimal.
    Concretely, consider the following vector
    $\occupancy$
    \begin{align*}
        \occupancy(\tilde s, \tilde a)=
        \begin{cases}
        \occupancy_{\max}(\tilde s, \tilde a) + \frac{1}{|\tightsetstate| + 1}\occupancy_{\max}(s, a)
        \quad\text{if}\quad \tilde s =s\land
        \tilde a \in \tightsetstate \cup\{\targetpi(s)\}\\
        0 \quad\text{if} \quad
        \tilde s = s \land \tilde a = a\\
          \occupancy_{\max}(\tilde s, \tilde a) \quad \text{o.w.}
        \end{cases}.
    \end{align*}
    In other words, we
    uniformly spread
    the probability
    of choosing
    action $a$
    in state $s$
    over
    the set
    $ \tightsetstate \cup 
    \{\targetpi(s)\}
    $.
    The vector $\occupancy$
    still satisfies the constraints:
    if
    $\tilde a\in \tightsetstate$,
    $\psi(s, \targetpi(s))-\psi(s, \tilde a)=
    \psi_{\max}(s, \targetpi(s))-\psi_{\max}(s, \tilde a)
    $
    and the objective has strictly
    improved because
 \begin{align*}
    \hatscore^{\neighbor{\targetpi}{s}{a}}-\hatscore^{\targetpi}\le -\widehat{\epsilon}\le -\epsilon \implies
    \widehat{R}(s, a)
    \le \widehat{R}(s, \targetpi(s))
    -\frac{\epsilon}{\occstate(s)}.
    \end{align*}
    Since $a\notin\tightsetstate$, 
    the inequality is strict and therefore
    \begin{align*}
        \forall \tilde a \in \tightsetstate \cup \{\targetpi(s)\}:
        \widehat{R}(s,\tilde a) > 
        \widehat{R}(s, a).
    \end{align*}
    This means that
    $\occupancy$
    was not optimal, contradicting
    the initial assumption.
    
    Now note that if
    $\occupancy_{\max}(s,a)>\occupancy_{\max}(s, \targetpi(s))$ for some
    $a\in \tightsetstate$, then again
    $\occupancy_{\max}$ isn't optimal as we could replace it with
    \begin{align*}
        \occupancy(\tilde s, \tilde a)=
        \begin{cases}
        \occupancy_{\max}(\tilde s, \tilde a) + \cfrac{\occupancy_{\max}(s, a)-
        \occupancy_{\max}(s, \targetpi(s))}{|\tightsetstate| + 1}
        \enskip\text{if}\enskip \tilde s =s\land
        \tilde a \in \tightsetstate \cup\{\targetpi(s)\}\backslash \{a\}\\
        \occupancy_{\max}(\tilde s, \tilde a) - \cfrac{|\tightsetstate|(\occupancy_{\max}(s, a)-
        \occupancy_{\max}(s, \targetpi(s)))}{|\tightsetstate| + 1}
        \enskip\text{if}\enskip
        \tilde s = s \land \tilde a = a\\
          \occupancy_{\max}(\tilde s, \tilde a) \quad \text{o.w.}
        \end{cases}.
    \end{align*}
    Intuitively, 
    since the
    action $a$
    was being chosen
    with strictly higher probability
    than action
    $\targetpi(s)$,
    we have uniformly spread this excess probability among the set $\tightsetstate \cup \{\targetpi(s)\}$.
    This vector would still be feasible as
    $\occupancy(s,a)=\occupancy(s, \targetpi(s))$ and would be strictly better in terms of utility as
    $\widehat{R}(s, \targetpi(s))> \widehat{R}(s, a)$.
    This contradicts our initial assumption and therefore
    $\occupancy_{\max}(s,a) 
    =\occupancy_{\max}(s, \targetpi(s))$
    for all $a\in \tightsetstate$.
    
    Since the occupancy measure $\occupancy_{\max}$
    satisfies $\occupancy_{\max}(s,a)=0$
    for all $a\notin \tightsetstate \cup \{\targetpi(s)\}$ and
    $\occupancy_{\max}(s,a)=\occupancy(s, \targetpi(s))$ for all $a\in \tightsetstate$, we conclude that
    it is the occupancy measure for the policy
    $\defensepi$ as defined in Equation
    \eqref{eq.lm.bandit.characterization.policy}.
    
    In order to prove $\barscore^{\defensepi}=\hatscore^{\defensepi}$, first note that for $(s,a)\in \tightset$ 
    \begin{align*}
        \vecdot{
        \occupancy^{\neighbor{\targetpi}{s}{a}}-
        \occupancy^{\targetpi}
        }{
        \occupancy^{\defensepi}
        }&=
        \occstate(s)(\occupancy^{\defensepi}(s, a) - \occupancy^{\defensepi}(s, \targetpi(s))
        \\&= 
        \occstate(s)^2(\defensepi(a|s) - \defensepi(\targetpi(s)|s)) = 0, 
    \end{align*}
    where we used Equation \eqref{eq.occ_diff_characterization_special_mdp} and Equation
    \eqref{eq.lm.bandit.characterization.policy}.
    Therefore
    \begin{align*}
        \barscore^{\defensepi}-
        \hatscore^{\defensepi}&=
        \vecdot{\overline{R}-\widehat{R}}{
        \occupancy^{\defensepi}
        }\\&\overset{(i)}{=}
        \sum_{(s,a)\in \tightset}
        \alpha_{s,a}
        \vecdot{
        \occupancy^{\neighbor{\targetpi}{s}{a}}-
        \occupancy^{\targetpi}
        }{
        \occupancy^{\defensepi}
        }\\&=
        \sum_{(s,a)\in \tightset}
        \alpha_{s,a}\cdot0
        \\&=0,
    \end{align*}
where $(i)$ follows from
Lemma \ref{lm.tightset}
in the known parameter case and
Lemma 
\ref{lm.tightset.unknown}
in the unknown parameter case.
\end{proof}

\subsection{Bounds on attack influence}\label{sec.appendix.sepcial.attack.influence}
 In this section, we provide  bounds on the attack influence of our defense in Special MDPs. We start with the bound we already stated in Appendix \ref{sec.defense_characterization_specific_mdp}, i.e., the bound for the known parameter case.   

 \begin{proposition}\label{lm.bandit.bound.known.general.state}
 (Bound for the known parameter case)
    Let $\defensepi$
    be the solution to the optimization problem 
    \eqref{prob.defense_a}. Then, the attack influence $\influence^{\defensepi}$ is bounded by
     \begin{align*}
         \influence^{\defensepi}\le 
         \max\{|S|\cdot \epsA, 
         \frac 12 \cdot\influence^{\targetpi}
         +
         \frac{2|S|-1}{2}\cdot\epsA
         \}.
     \end{align*}
 \end{proposition}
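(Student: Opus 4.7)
The plan is to derive the bound as a direct consequence of Theorem \ref{cor.attack_influence_a} applied to the special-MDP setting, together with an explicit bound on $\widehat{\influence} = \widehat{\rho}^{\targetpi} - \widehat{\rho}^{\defensepi}$ obtained from the closed-form defense policy of Proposition \ref{lm.bandit.defense.character}. Under the assumption in Equation \eqref{eq.condition.no_action_influence}, all policies induce the same state-occupancy measure $\occstate$, so $\beta^{\occstate} = \max_{s,a}\|\occstate^{\targetpi} - \occstate^{\neighbor{\targetpi}{s}{a}}\|_\infty = 0$, and the ergodicity assumption gives $\occstate_{\min} > 0$, so the hypothesis $\beta^{\occstate} \le \occstate_{\min}^2$ of Theorem \ref{cor.attack_influence_a} is trivially satisfied.

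First, I would invoke Theorem \ref{cor.attack_influence_a} with $\beta^{\occstate}=0$. The multiplicative factor collapses to $\frac{1+0}{2+0} = \frac{1}{2}$, yielding
\[
\influence^{\defensepi} \le \max\!\left\{\widehat{\influence},\ \tfrac{1}{2}[\influence^{\targetpi}+\epsA] + [\widehat{\influence}-\epsA]\right\} = \max\!\left\{\widehat{\influence},\ \tfrac{1}{2}\influence^{\targetpi} + \widehat{\influence} - \tfrac{\epsA}{2}\right\}.
\]
So the entire task reduces to showing $\widehat{\influence} \le |S|\cdot\epsA$; once this is done, substituting into both arguments of the maximum yields exactly the claimed bound with the constant $\frac{2|S|-1}{2}\epsA$.

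Next, I would bound $\widehat{\influence}$ using the explicit form of $\defensepi$ from Proposition \ref{lm.bandit.defense.character}, namely $\defensepi(a|s) = \ind{a\in \tightsetstate\cup\{\targetpi(s)\}}/(|\tightsetstate|+1)$ (with $\epsilon=\epsA$). Because actions do not influence transitions, $\hatscore^{\pi} = \sum_s \occstate(s)\sum_a \pi(a|s)\widehat{R}(s,a)$ for every $\pi$, so
\[
\widehat{\influence} = \sum_s \occstate(s)\Big[\widehat{R}(s,\targetpi(s)) - \sum_a \defensepi(a|s)\widehat{R}(s,a)\Big].
\]
By the definition of $\tightsetstate$ (in the special-MDP form derived in Proposition \ref{lm.bandit.defense.character}), every $a\in\tightsetstate$ satisfies $\widehat{R}(s,a) = \widehat{R}(s,\targetpi(s)) - \epsA/\occstate(s)$. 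Plugging in and computing the per-state contribution gives
\[
\occstate(s)\Big[\widehat{R}(s,\targetpi(s)) - \sum_a \defensepi(a|s)\widehat{R}(s,a)\Big] = \frac{|\tightsetstate|}{|\tightsetstate|+1}\cdot \epsA \le \epsA,
\]
so summing over the $|S|$ states yields $\widehat{\influence} \le |S|\cdot\epsA$, completing the argument.

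There is no genuine obstacle here: the result is essentially a corollary of Theorem \ref{cor.attack_influence_a} in the special regime where $\beta^{\occstate}$ vanishes. The only nontrivial step is the per-state computation of $\widehat{\influence}$, which is straightforward once one uses that $\occstate^{\pi}$ is policy-independent and that $\defensepi$ is uniform on $\tightsetstate\cup\{\targetpi(s)\}$; the factor $|\tightsetstate|/(|\tightsetstate|+1) \le 1$ is what produces the clean $|S|\epsA$ bound, and all state-dependent $\occstate(s)$ terms cancel because the gap $\epsA/\occstate(s)$ is inversely proportional to $\occstate(s)$.
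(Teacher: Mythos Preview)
Your proposal is correct and follows essentially the same approach as the paper: both instantiate Theorem \ref{cor.attack_influence_a} with $\beta^{\occstate}=0$ to reduce the problem to bounding $\widehat{\influence}\le |S|\cdot\epsA$, and then use the explicit uniform form of $\defensepi$ from Proposition \ref{lm.bandit.defense.character} together with the per-state reward gap $\epsA/\occstate(s)$ to obtain that bound. The only cosmetic difference is that you compute the per-state contribution exactly as $\frac{|\tightsetstate|}{|\tightsetstate|+1}\epsA$, whereas the paper uses the cruder inequality $\widehat{R}(s,a)\ge \widehat{R}(s,\targetpi(s))-\epsA/\occstate(s)$ directly.
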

 \begin{proof}
   By instantiating the result of Theorem \ref{cor.attack_influence_a} with $\beta^{\mu}=0$, which holds in this setting, we obtain that the attack influence $\influence^{\defensepi}$ is bounded by
   \begin{align*}
       \influence^{\defensepi} \le \max \left \{\widehat \influence, \frac{1}{2}\cdot [\influence^{\targetpi} +  \epsA] + [\widehat \influence - \epsA]\right \}.
   \end{align*}
   Therefore, to prove the claim, it suffices to show that
\begin{align}
\widehat{\Delta}
\le |S| \cdot \epsA.
\label{eq.second.term.bound}
\end{align}
In order to see why Equation \eqref{eq.second.term.bound} holds, first note that 
\begin{align*}
\widehat{\Delta}
&=\hatscore^{\targetpi}
-\hatscore^{\defensepi}
\\&\overset{(i)}{=}
\sum_{s,a}\widehat{R}(s,a)
(
\occupancy^{\targetpi}(s,a)
-
\occupancy^{\defensepi}(s,a)
)
\\&\overset{(ii)}{=}
\sum_{s,a}
\widehat{R}(s,a)
\occstate(s)(\targetpi(a|s)-
\defensepi(a|s)),
\end{align*}
where
$(i)$
follows from 
\eqref{eq.score_occupancy_relation} and
$(ii)$ follows
from Equation
\eqref{eq.occupancy_measures_relation_a}
and the fact that
$\occstate$ is independent of policy.
By rearranging the last expression we obtain
\begin{align}
\widehat{\influence}=
    \sum_{s}\occstate(s)
\Big(
\sum_{a} \widehat{R}(s,a)(\targetpi(a|s)-\defensepi(a|s))
\Big).
\label{eq.bandit.defense.character.second.term}
\end{align}

We now analyze the term $\Big(
\sum_{a} \widehat{R}(s,a)(\targetpi(a|s)-\defensepi(a|s))
\Big)$
for an arbitrary $s$.
Note that
\begin{align*}
&\forall
a \in \tightsetstatetarget: \quad
\widehat{R}(s,a)
= 
\widehat{R}(s, \targetpi(s))
-\frac{\epsA}{\occstate(s)}
\\ \implies&
\forall a \in \tightsetstatetarget
\cup \{\targetpi(s)\}
: \quad
\widehat{R}(s,a)
\ge 
\widehat{R}(s, \targetpi(s))
-\frac{\epsA}{\occstate(s)}
.
\end{align*}
Therefore,
given Proposition \ref{lm.bandit.defense.character}, we have
\begin{align*}
\sum_{a} \widehat{R}(s,a)(\targetpi(a|s)-\defensepi(a|s))
&=
\sum_{a} \widehat{R}(s,a)\targetpi(a|s)-
\sum_{a} \widehat{R}(s,a)\defensepi(a|s)
\\&\le 
\widehat{R}(s, \targetpi(s))
-\left (\widehat{R}(s, \targetpi(s))
-\frac{\epsA}{\occstate(s)}
\right )
\\&=
\frac{\epsA}{\occstate(s)}.
\end{align*}
Finally, using Equation \eqref{eq.bandit.defense.character.second.term}, we obtain
\begin{align*}
\widehat{\influence}&=
    \sum_{s}\occstate(s)
\Big(
\sum_{a} \widehat{R}(s,a)(\targetpi(a|s)-\defensepi(a|s))
\Big)\\&\le
\sum_{s}\occstate(s)\left (\frac{\epsA}{\occstate(s)} \right )\le \sum_{s}\epsA=|S| \cdot \epsA,
\end{align*}
which proves the claim.
\end{proof}
 For the unknown parameter case, we obtain the following bound on the attack influence. 
 \begin{proposition}
 (Bound for the unknown parameter case)
    Let $\defensepi$
    be the solution to the optimization problem 
    \eqref{prob.defense_b} with $\epsD \ge \epsA$. Then, the attack influence $\influence^{\defensepi}$ is bounded by
     \begin{align*}
         \influence^{\defensepi}\le 
         \max\{|S|\cdot \epsD, 
         \frac 12 \cdot\influence^{\targetpi}
         +
         \frac{2|S|-1}{2}\cdot\epsA
         \}.
     \end{align*}
 \end{proposition}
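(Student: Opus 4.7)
The strategy is to combine the machinery of Proposition \ref{lm.bandit.bound.known.general.state} (which handled \eqref{prob.defense_a}) with the case analysis from Theorem \ref{thm.overestimate_attack_param}. Because $\epsD \ge \epsA$, Theorem \ref{thm.overestimate_attack_param} applies and the defense policy is characterized by \eqref{prob.defense_optimization.known_epsilon} with $\epsilon=\min\{\epsD,\widehat\epsilon\}$. Since special MDPs satisfy $\beta^{\occstate}=0\le \occstate_{\min}^2$, Theorem \ref{cor.attack_influence_a} guarantees that the condition in \eqref{eq.attack_influence_condition} holds, and hence (via Theorem \ref{thm.overestimate_attack_param}) the bound
\[
\influence^{\defensepi}\le \max\Big\{\widehat\influence,\ \tfrac{1}{2}[\influence^{\targetpi}+\epsA]+[\widehat\influence-\epsA]\Big\}
\]
is in force. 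So the whole task reduces to showing $\widehat\influence\le |S|\cdot \epsD$ when $\overline{R}=\widehat R$, and $\widehat\influence\le |S|\cdot \epsA$ when $\overline{R}\ne \widehat R$; from there, routine algebra gives the second term $\tfrac12 \influence^{\targetpi}+\tfrac{2|S|-1}{2}\epsA$ exactly as in Proposition \ref{lm.bandit.bound.known.general.state}.

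\textbf{Case split.} First I would split on whether the reward was actually poisoned. If $\overline R\ne \widehat R$, Corollary \ref{lm.epsilon.hat} yields $\widehat\epsilon=\epsA$, so $\epsilon=\min\{\epsD,\widehat\epsilon\}=\epsA$ (using $\epsD\ge \epsA$). Thus the defense policy characterized by Proposition \ref{lm.bandit.defense.character} coincides exactly with the defense policy of Proposition \ref{lm.bandit.bound.known.general.state}, and the bound $\widehat\influence\le |S|\cdot \epsA$ is inherited verbatim. Then $|S|\cdot\epsA\le |S|\cdot\epsD$ handles the first branch of the max, while the second branch follows from the Theorem \ref{cor.attack_influence_a} inequality and the computation $\tfrac12 \epsA+\widehat\influence-\epsA\le |S|\epsA-\tfrac12\epsA=\tfrac{2|S|-1}{2}\epsA$.

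\textbf{Bounding $\widehat\influence$ in the unpoisoned case.} If $\overline R=\widehat R$, then $\influence^{\targetpi}=0$ and $\optpi=\targetpi$, which gives $\influence^{\defensepi}=\overline\score^{\targetpi}-\overline\score^{\defensepi}=\widehat\score^{\targetpi}-\widehat\score^{\defensepi}=\widehat\influence$. So I only need $\widehat\influence\le |S|\cdot \epsD$. I would reuse the bookkeeping from Proposition \ref{lm.bandit.bound.known.general.state}: rewrite $\widehat\influence=\sum_{s}\occstate(s)\sum_{a}\widehat R(s,a)(\targetpi(a|s)-\defensepi(a|s))$ via \eqref{eq.score_occupancy_relation} and \eqref{eq.occupancy_measures_relation_a}. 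By Proposition \ref{lm.bandit.defense.character}, $\defensepi$ is uniform on $\tightsetstate\cup\{\targetpi(s)\}$ where $\tightsetstate$ is defined with $\epsilon=\min\{\epsD,\widehat\epsilon\}\le \epsD$, and every $a\in\tightsetstate$ satisfies $\widehat R(s,a)\ge \widehat R(s,\targetpi(s))-\epsilon/\occstate(s)$. Consequently $\sum_a \widehat R(s,a)(\targetpi(a|s)-\defensepi(a|s))\le \epsilon/\occstate(s)\le \epsD/\occstate(s)$, and summing over $s$ gives $\widehat\influence\le |S|\cdot\epsD$, so the first branch of the max covers this case.

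\textbf{Main obstacle.} The only subtlety is making sure that in the unpoisoned case one uses $\epsD$ (not $\epsA$) in the $\tightsetstate$ gap inequality, because $\widehat\epsilon$ may be strictly smaller than $\epsA$ would be and Corollary \ref{lm.epsilon.hat} does not apply when $\overline R=\widehat R$; the bound $\epsilon\le \epsD$ is what carries the argument. The rest is essentially a bookkeeping adaptation of Proposition \ref{lm.bandit.bound.known.general.state}, so I expect no technical surprises beyond correctly tracking which $\epsilon$ appears where in each case.
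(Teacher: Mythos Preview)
Your proposal is correct and follows essentially the same approach as the paper: a case split on whether $\overline R = \widehat R$, with the poisoned case reducing via Corollary~\ref{lm.epsilon.hat} to the known-parameter bound of Proposition~\ref{lm.bandit.bound.known.general.state}, and the unpoisoned case handled by directly computing $\influence^{\defensepi}=\widehat\influence\le |S|\cdot\epsD$ using the explicit description of $\defensepi$ from Proposition~\ref{lm.bandit.defense.character}. The only cosmetic difference is that the paper invokes Proposition~\ref{lm.bandit.bound.known.general.state} directly in the poisoned case rather than routing through Theorems~\ref{thm.overestimate_attack_param} and~\ref{cor.attack_influence_a}; note that your appeal to Theorem~\ref{cor.attack_influence_a} is only actually needed (and only literally applies) in the poisoned case, where the two defense policies coincide, so the slight looseness in the plan's ``globally in force'' phrasing is harmless.
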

 \begin{proof}
    
    Set
    $\epsilon=\min\{\widehat{\epsilon}, \epsD\}$
    We divide the proof into two cases
    
    \textbf{Case 1:}
    Assume that
    $\overline{R}=\widehat{R}$.
    Fix state $s$.
    Due to Proposition
    \ref{lm.bandit.defense.character},
    $\defensepi$ satisfies 
    Equation \eqref{eq.lm.bandit.characterization.policy}.
    Therefore for
    all actions $a$ such that
    $\defensepi(a|s) > 0$, 
    
    \begin{align*}
        \widehat{R}(s,a)
    \ge \widehat{R}(s, \targetpi(s))
    -\frac{\epsilon}{\occstate(s)},
    \end{align*}
    which implies
    \begin{align*}
        \influence^{\defensepi}=
        \hatscore^{\targetpi}-
        \hatscore^{\defensepi}
        &=\sum_{s}\occstate(s)(\widehat{R}(s, \targetpi(s)) - \sum_{a}
        \defensepi(a|s)\widehat{R}(s,a))
        \\&\le 
        \sum_{s}\occstate(s)\cdot
        \frac{\epsilon}{\occstate(s)}
        =\sum_{s}\epsilon 
        \\&\le \sum_{s}\epsD
        =|S|\cdot \epsD.
    \end{align*}
    
    \textbf{Case 2:}
    Assume that
    $\overline{R}\ne \widehat{R}$.
    From Corollary \ref{lm.epsilon.hat}, we conclude that
    $\widehat{\epsilon}=\epsA$, and therefore,
    $\min\{\widehat{\epsilon}, \epsD\}=\epsA$. 
    Due to Proposition
    \ref{lm.bandit.defense.character}, 
    this means that the optimization problems
    \eqref{prob.defense_a} and
    \eqref{prob.defense_b} are equivalent, and hence, 
    and the claim follows from Proposition 
    \ref{lm.bandit.bound.known.general.state} 
    since $\epsA \le \epsD$.
 \end{proof}
 

\subsection{Proof of Theorem \ref{thm.bandit_impossibility}}
\label{sec.appendix.thm.bandit_impossibility}
\textbf{Statement:}
 {\em
     Let $\delta > 0$ and $\epsA > 0$. 
    There exists a problem instance with poisoned reward function $\widehat{R}$ 
    and a target policy $\targetpi$ such that for all defense policies $\pi_{\widetilde{\mathcal{D}}} \in \Pi$ and constants $C$,
    we can find $\overline{R}$ that satisfies 
    $\widehat{R} = \attack(\overline{R}, \targetpi, \epsA)$ with the following lower bound on attack influence $\influence^{\pi_{\widetilde{\mathcal{D}}}}$:
     \begin{align*}
         \influence^{\pi_{\widetilde{\mathcal{D}}}} \ge
         \frac{1}{2+\delta}\cdot \influence^{\targetpi} + C.
     \end{align*}
 }
\begin{proof}
	Take $k$ to be an arbitrary positive integer such that
	$k\ge 2 + \frac 4\delta$.
We consider a single state MDP with state set $S=\{s_1\}$ and action set $A=\{a_1,...a_{k+1}\}$. Since
there is only one state, 
we abuse our notation and denote $R(s_1, a_i)$ by $R(a_i)$.
Moreover, we denote the deterministic policy that selects action $a_i$ with probability $1$ by $\pi_i$, and in general,
we use $\pi(a_i)$
to denote $\pi(a_i|s_1)$
since there is only one state.

Set
$\targetpi(s_1)=a_{k+1}$, and 
consider the following values for the
reward function $\widehat{R}$:
\begin{gather*}
	\widehat{R}(a_i) = \begin{cases}
		\epsA \quad\text{if}\quad i = k+1\\
		0 \quad\text{o.w}
	\end{cases}
\end{gather*}
Since the poisoned reward of the target action is at least $\epsA$ higher than any other action, the reward function
$\widehat{R}$ is plausible, i.e., it satisfies the constraints of the optimization problem \eqref{prob.attack}.

Let  $\pi_{\widetilde{\mathcal{D}}}$ be an arbitrary policy as stated in the theorem. Note that $\sum_{i=1}^{k}	\pi_{\widetilde{\mathcal{D}}}(a_{ i}) \le 1$, which implies
\begin{gather*}
\sum_{i=1}^{k}	\pi_{\widetilde{\mathcal{D}}}(a_{ i})  \le 1\implies
\exists l \in \{1,...k\}:
	\pi_{\widetilde{\mathcal{D}}}(a_{l}) \le \frac{1}{k},
\end{gather*}
i.e., there exists action $a_l \ne a_{k+1}$ whose selection probability under the defense strategy $\pi_{\widetilde{\mathcal{D}}}$ is smaller than or equal to $\frac{1}{k}$.
Now, set $\eta\ge \max\{
2\epsA,
\frac{4+2\delta}{\delta}\cdot C\}$ to be an arbitrary positive number, and
consider the following reward function $\overline R$: 
\begin{align*}
	\overline{R}(a_i) = \begin{cases}
		\eta \quad\text{if}\quad i=l\\
		-\eta +\epsA \quad\text{if} \quad i =k+1\\
		0 \quad o.w.
	\end{cases},
\end{align*}
for which the optimal policy is $\optpi = a_{l}$. We first need to verify that $\overline R$ is a plausible true reward function for poisoned $\widehat R$. Recall from 
\eqref{eq.tightset.definition} that
\begin{align*}
    \tightsettarget = \left \{ (s , a) : 
    a\ne \targetpi(s_i)
    \land
    \widehat \score^{\neighbor{\targetpi}{s}{a}} - \widehat \score^{\targetpi} =  -\epsA \right \}=
    \left \{
    (s_1, a_i): 1\le i \le k
    \right \},
\end{align*}
which implies that 
$(s_1, a_l)$
is in $\tightsettarget$. Using Equation \eqref{eq.occupancy_measures_relation_a}, we also know that the state-action occupancy measure of the policy that selects action $a_i$ with probability $1$, i.e., $\pi_i$, satisfies
\begin{align*}
    \occupancy^{\pi_i}(s_1, a_j)=
    \pi_i(a_j|s_1)\occstate(s_1)=
    \ind{i=j}.
\end{align*}
Also, note that $\overline{R} = \widehat{R} + \eta(\occupancy^{\pi_{l}} - \occupancy^{\targetpi})$. Therefore, from  Lemma \ref{lm.tightset} (by setting $\alpha_{s_1, a_l} = \eta$ and $\alpha_{s_1, \tilde{a}} = 0$ for all $(s_1, \tilde a \ne a_l) \in \tightsettarget$), it follows that
\begin{align*}
	\overline{R} = \widehat{R} + \eta(\occupancy^{\pi_{l}} - \occupancy^{\targetpi}) \implies
	\attack(\overline{R}, \targetpi, \epsA) = \widehat{R}.
\end{align*}
In other words, $\overline R$ is indeed a plausible true reward function for poisoned $\widehat R$.  

To establish the bound in the statement, let us investigate the attack influences of $\targetpi$ and $\pi_{\widetilde{\mathcal{D}}}$, i.e., $\influence^{\targetpi}$ and $\influence^{\pi_{\widetilde{\mathcal{D}}}}$, and compare them. 
The attack influence of $\targetpi$ is upper bounded by
\begin{align*}
	\influence^{\targetpi} =
	\barscore^{\optpi}
	-\barscore^{\targetpi}=
	2 \cdot \eta - \epsA\le 2\eta .
\end{align*}
Furthermore
\begin{align*}
	\barscore^{\pi_{\widetilde{\mathcal{D}}}} =\sum_{1\le i \le k+1}
	\pi_{\widetilde{\mathcal{D}}}(a_i) \cdot \overline{R}(a_i) = 
	\eta \cdot \pi_{\widetilde{\mathcal{D}}}(a_l) - \eta \cdot \pi_{\widetilde{\mathcal{D}}}(a_{k+1}) + \pi_{\widetilde{\mathcal{D}}}(a_{k+1}) \cdot \epsA \le \frac{1}{k} \cdot \eta,
\end{align*}
where the last inequality
follows from the fact that
$\eta \ge \epsA$
and
$\pi_{\widetilde{\mathcal{D}}}(a_l)\le \frac 1k$.
Therefore, the attack influence of $\pi_{\widetilde{\mathcal{D}}}$ is lower bounded by
\begin{align*}
	\influence^{\pi_{\widetilde{\mathcal{D}}}}
	=\barscore^{\optpi}
	-\barscore^{\pi_{\widetilde{\mathcal{D}}}}
	\ge
	(1-\frac 1k)\cdot \eta
	.
\end{align*}
We need to show that 
\begin{align*}
\influence^{\pi_{\widetilde{\mathcal{D}}}}
\ge \frac 1{2+\delta} \cdot 
\influence^{\targetpi} + C
.
\end{align*}
Given the bounds on the attack influences of $\targetpi$ and $\pi_{\widetilde{\mathcal{D}}}$, it actually suffices to show that 
\begin{align*}
    (1-\frac 1k)\cdot \eta\ge \frac{2\eta}{2+\delta} + C,
\end{align*}
since this would imply
\begin{align*}
\influence^{\pi_{\widetilde{\mathcal{D}}}}
	\ge
	(1-\frac 1k)\cdot \eta
	\ge
	\frac{2\eta}{2+\delta} + C\ge
	\frac 1{2+\delta} \cdot 
\influence^{\targetpi} + C
.
\end{align*}
We have that
\begin{align*}
	(1-\frac 1k)\cdot \eta\ge \frac{2\eta}{2+\delta} + C &\iff
	(2+\delta)\cdot \eta - \frac{2+\delta}{k}\cdot \eta 
	\ge 2\cdot \eta + C\cdot (2+\delta)
	\\&\iff 
	\eta\cdot (\delta - \frac{2+\delta}{k})\ge C\cdot (2+\delta)
	.
\end{align*}
Now, notice that we chose $k$ to be sufficiently large, i.e., $k\ge 2 + \frac 4\delta$. This implies
\begin{align*}
	k\ge 2 + \frac 4\delta=\frac{\delta + 2}{\frac{\delta}{2}} \implies 
	\frac \delta 2 \ge \frac{2+\delta}{k} \implies
	\delta - \frac{\delta  + 2}{k}\ge \frac \delta 2\implies
	\eta\cdot (\delta - \frac{2+\delta}{k})\ge
\eta \cdot\frac{\delta}{2}.
\end{align*}
Since we chose $\eta$ that satisfies $\eta\ge \frac{4+2\delta}{\delta}\cdot C$, we further obtain 
\begin{align*}
    \eta\cdot (\delta - \frac{2+\delta}{k})\ge
\eta \cdot\frac{\delta}{2} \ge \frac{4+2\delta}{\delta}\cdot C \cdot\frac{\delta}{2} = C\cdot (2+\delta). 
\end{align*}
Therefore
\begin{align*}
    (1-\frac 1k)\cdot \eta\ge \frac{2\eta}{2+\delta} + C
\end{align*}
holds, which proves the claim, i.e.:
\begin{align*}
\influence^{\pi_{\widetilde{\mathcal{D}}}}
\ge \frac 1{2+\delta} \cdot 
\influence^{\targetpi} + C
.
\end{align*}
\end{proof}

		}
	}
	{}
\end{document}
